\pdfoutput=1
\documentclass[11pt,twoside]{article} 
\usepackage{geometry}
\geometry{a4paper, margin=1in}

\usepackage{comment}
\usepackage{algorithm}
\usepackage{algpseudocode}
\usepackage{amsthm}
\newtheorem{theorem}{Theorem}
\newtheorem{corollary}{Corollary}[theorem]
\newtheorem{remark}{Remark}
\usepackage{multicol}
\usepackage{xcolor}         
\usepackage{tikz}
\usepackage[T1]{fontenc}
\usepackage{lmodern}
\usepackage{tcolorbox,wrapfig}
\usepackage[labelfont={bf},font=small]{caption}
\usepackage{subcaption}
\usepackage{rotating}
\tcbuselibrary{skins}
\usepackage[utf8]{inputenc} 
\usepackage[T1]{fontenc}    
\usepackage{hyperref}       
\usepackage{url}            
\usepackage{booktabs}       
\usepackage{amsfonts}       
\usepackage{nicefrac}       
\usepackage{microtype}      

\usepackage{graphicx}
\usepackage{lipsum}
\usepackage{makecell}
\usepackage{tabu}
\usepackage{wrapfig}
\usepackage{amsmath}
\usepackage{longtable}



\usepackage[style=alphabetic,natbib=true,backend=bibtex]{biblatex}
\addbibresource{refs.bib}





\usepackage{hyperref,graphicx,amsmath,amsfonts,amssymb,bm,url,breakurl,epsf,color,MnSymbol,mathbbol,fmtcount,semtrans,caption,multirow,comment, boldline}
\usepackage{wrapfig}
\usepackage{enumitem}
\usepackage{amssymb}

\usepackage[utf8]{inputenc} 
\usepackage[T1]{fontenc}    
\usepackage{url}            
\usepackage{booktabs}       
\usepackage{amsfonts}       
\usepackage{nicefrac}       
\usepackage{microtype}      
\usepackage{dsfont}

\usepackage{xcolor}

\newlist{myitemize}{enumerate}{1}
\setlist[myitemize]{font=\color{darkred}\bfseries\itshape}

  \usepackage{mathtools}

\usepackage{titlesec}

\usepackage{tikz}
\usepackage{pgfplots}
\usetikzlibrary{pgfplots.groupplots}

\usepackage{movie15}

\usepackage{caption}
\usepackage[bottom,hang,flushmargin]{footmisc} 

\setlength{\captionmargin}{30pt}

\newcommand{\tsn}[1]{{\left\vert\kern-0.25ex\left\vert\kern-0.25ex\left\vert #1 
    \right\vert\kern-0.25ex\right\vert\kern-0.25ex\right\vert}}

\definecolor{darkred}{RGB}{150,0,0}
\definecolor{darkgreen}{RGB}{0,150,0}
\definecolor{darkblue}{RGB}{0,0,200}
\hypersetup{colorlinks=true, linkcolor=darkred, citecolor=darkgreen, urlcolor=darkblue}

\newenvironment{fminipage}%
  {\begin{Sbox}\begin{minipage}}%
  {\end{minipage}\end{Sbox}\fbox{\TheSbox}}

\newcommand{\delmaj}{\delta_{\text{maj}}}
\newcommand{\delmin}{\delta_{\text{minor}}}

\newcommand{\LDT}{\text{LDT}}
\newcommand{\CDT}{\text{CDT}}
\newcommand{\CE}{\text{CE,binary}}

\newcommand{\Xib}{\mathbf{\Xi}}

\newcommand{\Mbhat}{\hat{\M}}



\newtheorem{lemma}{Lemma}[section]
\newtheorem{propo}{Proposition}
\newtheorem{definition}{Definition}



\newcommand{\Cos}[2]{\operatorname{cos}(#1,#2)}

\newcommand{\wmaj}{\w_{\rm{maj}}}
\newcommand{\wmin}{\w_{\rm{minor}}}
\newcommand{\hmaj}{\h_{\rm{maj}}}
\newcommand{\hmin}{\h_{\rm{minor}}}

\newcommand{\nmin}{n_{\text{min}}}

\newcommand{\rhobar}{\overline{\rho}}

\newcommand{\SEL}{{SEL}}

\newcommand{\SELI}{{SELI}}

\newcommand{\dmin}{\delta_{\operatorname{minor}}}
\newcommand{\dmaj}{\delta_{\operatorname{maj}}}

\newcommand{\Sec}{Sec.}

\DeclareMathOperator{\tr}{tr}

\DeclareMathOperator{\diag}{diag}

\newcommand{\cut}[1]{\textcolor{red}{}}
\newcommand{\W}{\mathbf{W}}

\newcommand{\M}{\mathbf{M}}
\newcommand{\Z}{\mathbf{Z}}
\newcommand{\Ub}{\mathbf{U}}

\newcommand{\Rb}{\mathbf{R}}

\newcommand{\Wopt}{\W^*}
\newcommand{\Hopt}{\Hb^*}
\newcommand{\Mopt}{\Mb^*}
\newcommand{\wopt}{\w^*}
\newcommand{\hopt}{\h^*}

\newcommand{\Hb}{{\mathbf{H}}}
\newcommand{\Mb}{{\mathbf{M}}}
\newcommand{\Hhat}{{\hat{\Hb}}}
\newcommand{\What}{{\hat{\W}}}

\newcommand{\Zhat}{{\hat{\Z}}}

\newcommand{\Sb}{\mathbf{S}}
\newcommand{\X}{\mathbf{X}}

\newcommand{\Vb}{\mathbf{V}}

\newcommand{\Db}{\mathbf{D}}
\newcommand{\Pb}{\mathds{P}}


\newcommand{\mub}{\boldsymbol{\mu}}

\newcommand{\thetab}{{\boldsymbol{\theta}}}

\newcommand{\x}{\mathbf{x}}
\newcommand{\ub}{\mathbf{u}}
\newcommand{\w}{\mathbf{w}}

\newcommand{\vb}{\mathbf{v}}

\newcommand{\Bb}{\mathbf{B}}

\newcommand{\eb}{\mathbf{e}}

\newcommand{\h}{\mathbf{h}}




\newcommand{\Lc}{\mathcal{L}}

\newcommand{\Qc}{\mathbb{Q}}




\newcommand{\beq}{\begin{equation}}
\newcommand{\eeq}{\end{equation}}
\newcommand{\bea}{\begin{align}}
\newcommand{\eea}{\end{align}}

\newcommand{\R}{\mathbb{R}}
\newcommand{\N}{\mathbb{N}}

\newcommand{\nn}{\notag}


    

  
    \newcommand{\Lambdab}{\boldsymbol{\Lambda}}                     

 \newcommand{\deltab}{\boldsymbol\delta}



\DeclarePairedDelimiterX{\inp}[2]{\langle}{\rangle}{#1, #2}

\newcommand{\wt}{\widetilde}

\newcommand{\Id}{\mathds{I}}
\newcommand{\ones}{\mathds{1}}

\providecommand{\norm}[1]{\lVert#1\rVert}




\author{Tina Behnia$^{\dagger\mathsection}$, Ganesh Ramachandra Kini$^{\ddag\mathsection}$, Vala Vakilian$^{\dagger\mathsection}$, Christos Thrampoulidis$^\dagger$
 \vspace{8pt}
 \\
$^\dagger$University of British Columbia, Canada 
\vspace{3pt}
\\
$^\ddag$University of California, Santa Barbara, USA 
\thanks{This work is supported by an NSERC Discovery Grant, NSF Grant CCF-2009030, and by a CRG8-KAUST award. The authors also acknowledge use of the Sockeye cluster by UBC Advanced Research Computing.\newline 
$\mathsection$: equal contribution \newline
Code available at: \url{https://github.com/valavakilian/Implicit_geometry} \newline
}
}

\title{On the Implicit Geometry of Cross-Entropy Parameterizations \\ for Label-Imbalanced Data}

\begin{document}
\maketitle


%
%
%
%

\begin{abstract}
	
Various logit-adjusted parameterizations of the cross-entropy (CE) loss have been proposed as alternatives to weighted CE for training large models on label-imbalanced data far beyond the zero train error regime. The driving force behind those designs has been the theory of \emph{implicit bias}, which for linear(ized) models, explains why they successfully induce bias on the optimization path towards solutions that favor minorities. Aiming to extend this theory to non-linear models, we investigate the \emph{implicit geometry} of classifiers and embeddings that are learned by different CE parameterizations. Our main result characterizes the global minimizers of a non-convex cost-sensitive SVM classifier for the unconstrained features model, which serves as an abstraction of deep nets. We derive closed-form formulas for the angles and norms of classifiers and embeddings as a function of the number of classes, the imbalance and the minority ratios, and the loss hyperparameters. Using these, we show that  logit-adjusted parameterizations can be appropriately tuned to learn symmetric geometries irrespective of the imbalance ratio. We complement our analysis with experiments and an empirical study of convergence accuracy in deep-nets.
\end{abstract}

\addtocontents{toc}{\protect\setcounter{tocdepth}{0}}
\vspace{-0.4cm}
\section{Introduction}\label{sec:intro}
In the modern overparameterized regime, when training continues beyond zero-training error, traditional techniques, such as oversampling minorities or minimizing a weighted cross-entropy (CE) loss can be ineffective in mitigating label-imbalances  \citep{byrd2019effect,sagawa2020investigation}. In a growing literature, several alternatives have been proposed to guarantee equitable performance across majorities and minorities \citep[e.g.,][]{Menon,CDT,VS,TengyuMa,khan2017cost,lin2018focal,KimKim,kang2020decoupling}. Among these, the vector-scaling (VS) loss \citep{VS,CDT} introduces multiplicative hyperparameters on the logits of the CE loss. 



\begin{figure*}[ht]
\centering
\begin{subfigure}{.32\textwidth}
  \centering
  \begin{tikzpicture}
	\node at (0,0.0)
    {\includegraphics[width=\linewidth]{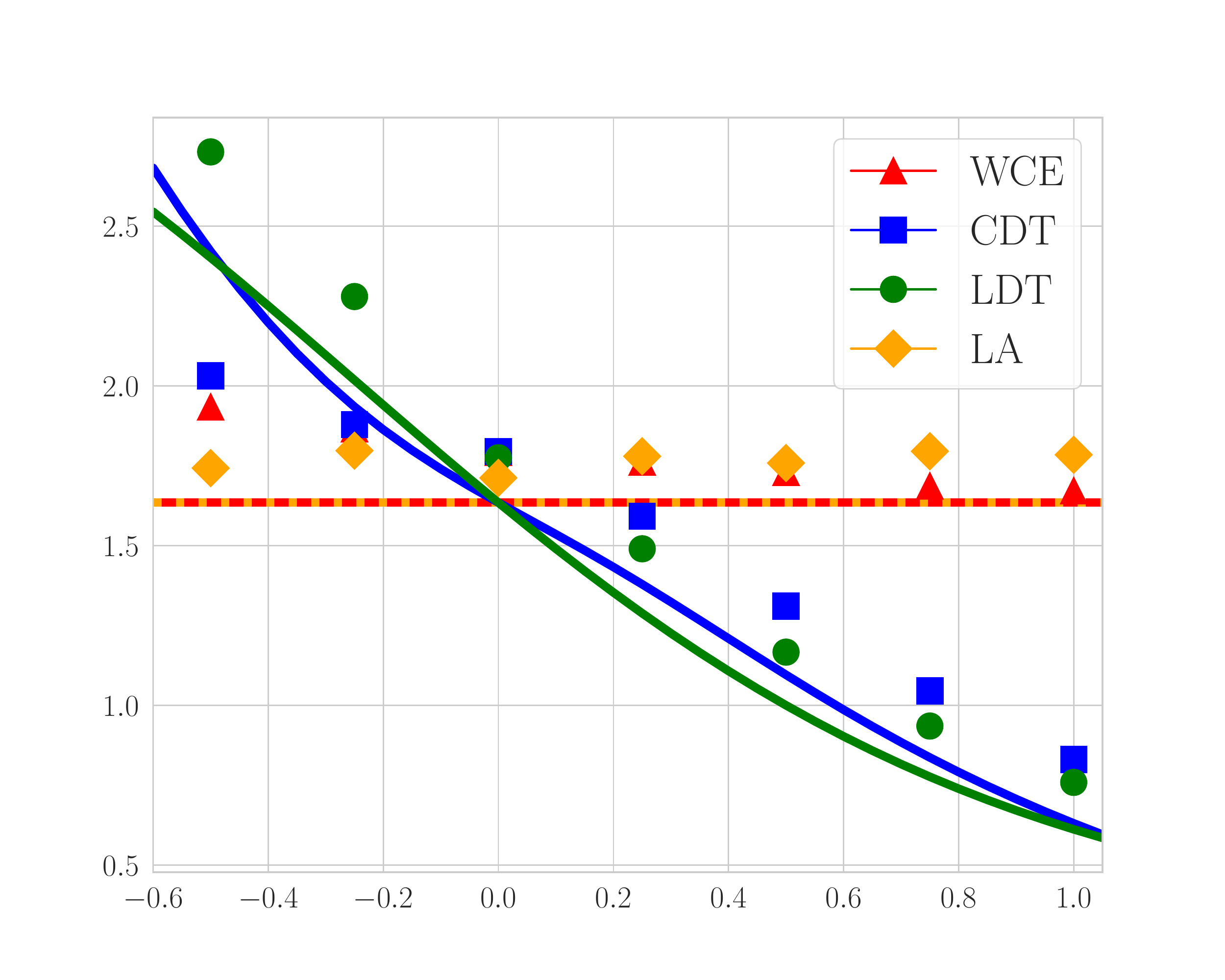}};
    \node at (-0.0,2) [scale=1.0]{\textbf{UFM}};
	\node at (-0.0,-2) [scale=0.75]{$\gamma$};
	\node at (-2.8,-0.0)  [scale=1.0, rotate=90]{$ \norm{\wmaj}^2 / \norm{\wmin}^2$};
  \end{tikzpicture}
\end{subfigure}%
\begin{subfigure}{.32\textwidth}
  \centering
  \begin{tikzpicture}
	\node at (0,0.0)
    {\includegraphics[width=\linewidth]{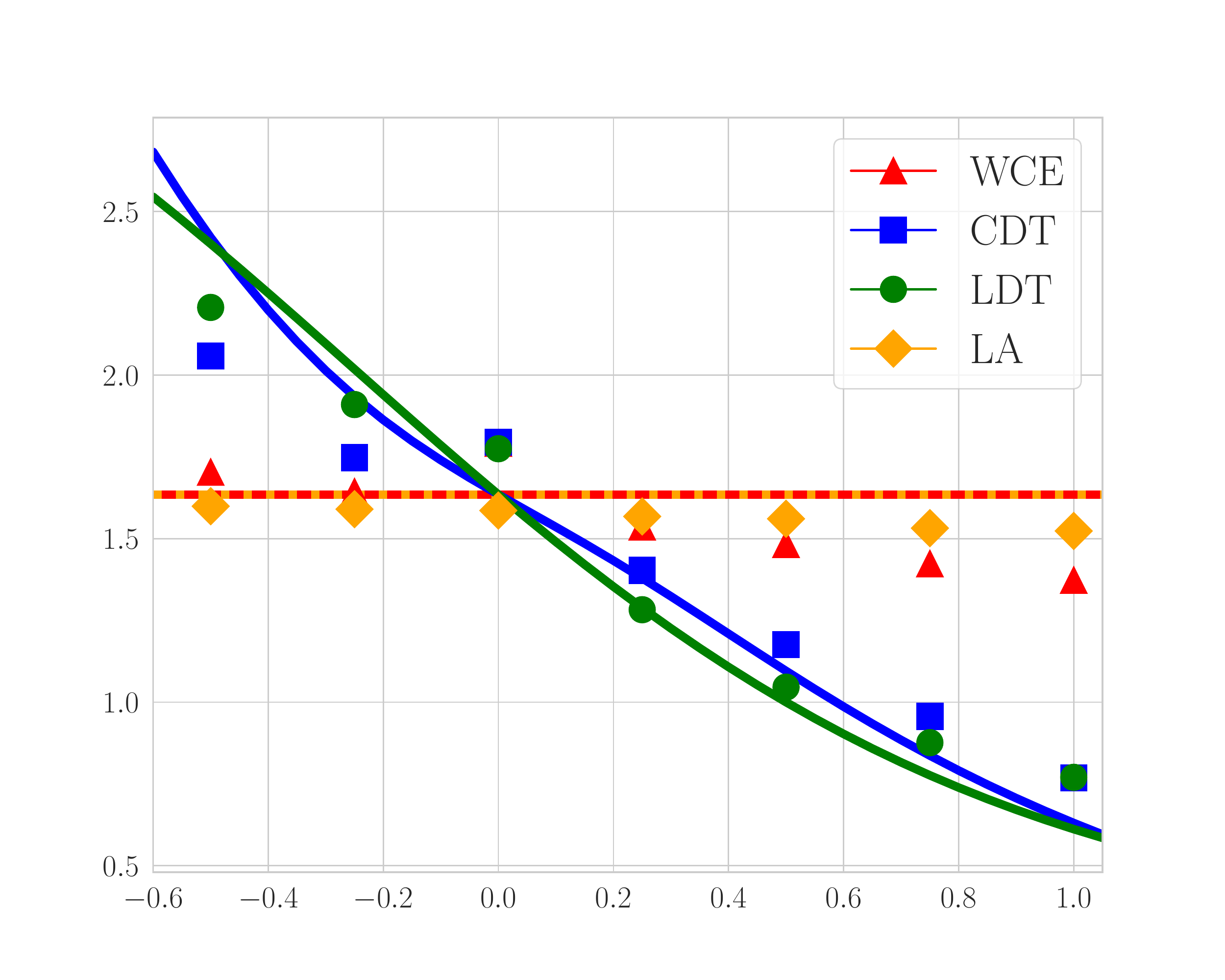}};
    \node at (-0.0,2) [scale=1.0]{\textbf{6-Layer MLP + MNIST}};
	\node at (-0.0,-2) [scale=0.75]{$\gamma$};
  \end{tikzpicture}
\end{subfigure}%
\begin{subfigure}{.32\textwidth}
  \centering
  \begin{tikzpicture}
	\node at (0,0.0)
    {\includegraphics[width=\linewidth]{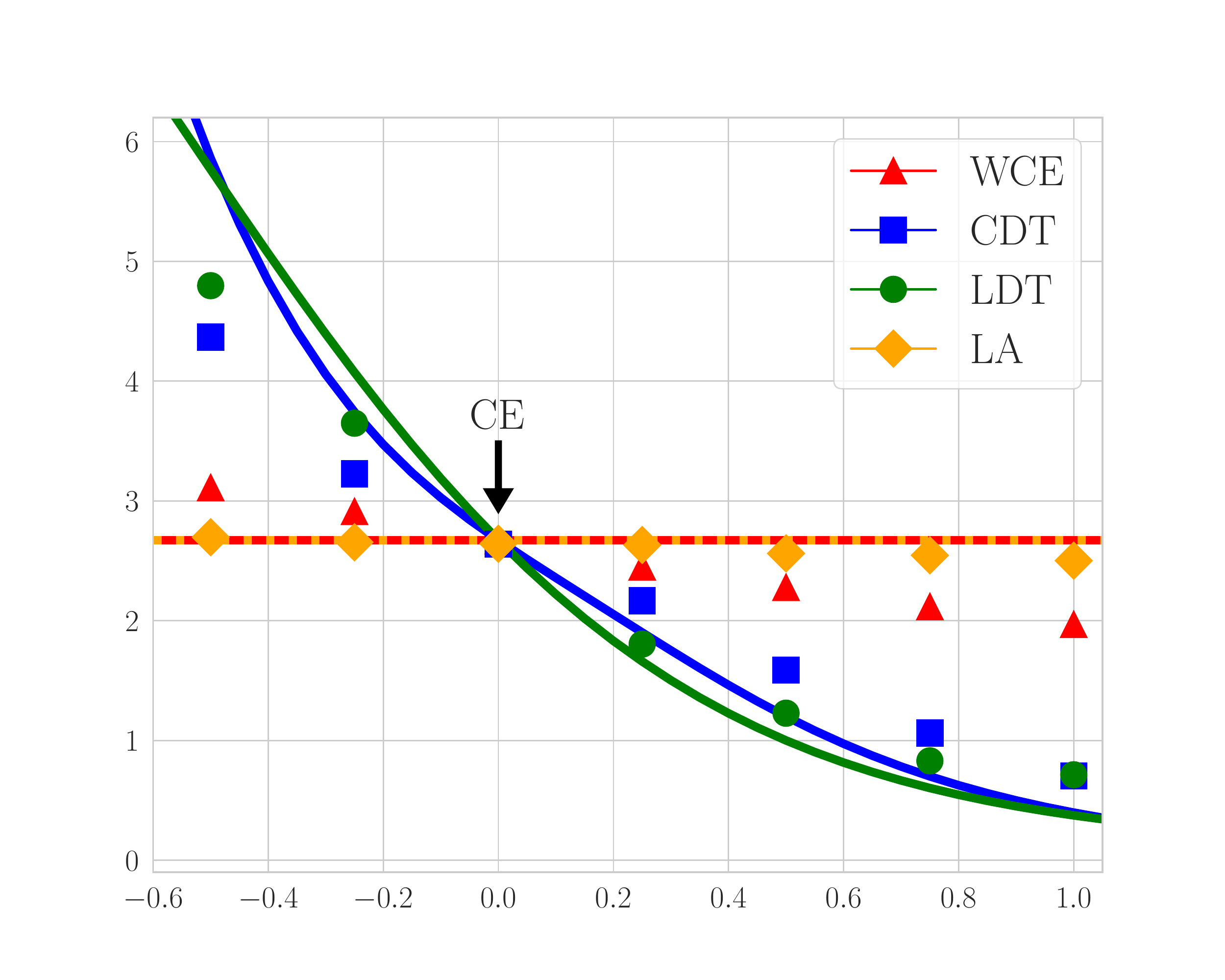}};
    \node at (-0.0,2) [scale=1.0]{\textbf{ResNet18 + CIFAR10}};
    \node at (-0.0,-2) [scale=0.75]{$\gamma$};
  \end{tikzpicture}
\end{subfigure}
\captionsetup{width=0.95\linewidth}
\vspace{-0.3cm}
\caption{Ratio of classifier norms between majorities and minorities for ($R=10$, $\rho$=1/2)-STEP imbalanced data {(see Defn.~\ref{def:step})} on (Left) UFM, (Middle) 6-layer MLP and MNIST, (Right) ResNet18 and CIFAR10. We train four different CE parameterizations with varying hyperparameter $\gamma\in[-0.5,1]$: (1) weighted CE  with  ${{\omega_\text{minor}}}/{\omega_\text{maj}} := R^\gamma$; (2,3) CDT, LDT losses with $\Delta := {\delta_{\text{maj}}}/{\delta_{\text{minor}}} = R^\gamma$; (4) LA loss with $\iota_\text{maj}-\iota_\text{minor} = \gamma \log{R}$. $\gamma=0$ corresponds to CE loss. Markers denote empirically measured quantities. Solid lines follow theoretical formulas (Eqn.~\eqref{eq:eq_intro}). See text for details. }
\label{fig:UFM_CDT_LDT_WCE_Comparison}
\vspace{-8pt}
\end{figure*}

The idea behind this parameterization is rooted in the theory of \emph{implicit bias}, which seeks characterizing the bias introduced by gradient-based algorithms during training \citep{soudry2018implicit,ji2018risk,lyu2019gradient}. Specifically for binary linear models, \cite{VS} uncovers a favorable bias of the VS loss towards classifiers with larger margin for the minority.  However, this leaves open the question how the VS loss changes the learned model in non-linear settings where embeddings and classifiers are jointly learned. Unfortunately, implicit bias characterizations for non-linear models are more obscure compared to the linear case \citep{lyu2019gradient,ji2020directional}. In particular, it is unclear how to gain concrete insights from them on the way the learned models affect minorities.

This paper investigates the \emph{implicit geometry} of classifiers and embeddings learned by CE parameterizations when trained on imbalanced data. The notion of implicit geometry,\footnote{{
Initially, \cite{NC} referred to their discovery as ``Neural Collapse'' (NC). Later, to differentiate between the geometries learned by CE for balanced and imbalanced data, \cite{seli} proposed the terms ETF and SELI geometries for the former and latter, respectively. We show here that different CE parameterizations result in yet different geometries, 
prompting us to adopt the more general term ``implicit geometry''.
}} pioneered by \cite{NC} and further investigated by many others \citep[e.g.,][]{fang2021exploring,galanti2021role,graf2021dissecting,han2021neural,hui2022limitations,ULPM,lu2020neural,mixon2020neural,tirer2022extended,xie2022neural,zhu2021geometric,zhou2022optimization,seli}, is intimately related to that of implicit bias. On the one hand, it is more restrictive as it focuses only on the classifiers and on the embeddings, rather than the weights of the entire model. Also, it is insensitive to the specific architecture or dataset. On the other hand, it offers a more explicit characterization that describes the involved geometry of the weights and promises to be ``cross-situationally invariant'' across architectures and datasets \citep{NC}.

\vspace{3pt}
\noindent\textbf{Contributions.}~We study two parameterizations of the CE loss: (i) the class-dependent temperature (CDT) loss \citep{CDT}, which is a special case of the VS loss \citep{VS}, and (ii) the label-dependent temperature (LDT) loss, which we introduce here as an alternative to the CDT loss. For both losses, we study the implicit geometry of learned features and classifiers when trained on label-imbalanced data without explicit regularization beyond zero training error. To do this, we rely on the unconstrained features model (UFM) \citep{mixon2020neural,fang2021exploring}, which serves as a proxy for large overparameterized models and has been used recently to study the implicit geometry of the CE loss (see \emph{Related work}).
Relying on the implicit bias results, we relax the question of implicit geometry of the solutions found by {stochastic gradient descent} (SGD), to a question about the geometry of the global minimizers of a non-convex \emph{Cost-Sensitive Support-Vector Machines} (CS-SVM) problem, which takes different forms for the CDT and LDT losses. {Our main result characterizes the global minimizers of the CDT and LDT CS-SVM problems in terms of a new geometry, which we call the \emph{$(\deltab,R)$-geometry} and is parameterized by a vector $\deltab$ of hyperparameters and the data imbalance ratio $R$. The new geometry has the following favorable properties: (i) It includes the previously discovered ETF \citep{NC} and SELI \citep{seli} geometries as special cases. Also, it captures both CDT and LDT. (ii) It admits an explicit characterization that involves closed-form formulas of the norms and angles in terms of the number of classes, the minority ratio, the imbalance ratio, and the vector of hyperparameters. (iii) It reveals appropriate tuning recipes for the hyperparameters to learn symmetric geometries with respect to minorities and majorities irrespective of the imbalance ratio. (iv) It shows that LDT and CDT can both mitigate minority collapse, i.e., the collapse of minority classifiers in the large imbalance-ratio limit.} Beyond these, we also show numerically that SGD training on the UFM converges to the uncovered geometries. However, we observe that convergence slows down for increasing imbalance ratios and increasing values of the hyperparameters. This observation motivates further theoretical and algorithmic investigations towards faster training with CE parameterizations. As  evidence of the utility of our geometry characterizations for the UFM, we present results on  deep-learning architectures and complex imbalanced datasets. {Additionally, we show preliminary findings regarding how the implicit geometry of different CE parametrizations might impact their generalization. Finally, building on the implicit geometry of the LDT loss, we propose a simple post-hoc rebalancing strategy that improves test performance over vanilla LDT training.}

\vspace{3pt}
\noindent\textbf{Example.}~ Fig.~\ref{fig:UFM_CDT_LDT_WCE_Comparison} provides a graphical illustration of the impact of different CE parameterizations on the implicit geometry. Here, we focus on classifiers and specifically their norms. 

In Fig.~\ref{fig:UFM_CDT_LDT_WCE_Comparison}(Right), we train a ResNet18 on a (10,1/2)-STEP imbalanced CIFAR10 dataset (see Defn.~\ref{def:step}). For the training we use four different parameterizations of the CE loss, namely the weighted CE (wCE), CDT \citep{CDT} (Eqn.~\eqref{eq:CDT_loss}), LDT (Eqn. \eqref{eq:LDT_loss}) and LA \citep{TengyuMa,Menon} losses. Each of these, comes with a set of corresponding hyperparameters, which we control by varying a single parameter $\gamma\in\R$ in the interval $[-0.5,1]$. For $\gamma=0$, all the losses reduce to standard CE loss. For each loss and for each value of $\gamma$, {we compute the ratio of the classifier norms for each pair of majority-minority classes, and the markers report the average of these ratios.}
First, observe for $\gamma=0$ (CE) that $\|\wmaj\|_2\approx2.8\|\wmin\|_2$. This is different from the case of balanced classes where ETF geometry suggests $\|\wmaj\|_2\approx \|\wmin\|_2$ \citep{NC}. The fact that, under class imbalances, CE loss learns classifiers with larger norm for majorities compared to minorities has been empirically observed in the imbalanced deep-learning literature \citep{KimKim,kang2020decoupling,Menon} and various heuristic methods have been proposed to mitigate this effect towards favoring minorities. One of these, the LA loss \citep{Menon} is seen here to have minimal effect on changing the classifiers' imbalance ratio. The wCE loss has similar behavior as the ratio reduces only marginally with increasing $\gamma$. On the other hand,  both CDT and LDT offer flexibility in tuning the ratio over a wide range by varying $\gamma$: as $\gamma$ increases the norm of minorities increases relative to the majorities. Interestingly, for appropriate $\gamma$ values the ratio can be made $1$ (as in the balanced case).

Fig.~\ref{fig:UFM_CDT_LDT_WCE_Comparison}(Middle) repeats the above experiment on a 6-layer MLP with imbalance MNIST data. The behavior is analogous: For CE the ratio is $\approx2.8$, while appropriately tuning LDT and CDT losses can tweak the classifiers' geometry and change the norm ratio. 

Finally, Fig.~\ref{fig:UFM_CDT_LDT_WCE_Comparison}(Left) repeats the experiment on the synthetic unconstrained features model (UFM) (see Sec.~\ref{sec:setup}). 
Observe that the behavior is remarkably reflective of the trends seen previously on ResNet/MLP architectures and CIFAR10/MNIST data. Compared to the latter, the UFM is amenable to mathematical analysis. Specialized to classifiers' norms, our analysis yields the following explicit formulas for the CDT/LDT solutions of the UFM for  hyperparameter $\Delta:=R^\gamma$:
\begin{align}\nn
    \text{CDT:} \ &\frac{\|\wmaj\|_2^2}{\|\wmin\|_2^2} = \frac{\frac{\sqrt{R}}{\Delta}(k-2){\big({1+\Delta^{2}}\big)^{3/2}}+2\Delta^2\sqrt{R+1}}{(k-2)\big({1+\Delta^{2}}\big)^{3/2}+2\sqrt{R+1}},\\
    \text{LDT:} \  &\frac{\|\wmaj\|_2^2}{\|\wmin\|_2^2} = \frac{(k-2)\sqrt{R}+{\sqrt{(R+\Delta^2)/2}}}{(k-2)\Delta+{\sqrt{(R+\Delta^2)/2}}}.\label{eq:eq_intro}
\end{align}
The solid blue (CDT) and green (LDT) curves graph those formulas for $k=10$ classes and imbalance ratio $R=10.$ Note that the very same formulas capture the empirical trend for UFM Fig.~\ref{fig:UFM_CDT_LDT_WCE_Comparison}(Left) and also for MLP and ResNet in Fig.~\ref{fig:UFM_CDT_LDT_WCE_Comparison}(Middle,Right). For LDT simply setting $\Delta=\sqrt{R}$ ($\gamma=1/2$) makes the norms of majorities and minorities equal. We will prove that the same choice in fact also guarantees maximal angle separation and alignment between classifiers and embeddings. On the other hand, for CDT, the value of $\Delta$ (eqv. $\gamma$) making {$\norm{\wmaj}_2 \approx \norm{\wmin}_2$} depends on $k$ and $R$ in general.


\vspace{3pt}
{\noindent\textbf{Related works.}}~In their inspiring work, \citet{NC} discover that the geometry of classifiers and embeddings that are learned by overparameterized models trained with CE far beyond zero-training error can be characterized in terms of a few simple properties. (i) \emph{Neural Collapse (NC)}: the embeddings collapse to their class means. (ii) \emph{Simplex Equiangular Tight-Frame (ETF) geometry}: the classifiers align with the embeddings of the corresponding class, they all have the same norm, and, they are maximally separated from each other. Notably, this characterization is shown to be cross-situationally invariant across different architectures and datasets. Important follow-up works \citep{mixon2020neural,fang2021exploring,graf2021dissecting} introduce the Unconstrained Features Model (UFM), as a proxy model to complex deep-nets, and uses it \citep{zhu2021geometric,zhou2022optimization,ULPM,seli,zhou2022all} to give (partial) theoretical justification of the discovery made by \cite{NC}. Extensions of the geometry characterization to mean-square loss and of the UFM to mean-square loss are also studied in \cite{mixon2020neural,zhou2022optimization,tirer2022extended}. A line of work also investigates potential connections to generalization \citep{hui2022limitations,han2021neural} and transfer-learning \citep{galanti2021role, galanti2022improved}. For example, \cite{MaDoWeNeed} and \cite{galanti2022improved} investigate the role of the NC property on model adaptation to downstream fine-grained generalization tasks. However, such connections of NC and the implicit geometry to generalization are generally not yet well understood or formalized.
All the aforementioned works on geometry characterization in presence of NC assume that data are balanced. On the other hand, when data are imbalanced, \cite{fang2021exploring} shows a \emph{minority collapse} phenomenon, i.e., the minority classifiers collapse to each other as the imbalance ratio $R$ grows to infinity. The complete geometry of both classifiers and embeddings at finite imbalance ratios was only very recently characterized in \cite{seli} under the name: \emph{Simplex-Encoded Label Interpolation (SELI)} geometry. The SELI geometry is parameterized by the imbalance ratio $R$, and it includes the ETF geometry as a special case. It also recovers  the minority collapse when evaluating angles asymptotically in $R$. Extending this literature, we formulate a new and more general geometry (which includes SELI and ETF as special cases) and show that it describes the learned embeddings and classifiers of two CE parameterizations, the CDT and the LDT losses. Closely related are also the works \cite{xie2022neural,yang2022we} which design loss functions for class-imbalanced learning in an attempt to enforce a geometry alike the ETF geometry for balanced data. However, they do not characterize the joint geometry of classifiers and embeddings as we do here. Besides, the loss functions that they consider are different in nature from the CDT and LDT losses. The latter originate from \cite{TengyuMa,Menon,CDT,VS}, which propose various logit-adjustments to the CE loss with the goal of mitigating label imbalances. Specifically, the CDT loss is proposed in \cite{CDT} and is a special case of the VS loss in \cite{VS}. Here, we also introduce a new loss, the LDT loss, and show that it forms a canonical extension of the binary VS loss of \cite{VS}. Unlike those prior works limiting their analytical studies to binary and linear models, our implicit geometry  approach allows further investigating multiclass and feature-learning regimes.
{The impact of the VS loss on the implicit geometry is also examined in independent research by \citet{lu2022importance}. However, their findings are restricted to an infinite imbalance ratio and a particular parameterization, unlike our results which hold for all finite values of the imbalance ratio and loss hyperparameters. Additionally, our proof techniques differ from theirs.}
%

\vspace{3pt}
\noindent\textbf{Notation.}~
For matrix $\Vb\in\R^{m\times n}$, $\Vb[i,j]$ denotes its $(i,j)$-th entry, $\vb_j$ denotes the $j$-th {column}, $\Vb^T$ its transpose.
$\Vb_{j:k}\in\R^{m\times (k-j+1)}$ chooses columns $j,j+1,\ldots,k$ of $\Vb$, and $\Vb^T_{j:k}\in\R^{n\times (k-j+1)}$ does so on $\Vb^T$.
We denote $\|\Vb\|_F$, and $\tr(\Vb)$ the Frobenius norm and trace of $\Vb$. We use $\Vb \propto \mathbf{X}$ whenever the two matrices are equal up to a scalar constant. {For a vector $\mathbf{v}\in\R^k$, $\text{diag}(\mathbf{v})\in\R^{k\times k}$ is the diagonal matrix with $\mathbf{v}$ on its diagonal.} $\otimes$ denotes Kronecker products.
We use $\ones_m$ to denote an  $m$-dimensional vector of all ones and $\Id_m$ for the $m$-dimensional identity matrix. For vectors/matrices with all zero entries, we simply write $0$, as dimensions are easily understood from context. {Finally, we denote the set of positive rational numbers by $\mathbb{Q}_+$.}


\vspace{-0.3cm}
\section{Background}\label{sec:background}
The Vector-Scaling (VS) loss is the following parameterization of the CE loss \citep{VS}: 
\begin{align}\label{eq:vs_multiclass}
    \Lc_\text{VS}(\W , \thetab)=:\sum_{i\in[n]}\log(1+\sum_{c\neq y_i}e^{-(\delta_{y_i}\w_{y_i}-\delta_c\w_c)^T\h_\thetab(\x_i)+\iota_{y_i}-\iota_c}).
\end{align}
Here $\x_i,i\in[n]$ are $n$ examples, $\h_\thetab(.)$ is the feature map parameterized by trainable parameters $\thetab$ (e.g. weights of hidden layers of a neural network), $y_i\in[k], i\in[n]$ are  labels, and $\w_c, c\in [k]$ are  classifier vectors (e.g. head of the network) in a $k$-class classification setting. The parameters $\delta_c$, and $\iota_c, c\in[k]$ are multiplicative and additive hyperparameters, respectively. Setting $\delta_c=1, \iota_c=0$,  recovers the CE loss. {Setting $\delta_c=1$ and only varying $\iota_c$ gives the LA loss \citep{Menon}, while setting $\iota_c=0$ and only varying $\delta_c$ gives the CDT loss \citep{CDT}.}

\vspace{3pt}
\noindent\textbf{Prior art: Binary linear classification.}~In a binary setting with fixed feature map (non-trainable $\thetab$) \cite{VS} studies the implicit bias of  binary VS loss.
\vspace{-5pt}
\begin{propo}[\cite{VS}]\label{propo:vs_binary}
 Consider a fixed feature map $\h_\thetab$, {binary labels  $\upsilon_i\in\{\pm1\}$}, $\h_i:=\h_\thetab(x_i) $ for $ i\in[n]$ {and hyperparameters $\delta_{\pm1}$}. Then GD with sufficiently small learning rate on the binary VS loss
$\Lc_\text{\emph{VS,binary}}(\w) := \sum\nolimits_{i\in[n]}\log(1+e^{-\delta_{\upsilon_i}\upsilon_i\w^T\h_i+\iota_{\upsilon_i}})$
converges (asymptotically in the number of training steps) in direction to the Cost-Sensitive SVM (CS-SVM) classifier: 
 \begin{align*}
 \arg\min_\w \norm{\w}_2~~\text{\emph{subj. to}}~\upsilon_i\delta_{\upsilon_i}\w^T\h_i\geq 1, i\in[n].
 \end{align*}
%
\end{propo}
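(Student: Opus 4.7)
The plan is to reduce the claim to the classical implicit bias theorem for gradient descent on the logistic loss (e.g.\ Soudry et al.\ 2018, Ji--Telgarsky), via a change of variables that absorbs both the multiplicative and additive VS hyperparameters. First, I would define the rescaled features $\tilde{\h}_i := \delta_{\upsilon_i}\h_i$ (assuming, as implicit in the VS parameterization, that $\delta_{\pm 1}>0$ so that signs are preserved). Substituting into the exponent of the $i$-th summand gives $-\delta_{\upsilon_i}\upsilon_i\w^T\h_i+\iota_{\upsilon_i} = -\upsilon_i \w^T\tilde{\h}_i + \iota_{\upsilon_i}$, so that, with $c_{\upsilon_i} := e^{\iota_{\upsilon_i}}$, the loss rewrites as
\begin{equation*}
\Lc_\text{VS,binary}(\w) \;=\; \sum_{i\in[n]}\log\!\big(1+c_{\upsilon_i}\,e^{-\upsilon_i\w^T\tilde{\h}_i}\big).
\end{equation*}
This is a weighted exponential-tailed loss on the binary dataset $\{(\tilde{\h}_i,\upsilon_i)\}_{i\in[n]}$.

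Next, I would invoke the classical implicit bias result: for exponential-tailed losses with bounded positive multiplicative perturbations, gradient descent with a sufficiently small step size on linearly separable data (i) escapes to infinity in norm and (ii) converges in direction to the $\ell_2$ hard-margin SVM for the rescaled data. Concretely, $\w(t)/\norm{\w(t)}_2 \to \w^\star/\norm{\w^\star}_2$, where
\begin{equation*}
\w^\star = \arg\min_{\w}\,\norm{\w}_2 \quad \text{subject to}\quad \upsilon_i\,\w^T\tilde{\h}_i \;\geq\; 1,\ \ i\in[n].
\end{equation*}
Undoing the substitution $\tilde{\h}_i = \delta_{\upsilon_i}\h_i$ rewrites the constraints as $\upsilon_i\delta_{\upsilon_i}\w^T\h_i\geq 1$, which is precisely the CS-SVM of the proposition.

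The main obstacle is ensuring that the additive shifts $\iota_{\upsilon_i}$ are genuinely irrelevant to the asymptotic direction. This requires revisiting the Soudry et al.\ argument to verify that bounded prefactors $c_{\upsilon_i}$ contribute only $O(1)$ corrections at the $\log\norm{\w(t)}_2$ scale, and thus vanish when normalizing $\w(t)$. The standard machinery (writing $\w(t) = \w^\star\log(t) + \bseta(t)$ with a bounded residual, and showing the residual absorbs these $O(1)$ constants) carries through essentially verbatim. Two mild side conditions must also be checked: linear separability of $\{(\tilde{\h}_i,\upsilon_i)\}$, which reduces to separability of $\{(\h_i,\upsilon_i)\}$ since $\delta_{\pm 1}>0$; and the exponential-tail properties of $\log(1+c\,e^{-z})$, which hold uniformly in $c$ over any bounded range of $\iota_{\upsilon_i}$. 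With these in hand, the proof is a direct reduction.
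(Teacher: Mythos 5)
Your proposal is correct and follows essentially the same route as the original proof: this proposition is quoted from \cite{VS} (the present paper gives no proof of its own), and that reference likewise absorbs the multiplicative $\delta_{\pm1}$ into rescaled features, treats the additive $\iota_{\pm1}$ as bounded per-sample prefactors that only perturb the exponentially-tailed loss by $O(1)$ terms, and then invokes the implicit-bias theorem of \cite{soudry2018implicit} (Ji--Telgarsky style) before undoing the substitution to obtain the CS-SVM constraints. The only point worth making explicit is the (implicit) assumption that the data are linearly separable, which your argument correctly notes is preserved under the positive rescaling.
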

Prop.~\ref{propo:vs_binary} explicitly describes how the hyperparameters affect training asymptotically: the GD path is implicitly biased towards a classifier that assigns margins to the two classes with relative ratio $\delta_{-1}/\delta_{+1}$. Thus, tuning $\delta_{-1}>\delta_{+1}$ if class {$\upsilon=+1$} is minority, favors the minority by assigning larger margin to it. Note, the additive hyperparameters $\iota_c$ do \emph{not} have any effect on the implicit bias {asymptotically.} Our focus here is on the asymptotic training regime, hence onwards we restrict attention to the multiplicative hyperparameters.

\vspace{3pt}
\noindent\textbf{Open problem: Beyond linear models.} Prop.~\ref{propo:vs_binary} is limited to a setting with fixed features. While an extension of the loss itself to the learned-feature setting is easy to heuristically derive (see~\eqref{eq:vs_multiclass}), it is an open question to explicitly characterize the effect of the hyperparameters on the learned solution. For instance, how do they affect the relative margin between majorities and minorities or between minorities and minorities?
\vspace{-0.3cm}
\section{An Implicit Geometry View}\label{sec:setup}
\vspace{-0.3cm}
To better understand the impact of different CE modifications, we propose studying their implicit geometry, i.e., the geometry of classifiers and embeddings learned (asymptotically in the number of training steps) by GD. 
For this, we adopt the  \emph{unconstrained features model} (UFM)  \citep{mixon2020neural,fang2021exploring}. To describe the model, 
let $
\W_{d\times k} = [\w_1,\w_2, \cdots,\w_k] 
$ and $
\Hb_{d\times n} = [\h_1, \h_2, \cdots, \h_n]
$ be the matrix of $k$ classifiers and $n$ feature embeddings corresponding to each example in the training set. Here, $d\geq k-1$ is the feature dimension. We assume each class $c\in[k]$ has $n_c\geq 1$ examples and $\sum_{c\in[k]}n_c=n.$ Without loss of generality, we assume examples are ordered. Formally, defining $n_0=0$, examples $i= \sum_{{c^\prime}=0}^{c-1}n_{c^\prime}+1,\ldots,\sum_{{c^\prime}=0}^{c}n_{c^\prime}$ are in class $c$. 
In the UFM, features $\h_i,i\in[n]$ are trained \emph{jointly} with the weights $\w_c,c\in[k]$ and are \emph{unconstrained}, i.e. trained without abiding by an explicit parameterization by some weight vector $\thetab$ (as in \eqref{eq:vs_multiclass}).
%



\vspace{3pt}
\noindent\textbf{CDT and LDT losses on the UFM.}~Consider training on the UFM with the following two parameterization of the CE loss:
\begin{subequations}\label{eq:two_losses}
\begin{align}\label{eq:CDT_loss}
    \Lc_\text{CDT}(\W^T\Hb;\deltab)&:= \sum_{i\in[n]} \log\big(1+\sum_{c\neq y_i}e^{-(\delta_{y_i}\w_{y_i}-\delta_c\w_c)^T\h_i}\big), 
\\
\label{eq:LDT_loss}
    \Lc_\text{LDT}(\W^T\Hb;\deltab)&:= \sum_{i\in[n]} \log\big(1+\sum_{c\neq y_i}e^{-(\delta_{y_i}(\w_{y_i}-\w_c))^T\h_i}\big).
\end{align}
\end{subequations}
Both losses are parameterized by a positive vector $
\deltab = \begin{bmatrix}
	\delta_1,\delta_2,\ldots,\delta_k
\end{bmatrix}^T\in\R_+^k
$ of multiplicative hyperparameters. The CDT loss in \eqref{eq:CDT_loss} was previously introduced by \cite{CDT,VS} (which is a special case of \eqref{eq:vs_multiclass} when ignoring the additive $\iota_c$). Here, we also introduce the LDT loss in \eqref{eq:LDT_loss} as an alternative parameterization.
\vspace{2pt}
\noindent\textbf{CDT vs LDT.}~Observe the following subtle  distinction: CDT associates $\deltab$ with the class label of the classifiers $\w_c$, while LDT associates the same hyperparameters with the label of the feature vectors $\h_i$.
{Our initial motivation for introducing LDT is the following observation.
}
\begin{lemma}\label{lem:motivation}
    {Assume {binary} linearly separable data and training of linear classifiers without regularization. The LDT classification rule coincides with the rule of the binary VS loss assuming same $\delta$-tuning. On the other hand, minimizing  CDT results in  the same classification rule as CE, irrespective of the $\delta$-tuning.}
\end{lemma}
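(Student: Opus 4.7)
The plan is to exploit that, in the binary case, each loss depends on the pair $(\w_1,\w_2)$ only through a single linear combination, reducing the problem to a one-vector implicit-bias question that is already understood.

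\textbf{LDT.} Set $\upsilon_i=+1$ if $y_i=1$ and $\upsilon_i=-1$ if $y_i=2$, and define $\delta_{+1}:=\delta_1,\ \delta_{-1}:=\delta_2$. A one-line rewrite of the exponent gives $\delta_{y_i}(\w_{y_i}-\w_{\bar y_i})^T\h_i=\delta_{\upsilon_i}\upsilon_i\,\w^T\h_i$ for $\w:=\w_1-\w_2$. Hence $\Lc_\text{LDT}$ depends on $(\w_1,\w_2)$ only through $\w$, and as a function of $\w$ it is \emph{exactly} the binary VS loss of Prop.~\ref{propo:vs_binary} with the same $\delta$-tuning. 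Since the binary classification rule is $\mathrm{sign}((\w_1-\w_2)^T\h)=\mathrm{sign}(\w^T\h)$, Prop.~\ref{propo:vs_binary} identifies it with the CS-SVM rule.

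\textbf{CDT.} Analogously, one checks that $\Lc_\text{CDT}$ depends on $(\w_1,\w_2)$ only through $\tilde\w:=\delta_1\w_1-\delta_2\w_2$, and as a function of $\tilde\w$ it equals $\sum_i\log(1+e^{-\upsilon_i\tilde\w^T\h_i})$---the standard binary logistic loss with \emph{no} $\delta$-dependence. The classical implicit-bias result of \citep{soudry2018implicit} thus gives $\tilde\w(t)/\|\tilde\w(t)\|\to\hat\w^\star$, the max-margin SVM direction, which is precisely the direction produced by GD on CE. It remains to transfer directional convergence from $\tilde\w$ to $\w_1-\w_2$, the combination that actually defines the classification rule. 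I would do this via the conservation law $\delta_2\w_1(t)+\delta_1\w_2(t)\equiv c_0$ along the GD trajectory, a one-line consequence of $\delta_2\,\partial_{\w_1}\Lc_\text{CDT}+\delta_1\,\partial_{\w_2}\Lc_\text{CDT}=0$. Solving the $2\times 2$ linear system $\{\delta_1\w_1-\delta_2\w_2=\tilde\w,\ \delta_2\w_1+\delta_1\w_2=c_0\}$ yields
\begin{equation*}
\w_1-\w_2 \;=\; \frac{(\delta_1+\delta_2)\,\tilde\w\;+\;(\delta_2-\delta_1)\,c_0}{\delta_1^2+\delta_2^2},
\end{equation*}
and since $c_0$ is bounded while $\|\tilde\w(t)\|\to\infty$, $(\w_1-\w_2)/\|\w_1-\w_2\|\to\hat\w^\star$, matching the CE rule irrespective of $(\delta_1,\delta_2)$.

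\textbf{Main obstacle.} The only non-routine step is the final transfer for CDT: the classification rule uses $\w_1-\w_2$ whereas the loss itself only ``sees'' $\tilde\w$, so a priori the unbounded growth of $(\w_1,\w_2)$ could drag $\w_1-\w_2$ off the $\tilde\w$ direction. The conservation law above confines the GD trajectory to an affine line in $(\w_1,\w_2)$-space, collapsing the free direction and making the transfer automatic; this is the one place where the argument uses more than a pure reparameterization.
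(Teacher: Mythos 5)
Your proposal is correct, but it follows a genuinely different route from the paper. The paper does not argue along the GD trajectory: it first invokes the implicit-bias results to replace GD on the two-output model by the corresponding max-margin problems (\eqref{eq:linear_cdt_svm}--\eqref{eq:linear_vs_binary}), then proves a centering property of the \emph{optimal} solutions via a symmetrization/perturbation argument (Lem.~\ref{lem:linear_centering}: $\w_1^\LDT+\w_2^\LDT=0$ and $\delta_1^{-1}\w_1^\CDT+\delta_2^{-1}\w_2^\CDT=0$), which lets it eliminate one classifier, identify the reduced programs with the binary VS- and CE-SVMs, and read off the explicit scalings $\w_1^\LDT-\w_2^\LDT=\w_\star^{\text{VS,binary}}$ and $\w_1^\CDT-\w_2^\CDT=\frac{\delta_1+\delta_2}{\delta_1^2+\delta_2^2}\w_\star^{\CE}$. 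You instead observe that each loss depends on $(\w_1,\w_2)$ only through one linear combination, apply the single-vector implicit-bias results (Prop.~\ref{propo:vs_binary} for LDT, \cite{soudry2018implicit} for CDT), and bridge the gap between the combination the CDT loss sees ($\tilde\w=\delta_1\w_1-\delta_2\w_2$) and the one defining the rule ($\w_1-\w_2$) via the exact conservation of $\delta_2\w_1+\delta_1\w_2$ along GD; note this conserved quantity is precisely the combination the paper shows vanishes at the max-margin optimum, so your conservation law is the trajectory-level analogue of the paper's centering lemma. What each buys: the paper's argument yields exact limiting classifiers with explicit constants and stays within the max-margin framework used throughout the paper; yours is more elementary and self-contained in that it avoids the multiclass implicit-bias reduction and the symmetrization step, at the price of needing $\|\tilde\w(t)\|\to\infty$ (which \cite{soudry2018implicit} provides under separability) and the small, easily filled detail that GD on $(\w_1,\w_2)$ induces exact GD on the reparameterized vector (with step size rescaled by $2$ for LDT and by $\delta_1^2+\delta_2^2$ for CDT), so the cited single-vector results indeed apply.
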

In other words, for binary linear settings CDT does not improve over CE, while LDT does so by reducing to the binary VS loss of Prop.~\ref{propo:vs_binary}. {While Lem.~\ref{lem:motivation} motivates LDT, our results below show that the intuition gained from binary linear settings can be restrictive. Indeed, we show that both LDT and CDT losses induce rich behaviors in the multiclass learned-feature regime.} 

\vspace{3pt}
\noindent\textbf{Unconstrained-features cost-sensitive SVM.}~We minimize the losses in \eqref{eq:two_losses} without explicit regularization. Note that in the UFM, minimization over the embedding map is not parameterized in terms of $\thetab$, as say in \eqref{eq:vs_multiclass}. Thus, the minimization is (joint) over classifiers $\W$ and embeddings $\Hb$. Specifically, consider performing this minimization using gradient flow (i.e. GD with infinitesimal step-size.) Then, by interpreting the UFM as a two-layer linear model it can be shown following \cite{lyu2019gradient} that gradient flow will converge (asymptotically in time) in direction to a KKT point of the following two non-convex minimizations for CDT and LDT losses respectively: 
$\min_{\W,\Hb}~\|\W\|_F^2+ \|\Hb\|_F^2$
\begin{subequations}\label{eq:CS-SVMs}
\begin{align}\label{eq:svm_cdt}
&\text{subj. to}\quad (\delta_{y_i}\w_{y_i}-\delta_c\w_c)^T\h_i \geq 1,~i\in[n], c\neq y_i,\\
\label{eq:svm_ldt}
&\text{subj. to}\quad \delta_{y_i}(\w_{y_i}-\w_c)^T\h_i \geq 1,~i\in[n], c\neq y_i.
\end{align}
\end{subequations}
Note the resemblence to the CS-SVM minimization of Prop.~\ref{propo:vs_binary}. But unlike that, the problems here are non-convex since minimization is also over $\Hb$. We refer to \eqref{eq:CS-SVMs} as unconstrained CS-SVM or simply CS-SVM. 
%
%
%
\begin{remark}
It is straightforward to  extend our results to a modified objective $\|\W\|_F^2+ \beta\|\Hb\|_F^2$, for some $\beta > 0$, as also suggested in \cite{seli}. The global solutions of the two objectives have a one-to-one correspondence, differing only by an appropriate scaling factor.
\end{remark}
\section{CS-SVM Geometries}\label{sec:UF-SVM}

%
In this section, we characterize the global minimizers $(\Wopt, \Hopt)$ of the non-convex programs in \eqref{eq:svm_cdt} and \eqref{eq:svm_ldt}. {We use $\M_{d\times k}=[\mub_1,\cdots,\mub_k]$ to denote the mean-embeddings of $\Hb$, i.e. $\mub_c = (1/n_c)\sum_{i:y_i=c}\h_i, \forall c \in [k]$}. For simplicity, we focus on a STEP-imbalanced setting. In this case, it is reasonable to assume (and we do so) that $\deltab$ also shares this STEP structure.
\begin{definition}[$(R,\rho)$-STEP imbalance and STEP logit adjustment] \label{def:step}
{In a setting with imbalance ratio {$R\geq1$} and minority fraction $\rho\in(0,1)$, an $(R,\rho)$-STEP imbalanced dataset has $\rho k$ minority classes with $\nmin$ samples each, and $\rhobar k = (1-\rho) k$ majority classes with $R\nmin$ samples. For STEP logit adjustment, the hyperparameter vector $\deltab$  shares this step structure: {for majorities $\delta_c=\dmaj>0$ and for  minorities $\delta_c=\dmin>0$.}
}
\end{definition}
Our results about CDT/LDT describe the geometry of the CS-SVM solutions in terms of an encoding matrix {$\Zhat$, which we call $(\deltab,R)$-\SEL~matrix and define below together with its SVD.}

%
\begin{definition}[{$(\deltab,R)$-\SEL~matrix}]\label{def:sel}
For hyperparameters $\deltab \in \R_+^{k}$, minority fraction $\rho$ {($\bar\rho:=1-\rho$)}, and $k$ number of classes, define $\Xib \in \R^{k\times k}$ such that $\forall c,j \in [k]$,
\begin{align*}
    \Xib[c,j] = \begin{cases}
    \delta_{c}^{-1}\left(1-\nicefrac{\delta_{c}^{-2}}{\sum_{c'\in[k]}\delta_{c'}^{-2}}\right) & ,\,c=j \\
-\delta_{c}^{-1}\left(\nicefrac{\delta_{j}^{-2}}{\sum_{c'\in[k]}\delta_{c'}^{-2}}\right) & ,\,c\neq j
    \end{cases}\,.
\end{align*}
Then, {for a rational imbalance ratio $R\in\mathbb{Q_+}$},\footnote{{This assumption is not restrictive since under STEP imbalance $R:=n_{\text{maj}}/n_{\text{minor}}$ for integers  $n_{\text{maj}}, n_{\text{minor}}$.
}} the $(\deltab,R)$-Simplex-Encoding Label (\SEL) matrix $\Zhat \in \R^{k\times n}$ with {$n := \alpha k (R\rhobar + \rho $)} is defined as,
\begin{align}
    \Zhat = \begin{bmatrix}
    \Xib_{1:\rhobar k} \otimes {\ones_{\alpha R}^T} & \Xib_{(\rhobar k + 1):k} {\otimes \ones_{\alpha}^T}
    \end{bmatrix},
\end{align}
{where $\alpha \in \N$ is such that $\alpha R$ is an integer.}
Further let 
\begin{align}\label{eq:SVD_zhat}
    \Zhat = \Vb
    \Lambdab \begin{bmatrix}
    \Ub^T_{1:\rhobar k} {\otimes \ones_{\alpha R}^T} & \Ub^T_{(\rhobar k + 1):k} {\otimes \ones_{\alpha}^T}
    \end{bmatrix},
\end{align}
be the compact SVD of $\Zhat$, where $\Lambdab\in\R^{(k-1)\times (k-1)}$ is a positive diagonal matrix and $\Ub\in\R^{k\times (k-1)}$, $\Vb\in\R^{k\times(k-1)}$ have orthonormal columns.
\end{definition}
{The pattern of the $(\deltab,R)$-\SEL~matrix $\Zhat$ is clearly determined by the imbalance ratio $R$ and the hyperparameters $\deltab$. However, it also depends on {the number of classes $k$ and the minority ratio $\rho.$ We choose to drop the latter dependence from the name $(\delta,R)$-SEL since our results focus on $R, \deltab$ and $k,\rho$ are easily understood from context.}
When $\deltab=\ones_k$, {$\Zhat$ takes a special form: it reduces to a matrix with entries $1-1/k$ and $-1/k$, which \cite{seli} calls the SEL matrix and shows that it characterizes the implicit geomtery of the CE loss for imbalanced data. Our definition is strictly more general allowing us}
to describe the implicit geometry learned by CDT/LDT losses.} {We gather useful properties about the eigen-structure of $\Zhat$ in Sec.~\ref{sec:eigen_SEL}.
} Here, we note that $\Zhat^T\diag{(\deltab)}^{-1}\ones_k=0$. Thus, $\operatorname{rank}(\Zhat)=k-1$.
%
%
%
%
%
%
The $(\deltab,R)$-\SEL~matrix and its SVD induce a geometry, which is central to our results and we define it next.

\begin{definition}[{$(\deltab,R)$-\SELI~geometry}]\label{dfn:seli}
Consider a $(\deltab,R)$-\SEL~matrix $\Zhat$, with SVD factors $\Ub$, $\Lambdab$ and $\Vb$ as defined in \eqref{eq:SVD_zhat}. The classifier and mean-embeddings matrices $\W, \M \in \R^{d \times k}$ follow the $(\deltab,R)$-\SELI~geometry if the following conditions are satisfied:

\vspace{5pt}
\noindent\emph{\textbf{(i)}}~~$\W^T\W \propto \Vb \Lambdab \Vb^T$, \ \ 
\noindent\emph{\textbf{(ii)}}~~$\M^T\M \propto \Ub \Lambdab \Ub^T$,\ \ 
\noindent\emph{\textbf{(iii)}}~$\W^T\M \propto \Vb \Lambdab \Ub^T=\Xib$.
\end{definition}
{The first two statements characterize the relative norms and pair-wise angles of classifiers and mean embeddings, respectively. The third statement determines the relative margins between classes. The characterization is in terms of the SVD factors of an  appropriate SEL-type encoding matrix. In Sec.~\ref{sec:SELI_properties}, we  derive closed-form expressions for the norms, angles and margins as a function of $R, k, \deltab$ by {explicitly computing} the SVD factors of $\Zhat$. Setting $(\deltab=\ones_k,R)$ recovers the \SELI~geometry \citep{seli}, and $(\deltab=\ones_k,R=1)$ the ETF geometry  \citep{NC}.}
{We are now ready to state our main result. See Sec.~\ref{sec:proof_SVM} for proofs.}
\begin{theorem}
\label{thm:SVM-VS}
Suppose $d\geq k-1$ and $(R,\rho)$-STEP imbalance setting with STEP logit adjustments. Let $(\Wopt,\Hopt)$ be any minimizers of either \eqref{eq:svm_cdt} and \eqref{eq:svm_ldt}, and $\Mopt$ be the optimal class-wise mean-embeddings. 
Then, the following statements are true:

\item[\textbf{(i)}]\label{thm:NC} 
\noindent\emph{\textbf{[NC]}} All embeddings collapse to their class means, i.e., $\forall i\in[n]$ it holds that $\h^*_{i}=\mub^*_{y_i}.$
\item[\textbf{(ii)}]\label{thm:CDT} 
\noindent\emph{\textbf{[CDT \eqref{eq:svm_cdt}]}}~For CDT, $(\Wopt,\Mopt)$ follow the $(\deltab,R)$-\SELI~geometry.
\item[\textbf{(iii)}]\label{thm:LDT} 
\noindent\emph{\textbf{[LDT \eqref{eq:svm_ldt}]}}~For LDT, $(\Wopt,\Mopt \diag{(\deltab)})$ follow the $(\ones_k,\Tilde{R})$-\SELI~geometry, where $\Tilde{R} := R\big({\dmin}/{\dmaj}\big)^2$, provided $\Tilde{R}\in \Qc_+.$\footnote{This a technical requirement. In our experiments we apply the same formulas even when $\Tilde{R}$ is not rational.}
%
%
%
%
\end{theorem}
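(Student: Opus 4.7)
~The plan is to proceed in three stages: (a) reduce the bi-level optimization over $(\W,\Hb)$ to an optimization over $(\W,\Mb)$ via Neural Collapse; (b) reduce the LDT program to an instance of the standard CE unconstrained CS-SVM whose geometry is already characterized by the SELI theorem of \cite{seli}; and (c) tackle the CDT program directly via its KKT conditions and verify that the $(\deltab,R)$-\SELI~geometry solves them.

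Step (a) (NC) exploits linearity of the constraints in each $\h_i$ when $\W$ is held fixed, together with the strict convexity of $\h_i\mapsto\|\h_i\|_2^2$. For any $i$ with $y_i=c$, the constraints in either \eqref{eq:svm_cdt} or \eqref{eq:svm_ldt} depend on $i$ only through $\h_i$, and the coefficient vectors multiplying $\h_i$ are common across all $i$ in class $c$. Averaging the constraints within a class shows that feasibility is preserved after replacing each $\h_i$ by $\mub_c=(1/n_c)\sum_{i:y_i=c}\h_i$, while Jensen's inequality gives $\sum_{i:y_i=c}\|\h_i\|_2^2\geq n_c\|\mub_c\|_2^2$ with equality only when all $\h_i$ coincide with $\mub_c$. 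Hence any minimizer of \eqref{eq:CS-SVMs} satisfies $\h_i^\ast=\mub_{y_i}^\ast$, proving part (i). After NC, both programs reduce to
\[
\min_{\W,\Mb}\ \|\W\|_F^2+\sum_{c\in[k]} n_c\|\mub_c\|_2^2 \quad \text{subject to the respective per-class inequalities on }(\W,\Mb).
\]

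For step (b) (LDT, part (iii)), I would introduce the change of variables $\widetilde\Mb=\Mb\,\diag(\deltab)$. Under this rescaling the LDT constraints become $(\w_c-\w_{c'})^T\widetilde\mub_c\geq 1$, i.e.\ the \emph{unweighted} CE-style margins, while the post-NC objective turns into $\|\W\|_F^2+\sum_c(n_c/\delta_c^2)\|\widetilde\mub_c\|_2^2$. In STEP imbalance with STEP $\deltab$, the effective class counts $n_c^{\mathrm{eff}}=n_c/\delta_c^2$ are again STEP-imbalanced, with ratio $\widetilde R=R(\dmin/\dmaj)^2\in\Qc_+$ by assumption. This is exactly the CE UFM CS-SVM analyzed in \cite{seli}, whose global minimizers (up to rotation) follow the $(\ones_k,\widetilde R)$-\SELI~geometry on $(\W,\widetilde\Mb)$. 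Reverting the substitution gives the statement for $(\Wopt,\Mopt\diag(\deltab))$.

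Step (c) (CDT, part (ii)) is the main obstacle, since the analogous rescaling $\widetilde\w_c=\delta_c\w_c$ introduces a $\deltab$-weighted classifier norm that does not directly fit the setting of \cite{seli}. The plan is to analyze the post-NC program through its KKT system. I would assign Lagrange multipliers $\lambda_{c,c'}\geq 0$ to the CDT constraints $(\delta_c\w_c-\delta_{c'}\w_{c'})^T\mub_c\geq 1$ and write the stationarity conditions for $\w_c$ and $\mub_c$. Assembling these conditions into two matrix identities and eliminating the multipliers forces the Gram matrix $\W^T\Mb$ to share the row/column pattern of $\Xib$ in Defn.~\ref{def:sel}. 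Using complementary slackness together with the STEP symmetry (permutations within majorities and within minorities leave the program invariant), one shows that, up to a permutation of classes, the active constraints pin down $\W^T\Mb\propto\Xib$. Combined with the compact SVD in \eqref{eq:SVD_zhat}, this forces $\W^T\W\propto\Vb\Lambdab\Vb^T$ and $\Mb^T\Mb\propto\Ub\Lambdab\Ub^T$, which is precisely the $(\deltab,R)$-\SELI~geometry. Global optimality follows by matching the primal value with the dual value using the guessed multipliers, which remain nonnegative under the STEP/$\deltab$-STEP structure.

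The hardest step is (c): unlike LDT, CDT does not collapse by a simple change of variables onto an existing SELI result, so one must work with the $\deltab$-weighted Gram structure directly. The key technical bridge is the explicit spectral description of the $(\deltab,R)$-\SEL~matrix developed in Sec.~\ref{sec:eigen_SEL}, which is what converts the abstract KKT identity $\W^T\Mb\propto\Xib$ into the closed-form factorization required by Defn.~\ref{dfn:seli}.
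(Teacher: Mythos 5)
Your steps (a) and (b) are correct and essentially reproduce the paper's own argument: neural collapse is obtained by within-class averaging (the paper phrases the LDT case via separability and strong convexity in each $\h_i$, and gets NC for CDT as a by-product of its Gram characterization, but your Jensen argument is a valid equivalent), and your LDT reduction via $\widetilde\Mb=\Mb\,\diag(\deltab)$ to a CE-style unconstrained SVM with effective counts $n_c/\delta_c^2$ and ratio $\tilde R=R(\dmin/\dmaj)^2$ is exactly the paper's route. The only point you gloss over there is that the effective counts need not be integers; the paper absorbs this through the rationality assumption, a scale factor $\alpha$, and the $\beta$-weighted UF-SVM result quoted as Prop.~\ref{prop:seli_beta}.

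The genuine gap is in step (c). Problem \eqref{eq:svm_cdt} is non-convex, so KKT stationarity is only necessary, and your sketch leaves unproved precisely the points on which the paper's proof hinges. First, the claim that ``the active constraints pin down $\W^T\Mb\propto\Xib$'' at \emph{every} global minimizer is not justified: permutation-invariance of the program does not transfer to individual minimizers (the solution set is non-convex, with rotational freedom), and you never establish which constraints are active or what the multipliers are. The paper gets this by lifting to the convex program \eqref{eq:cvx_X} in $\X=[\W\ \Hb]^T[\W\ \Hb]$, writing its dual \eqref{eq:dual2_X} in terms of the $(\deltab,R)$-SEL matrix, solving the relaxed dual in closed form ($\Bb^*=\Ub_\otimes\Vb^T$, the nuclear-norm certificate), and verifying the strict sign condition $\Bb^*\odot\Zhat^T>0$ (Lemma~\ref{eq:SVD_elementwise}); complementary slackness with this strictly sign-feasible $\Bb^*$ is what forces all margins to be active and $\X^*_{12}=\Zhat$. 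Second, even granting $\W^T\Mb\propto\Xib$, the individual conditions $\W^T\W\propto\Vb\Lambdab\Vb^T$ and $\Mb^T\Mb\propto\Ub\Lambdab\Ub^T$ do \emph{not} follow ``from the compact SVD'': infinitely many unbalanced factorizations share the same cross-Gram. The balance comes from the additional complementary-slackness identities $\X^*_{11}={\Bb^*}^T{\X^*_{12}}^T$ and $\X^*_{22}=\Bb^*\X^*_{12}$ in the lifted problem (equivalently, from the variational characterization of the nuclear norm), which your sketch omits. Third, ``matching the primal value with the dual value using the guessed multipliers'' in a non-convex program requires showing the Lagrangian is bounded below and minimized at the candidate, i.e., a positive-semidefiniteness condition on the multiplier matrix ($\|\Bb\|_2\le 1$) together with the sign pattern matching $\Zhat$ — exactly the dual feasibility and sign lemma of the paper. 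Making step (c) rigorous therefore essentially forces you to reconstruct the paper's convex-relaxation/dual-certificate argument; as written, it asserts the conclusion rather than proving it.
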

Thm.~\ref{thm:SVM-VS} describes the geometry of both classifiers and embeddings {that correspond to solutions of the non-convex CS-SVM for either CDT or LDT}.
Statement (i) shows that all optimal embeddings within the same class 
are equal. Thus, to analyze their geometry, it suffices to study their respective class means, which we arrange as columns of $\Mopt$. Statements (ii) and (iii) describe the optimal classifiers and mean-embeddings in terms of the  geometry in Defn.~\ref{dfn:seli}. {Hence, we can find the angles and norms (up to a constant) of the classifiers/embeddings. It is also easy to see that the geometry only depends on the ratio $\Delta:=\dmaj/\dmin$ and not on the absolute magnitude of the hyperparameters.} 

When $\deltab=\ones_k$, i.e., when the model is trained by CE loss, both statements (ii) and (iii) reduce to the \SELI~geometry of \cite{seli}. Further assuming $R=1$ (i.e., a balanced training set), {recovers the ETF geometry \citep{NC}}. For general $R$ and tuning of $\deltab$, the LDT/CDT geometries are different than both the SELI and ETF geometries. We visualize changes in the geometry in Fig.~\ref{fig:visual}. 

\noindent\textbf{Angles and Norms.}~{Expressing the geometry of the optimal solutions in terms of  Defn.~\ref{dfn:seli}, enables us to} derive explicit closed-form expressions for the angles between individual classifiers and embeddings, as well as, their norms. {For example, the norm ratio for the classifiers is given by Eqn.~\eqref{eq:eq_intro}.} As an example for angle formulas, we can show for any $R$ and $\Delta$ that:
\begin{align}
    \text{CDT:} \nn \ &\cos({\w_\text{{min}}},{\w_\text{{min}}'})={ \frac{-2 + 2\sqrt{R+1}\left(\sqrt{1 + \Delta^{2}}\right)^{-3}}{k-2+2\sqrt{R+1}\left(\sqrt{1 + \Delta^{2}}\right)^{-3}}},\\
    \text{LDT:} \  &\cos({\w_\text{{min}}},{\w_\text{{min}}'})=\frac{-2\Delta + \sqrt{(R+\Delta^2)/2}}{(k-2)\Delta+\sqrt{(R+\Delta^2)/2}}.\label{eq:cosmin}
\end{align}
See Sec.~\ref{sec:SELI_properties} for the complete list of closed-form formulas, all derived thanks to  Thm.~\ref{thm:SVM-VS}. Such explicit formulas  allow studying optimal tunings and interesting asymptotics as $R$ increases. We show these next. 

\noindent\textbf{Special tunings.}~We emphasize two notable special cases of geometries that arise respectively for LDT and CDT when setting $\delta_c=\sqrt{n_c}\Leftrightarrow \Delta=\sqrt{R}$. 
\begin{corollary}[{Achieving alignment with CDT}]\label{cor:cdt_sqrt_r}
In \eqref{eq:svm_cdt}, set $\Delta=\sqrt{R}$. Then, $\cos(\wopt_{y_i},\hopt_i) = 1, \forall i\in[n]$, i.e., each feature embedding $\hopt_i$ perfectly aligns with its  corresponding classifier $\wopt_{y_i}$. 
\end{corollary}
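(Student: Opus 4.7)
The plan is to reduce the alignment statement to a linear-algebraic identity about the SVD factors of the $(\deltab,R)$-\SEL~matrix $\Zhat$, and then to exploit the special structure that arises when $\delta_c = \sqrt{n_c}$ (I may take this normalization WLOG, since the induced geometry depends only on the ratio $\Delta = \dmaj/\dmin$).

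First, by Thm.~\ref{thm:SVM-VS}(i) (NC), $\hopt_i = \mub^*_{y_i}$, so it suffices to show $\cos(\wopt_c,\mub^*_c)=1$ for every $c\in[k]$. By Thm.~\ref{thm:SVM-VS}(ii), $(\Wopt,\Mopt)$ follows the $(\deltab,R)$-\SELI~geometry, so I would write $\Wopt = s_1\Qbf\Lambdab^{1/2}\Vb^T$ and $\Mopt = s_2\Qbf'\Lambdab^{1/2}\Ub^T$ for positive scalars $s_1,s_2$ and $\Qbf,\Qbf'\in\R^{d\times(k-1)}$ with orthonormal columns. Item (iii) of Defn.~\ref{dfn:seli} forces $\Qbf^T\Qbf'$ to be a scalar multiple of $\Id_{k-1}$, and an AM--GM argument on the objective $\|\W\|_F^2+\tr(\diag(\nb)\M^T\M)$, whose margin constraints depend only on $\W^T\M$, saturates the induced Cauchy--Schwarz bound and forces $\Qbf=\Qbf'$. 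Writing $\vb_c := \Vb^T\e_c$ and $\ub_c := \Ub^T\e_c$ for the $c$-th rows of $\Vb,\Ub$ (as column vectors), this reduces $\cos(\wopt_c,\mub^*_c)=1$ to showing that $\vb_c$ and $\ub_c$ are positively parallel.

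The key computation is to identify $\Vb,\Lambdab,\Ub$ explicitly. Starting from $\Xib = \diag(\deltab)^{-1}(\Id-\ones_k\tilde{\nb}^T)$ with $\tilde n_j = \delta_j^{-2}/S$ and $S=\sum_c \delta_c^{-2}$, and substituting $\delta_c=\sqrt{n_c}$, a direct expansion yields
\[
\Zhat\Zhat^T \;=\; \Xib\,\diag(\nb)\,\Xib^T \;=\; \Id_k - \tilde\vb\tilde\vb^T,
\]
where $\tilde\vb$ is the unit vector with $\tilde v_c := 1/\sqrt{S\,n_c}$. Hence all non-zero singular values of $\Zhat$ equal $1$, so $\Lambdab = \Id_{k-1}$ and $\Vb\Vb^T$ is the orthogonal projection onto $\tilde\vb^\perp$.

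Finally, since $\Lambdab=\Id_{k-1}$, the identity $\Xib=\Vb\Lambdab\Ub^T$ collapses to $\Xib=\Vb\Ub^T$; left-multiplying by $\Vb^T$ yields $\Ub=\Xib^T\Vb$, so that $\ub_c = \Vb^T\Xib\e_c$. Expanding
\[
\Xib\e_c \;=\; \tfrac{1}{\sqrt{n_c}}\,\e_c - \tfrac{1}{n_c\sqrt S}\,\tilde\vb,
\]
and using $\Vb^T\tilde\vb=0$, I obtain $\ub_c = (1/\sqrt{n_c})\,\vb_c$, a strictly positive scaling, which closes the argument. The most delicate step is the saturation argument forcing $\Qbf=\Qbf'$ from optimality; once this is in place, everything else is a mechanical computation exploiting the rank-one structure of $\Xib\diag(\nb)\Xib^T$.
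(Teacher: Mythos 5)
Your proposal is correct, and it takes a genuinely different route from the paper. The paper proves the corollary by brute-force verification: it computes the explicit SVD factors of the $(\deltab,R)$-\SEL\ matrix (Lemma~\ref{lem:Zhat_SVD_general}), derives closed-form norms and inner products (Lemmas~\ref{lem:cdt_classifiers}, \ref{lem:cdt_embeddings}, \ref{lem:align}, stated for $\rho=1/2$), and then checks that plugging $\Delta=\sqrt{R}$ into the cosine formulas of Lemma~\ref{lem:align} yields exactly $1$. You instead exploit the structural observation that $\delta_c=\sqrt{n_c}$ makes $\Zhat\Zhat^T=\Xib\diag(\nb)\Xib^T=\Id_k-\tilde\vb\tilde\vb^T$ a rank-one perturbation of the identity, so that $\Lambdab\propto\Id_{k-1}$ and the rows of $\Ub$ become positive multiples of the corresponding rows of $\Vb$ ($\ub_c=\vb_c/\sqrt{n_c}$); alignment then falls out without computing any norms or angles, and your computation (which I checked) is valid for arbitrary minority fraction $\rho$, indeed for arbitrary class sizes with $\delta_c\propto\sqrt{n_c}$, which makes the mechanism behind the corollary more transparent than the paper's case-by-case verification. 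The only soft spot is the step you yourself flag: Defn.~\ref{dfn:seli} fixes the three Gram blocks only up to unrelated constants, so you need the relative constant in $\W^T\M$ to saturate Cauchy--Schwarz; your sketched nuclear-norm/AM--GM saturation argument (forcing $\Qbf=\Qbf'$) is standard and completable, but you could avoid it altogether by invoking the exact characterization established in the paper's proof of Thm.~\ref{thm:SVM-VS}, Eqns.~\eqref{eq:opt_sol_cdt} and \eqref{eq:cdt_opt_final}, where the constants are $\tau$, $1/\tau$ and $1$, so the cross term is already pinned and the cosine equals $1$ immediately from your row-parallelism identity.
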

%
This results from the angle calculations detailed in Sec.~\ref{sec:SELI_properties}. {While this simple tuning leads to perfect alignment of classifiers and mean-embeddings geometries, it does not guarantee equal norms or maximal separation. Thus, the geometry is in general still different from the ETF geometry for balanced data. In contrast, we show next that under the same tuning the implicit geometry of the LDT is an ETF modulo the scaling of the embeddings.}

%
\begin{corollary}[Achieving ETF with LDT]\label{cor:ldt_sqrt_r}In \eqref{eq:svm_ldt}, set $\Delta=\sqrt{R}$. Then, $(\Wopt, \Mopt \diag{(\deltab)})$ follows the ETF geometry.
\end{corollary}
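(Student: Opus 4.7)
The plan is to derive Corollary \ref{cor:ldt_sqrt_r} as an immediate specialization of Theorem \ref{thm:SVM-VS}(iii). Under STEP logit adjustment we have $\Delta := \dmaj/\dmin$, so substituting $\Delta = \sqrt{R}$ into the formula $\tilde{R} := R(\dmin/\dmaj)^2 = R/\Delta^2$ yields $\tilde{R} = 1$. Since $1 \in \Qc_+$, the rationality hypothesis of Theorem \ref{thm:SVM-VS}(iii) is satisfied, and we conclude directly that $(\Wopt, \Mopt\diag(\deltab))$ follows the $(\ones_k, 1)$-\SELI~geometry.

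What then remains is to identify the $(\ones_k, 1)$-\SELI~geometry with the ETF geometry. This is already asserted in the paragraph following Theorem \ref{thm:SVM-VS}, and corresponds to the degenerate case $\deltab = \ones_k$, $R=1$ of Definition \ref{def:sel}: a short verification shows that $\Xib$ collapses to the centering matrix $\Iden_k - \tfrac{1}{k}\ones_k\ones_k^T$ and $\Zhat$ becomes a block-copy of its columns, so all nonzero singular values coincide, $\Lambdab \propto \Iden_{k-1}$, and $\Ub, \Vb$ in \eqref{eq:SVD_zhat} may be chosen to share an orthonormal basis of $\ones_k^\perp$. Plugging into Definition \ref{dfn:seli} then forces $\W$ and $\Mopt\diag(\deltab)$ to have equal column norms, equal pairwise inner products, and column-wise alignment, which is precisely the simplex ETF configuration.

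There is no real obstacle here; the heavy lifting has already been done in establishing Theorem \ref{thm:SVM-VS}, and the corollary is simply the clean algebraic consequence of the identity $\Delta^2 = R \Rightarrow \tilde{R} = 1$. The one subtlety worth highlighting is the role of the post-multiplication by $\diag(\deltab)$ in the statement: it absorbs the class-dependent weighting introduced by the LDT margin constraints $\delta_{y_i}(\w_{y_i}-\w_c)^T\h_i \geq 1$ and effectively renders the problem balanced. Without this rescaling, $\Mopt$ on its own does \emph{not} form an ETF when $\Delta = \sqrt{R}$; it is only the pair $(\Wopt, \Mopt\diag(\deltab))$ that does.
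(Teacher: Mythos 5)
Your proposal is correct and follows essentially the same route as the paper: Corollary~\ref{cor:ldt_sqrt_r} is obtained as an immediate consequence of Theorem~\ref{thm:SVM-VS}(iii) by noting that $\Delta=\sqrt{R}$ gives $\tilde{R}=1$, and that the $(\ones_k,1)$-\SELI~geometry coincides with the ETF geometry. Your brief verification that $\Xib$ reduces to $\Iden_k-\tfrac{1}{k}\ones_k\ones_k^T$ with $\Lambdab\propto\Iden_{k-1}$ simply spells out the identification the paper asserts, and your remark on the role of $\diag(\deltab)$ matches the paper's ``ETF modulo scaling of the embeddings'' comment.
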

{Cor.~\ref{cor:ldt_sqrt_r} follows immediately from Thm.~\ref{thm:LDT} by noting that $\delta_c=\sqrt{n_c}$ yields $\Tilde{R}=1$ and the $(\ones_k,1)$-SELI geometry coincides with the ETF geometry. This implies that classifiers and embeddings are perfectly aligned, but also all classifiers $\w_c^*, c\in[k]$ have equal norms, and both the classifiers and embeddings are maximally separated, i.e., $\cos(\w^*_c,\w^*_{c'}) = \cos(\h^*_c,\h^*_{c'}) = -{1}/{(k-1)}.$ 
Notably, this holds irrespective of the imbalance ratio $R$. See Fig.~\ref{fig:visual} for the visualization.
}
%
%
%
%
%
%
\newcommand{\subf}[2]{%
  {\small\begin{tabular}[t]{@{}c@{}}
  #1\\#2
  \end{tabular}}%
}
\begin{figure}[t]
\centering
\begin{tabular}{ccc}
\hspace{-0.2cm}\subf{\includegraphics[width=0.3\linewidth]{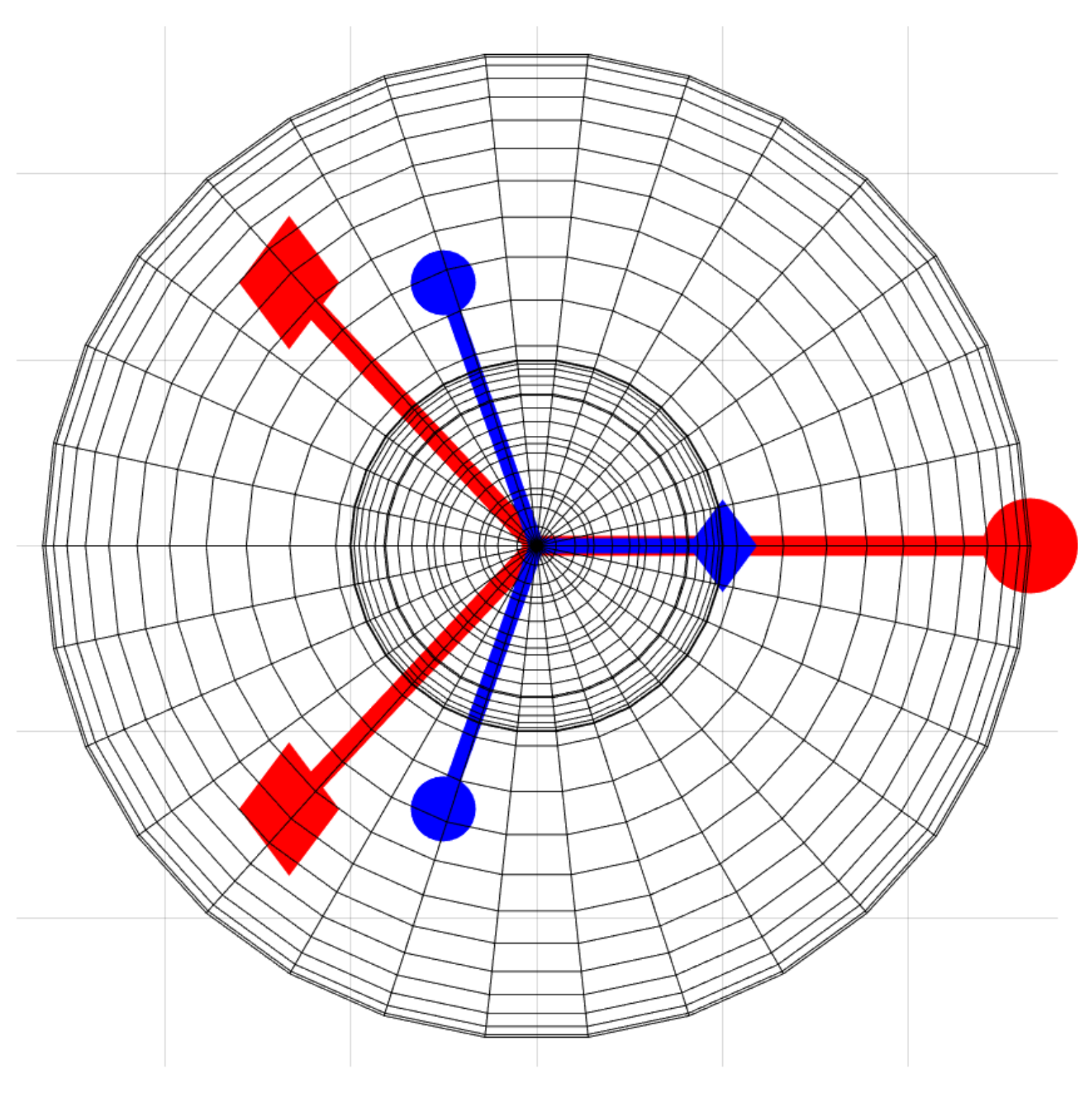}}
{\textbf{CE}}
&
\hspace{-0.25cm}\subf{\includegraphics[width=0.3\linewidth]{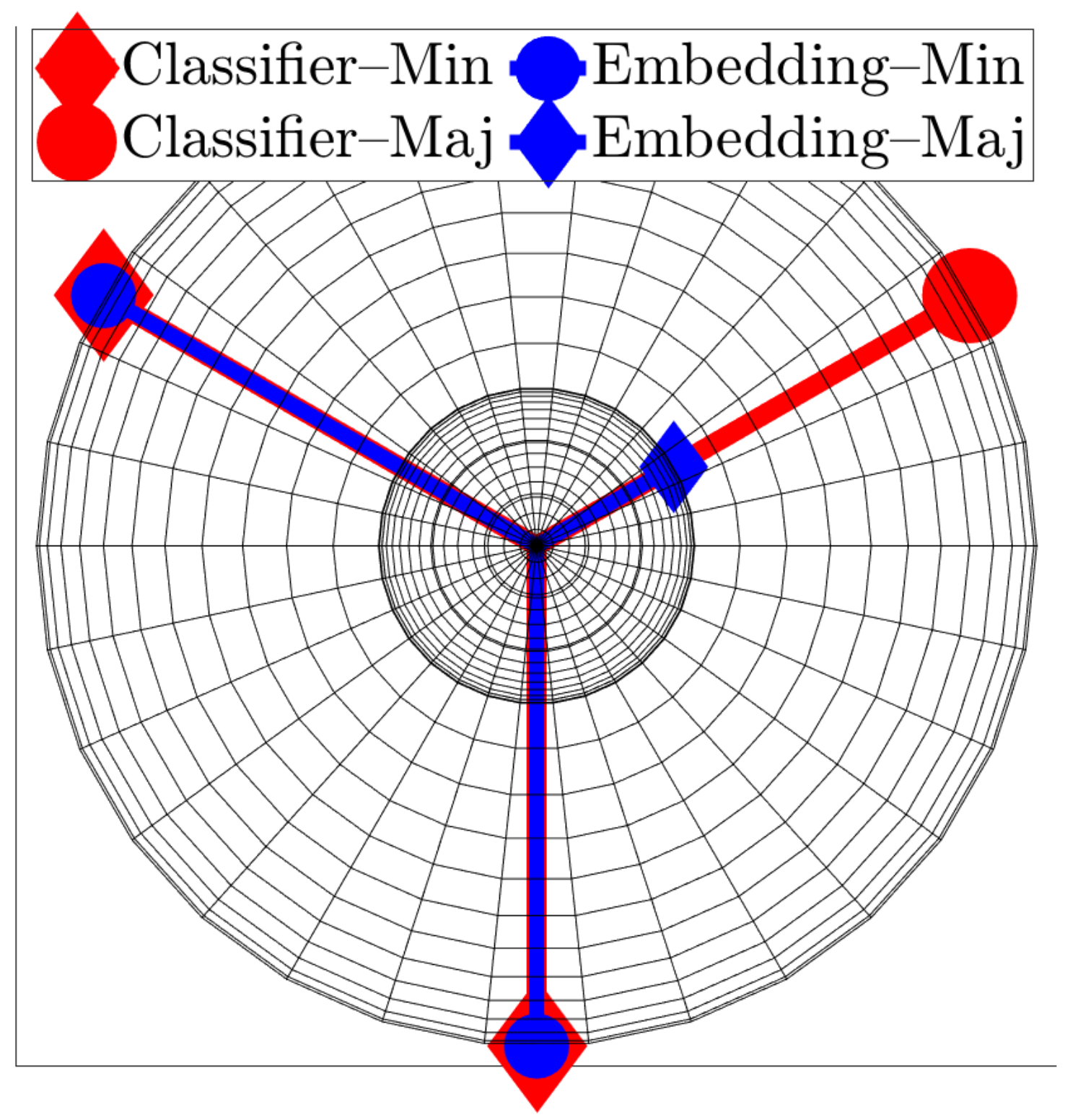}}
{\textbf{LDT}}
&
\hspace{-0.25cm}\subf{\includegraphics[width=0.3\linewidth]{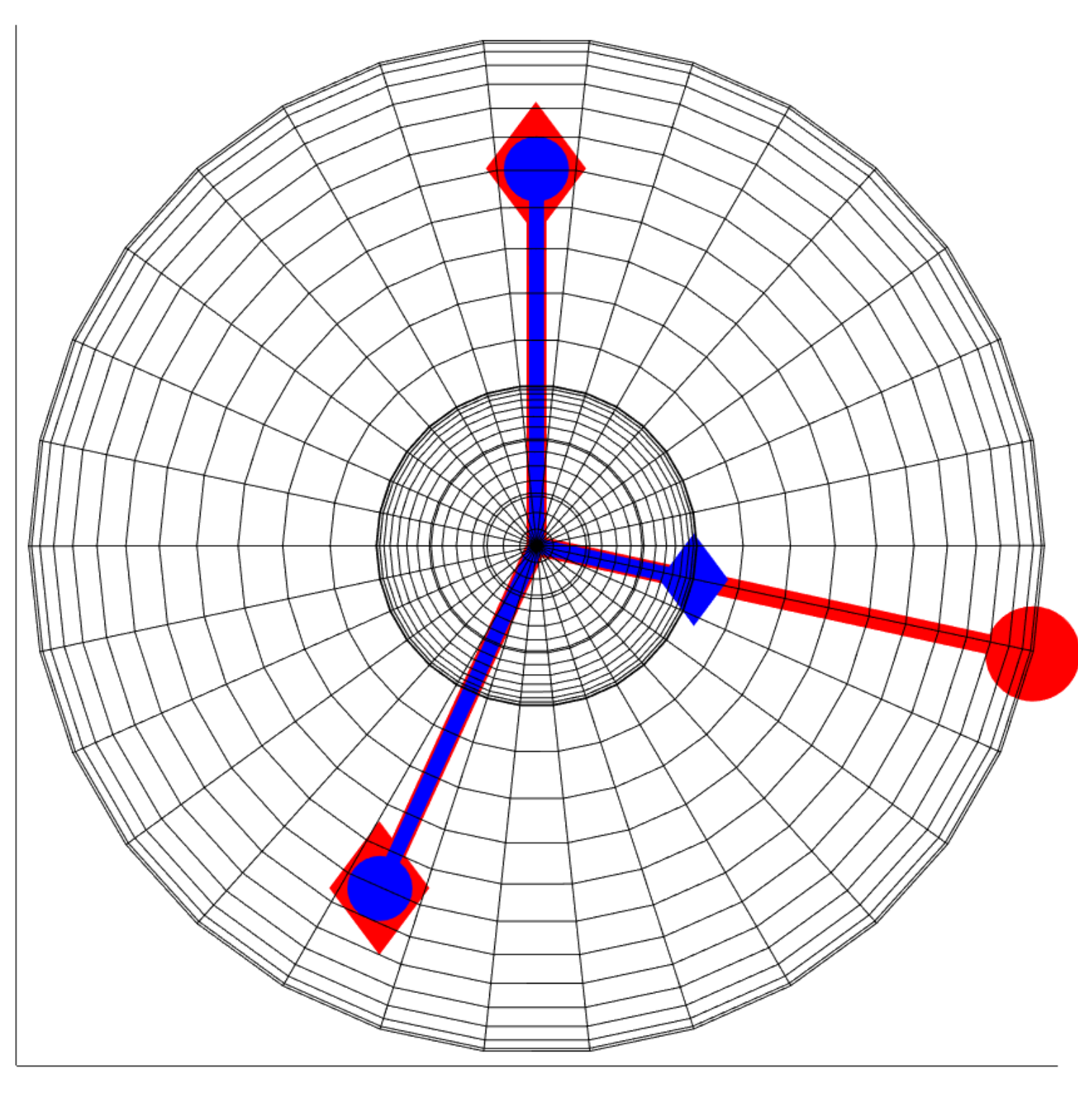}}
{\textbf{CDT}}
\\
\end{tabular}
\vspace{-3pt}
\captionsetup{width=0.95\linewidth}
\caption{{Geometries induced by CE, LDT and CDT for the respective unconstrained features CS-SVM minimizers. $k=3$ with 2 minority and 1 majority classes, imbalance ratio $R=10$ and ratio of hyperparameters $\Delta=\dmaj/\dmin=\sqrt{R}$. See Cors.~\ref{cor:cdt_sqrt_r} and \ref{cor:ldt_sqrt_r}.}}
\label{fig:visual}
\end{figure}


\vspace{3pt}
\noindent\textbf{Mitigating Minority Collapse.}~\cite{fang2021exploring} discovered that when $R\rightarrow\infty$, the minority classifiers collapse, i.e., $\cos(\wmin,\wmin')\rightarrow1$ for any two minority classes. We show here that CDT and LDT losses can mitigate this effect when appropriately tuned. For this, we simply evaluate our closed-form formulas in \eqref{eq:cosmin} in the limit $R\rightarrow\infty$. To obtain non-trivial results, we allow the hyperparameter $\Delta$ to scale with $R$, i.e., set $\Delta=R^\gamma$ for constant $\gamma\in \R.$ This gives the following two results.


\begin{corollary}[Mitigating classifier collapse with LDT]\label{cor:ldt_min_col}
In \eqref{eq:svm_ldt}, set $\Delta={R^{\gamma}}, \gamma \in \R$. Then, as $R\rightarrow\infty$ the minority/majority angles satisfy
\begin{center}
    \begin{tabular}{|c|c|c|c|}
    \hline
    $\cos({\w_c},{\w_c'})$& $\gamma < 1/2$ & $\gamma = 1/2$ & $\gamma > 1/2$ \\
    \hline
    $c, c' \in$ \emph{minority} & $1$ & $-\frac{1}{k-1}$ & $\frac{1-2\sqrt{2}}{1+\sqrt{2}(k-2)}$\\
    \hline
    $c, c' \in$ \emph{majority} & $\frac{1-2\sqrt{2}}{1+\sqrt{2}(k-2)}$ & $-\frac{1}{k-1}$ & $1$ \\
    \hline
\end{tabular}
\end{center}
\end{corollary}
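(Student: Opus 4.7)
My plan is to reduce the corollary to asymptotic analysis of the closed-form angle formulas guaranteed by Thm.~\ref{thm:SVM-VS}(iii). That result says the optimal LDT classifiers $\Wopt$ inherit their Gram structure from the $(\ones_k,\tilde R)$-\SELI~geometry with effective imbalance ratio $\tilde R := R(\dmin/\dmaj)^2 = R^{1-2\gamma}$; since only cosines between classifier pairs are in question, the $\diag(\deltab)$ rescaling applied to $\Mopt$ in the theorem plays no role. Hence every entry of the table can be read off from the closed-form angle formulas of Sec.~\ref{sec:SELI_properties}, of which Eqn.~\eqref{eq:cosmin} already gives the minority-minority case explicitly.

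For the minority row I would substitute $\Delta = R^\gamma$ into Eqn.~\eqref{eq:cosmin} to obtain
$$\cos(\w_\text{min},\w_\text{min}') = \frac{-2R^\gamma + \sqrt{(R+R^{2\gamma})/2}}{(k-2)R^\gamma + \sqrt{(R+R^{2\gamma})/2}},$$
and examine which term inside the square root dominates as $R\to\infty$. For $\gamma < 1/2$ the square root is of order $\sqrt R$ and swamps the $R^\gamma$ corrections in both numerator and denominator, so the ratio tends to $1$. For $\gamma = 1/2$ one has the exact identity $\sqrt{(R+\Delta^2)/2} = \Delta$, and the ratio collapses algebraically to $-1/(k-1)$. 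For $\gamma > 1/2$ the square root is asymptotic to $\Delta/\sqrt 2$; factoring $\Delta$ from numerator and denominator gives the limit $(1-2\sqrt 2)/(1+\sqrt 2(k-2))$, matching the tabulated value.

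The majority-majority row follows from the role-reversal symmetry of the $(\deltab,R)$-\SEL~matrix in Defn.~\ref{def:sel}: simultaneously swapping the majority and minority blocks together with $R\mapsto 1/R$ and $\Delta\mapsto 1/\Delta$ leaves $\Zhat$ invariant up to a column permutation, so the majority-majority cosine is obtained from the minority-minority formula by those substitutions. Equivalently, $\tilde R = R^{1-2\gamma}$ tends to $\infty$, $1$, and $0$ in the three respective regimes, so in the effective $(\ones_k,\tilde R)$-\SELI~geometry the original minorities collapse when $\gamma < 1/2$, the geometry becomes an ETF when $\gamma = 1/2$, and the roles flip when $\gamma > 1/2$ (making the original majorities the collapsing class); the three tabulated majority values then follow by the same three-case analysis. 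The main technical obstacle is establishing the symmetry cleanly (or alternatively, extracting the majority-majority closed form directly from the SVD of $\Zhat$ in Defn.~\ref{def:sel}); once this is in place the rest is routine limit computation leveraging Thm.~\ref{thm:SVM-VS}.
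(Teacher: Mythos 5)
Your proposal is correct and takes essentially the same route as the paper: Thm.~\ref{thm:SVM-VS}(iii) reduces the LDT classifiers to the $(\ones_k,\tilde R)$-SELI geometry with $\tilde R=R^{1-2\gamma}$ (the $\diag(\deltab)$ factor indeed being irrelevant for classifier cosines), and the paper then evaluates the closed-form angle formulas of Cor.~\ref{cor:ldt_norms_angles} at $\Delta=R^\gamma$ and sends $R\to\infty$, which is exactly your three-case limit computation for the minority row. The only difference is cosmetic: for the majority row the paper directly uses the closed form $\cos(\wmaj,\wmaj')=\bigl(-2\sqrt R+\sqrt{(R+\Delta^2)/2}\bigr)\big/\bigl((k-2)\sqrt R+\sqrt{(R+\Delta^2)/2}\bigr)$ from Sec.~\ref{sec:ldt_geo} rather than the majority/minority role-reversal symmetry you invoke, but your symmetry claim is consistent with (and recovers) that formula via $R\mapsto 1/R$, $\Delta\mapsto 1/\Delta$ in \eqref{eq:cosmin}.
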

\vspace{1pt}
\begin{corollary}[Mitigating classifier collapse with CDT]\label{cor:cdt_min_col}
In \eqref{eq:svm_cdt}, set $\Delta={R^{\gamma}}, \gamma \in \R$. Then, as $R\rightarrow\infty$ the minority/majority angles satisfy 
\begin{center}
    \begin{tabular}{|c|c|c|c|}
    \hline
    $\cos({\w_c},{\w_c'})$& $\gamma < 1/6$ & $\gamma = 1/6$ & $\gamma > 1/6$ \\
    \hline
    $c, c' \in$ \emph{minority} & $1$ & $0$ & $-\frac{2}{k-2}$\\
    \hline
    \hline
    $\cos({\w_c},{\w_c'})$& $\gamma < 0$ & $\gamma = 0$ & $\gamma > 0$ \\
    \hline
    $c, c' \in$ \emph{majority} & $-\frac{2}{k-2}$ & $\frac{1-2\sqrt{2}}{1+\sqrt{2}(k-2)}$ & $0$\\
    \hline
\end{tabular}
\end{center}
\end{corollary}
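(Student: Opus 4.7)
The plan is to apply the explicit closed-form angle formulas that follow from the $(\deltab,R)$-\SELI~characterization of Theorem~\ref{thm:SVM-VS}(ii), and then perform a regime-by-regime asymptotic analysis as $R\to\infty$ with $\Delta=R^\gamma$. The minority-minority cosine for CDT is already recorded in \eqref{eq:cosmin}; an analogous closed-form for $\cos(\wmaj,\wmaj')$ follows from the same recipe, since Defn.~\ref{dfn:seli} dictates $\Wopt^T\Wopt\propto\Vb\Lambdab\Vb^T$ and its majority-majority block can be read off the SVD of the $(\deltab,R)$-\SEL~matrix $\Zhat$ from Sec.~\ref{sec:SELI_properties}.

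The first step is to substitute $\Delta=R^\gamma$ into \eqref{eq:cosmin} and analyze the single quantity $T_{\minor}(R):=2\sqrt{R+1}\,(1+R^{2\gamma})^{-3/2}$. A direct asymptotic expansion gives $T_{\minor}\sim 2R^{\,1/2-3\max(\gamma,0)}$ up to a constant factor, so $T_{\minor}\to\infty$ for $\gamma<1/6$, $T_{\minor}\to 2$ at $\gamma=1/6$, and $T_{\minor}\to 0$ for $\gamma>1/6$. Inserting these three limits into the linear-fractional expression $(-2+T_{\minor})/((k-2)+T_{\minor})$ produces the values $1$, $0$, and $-2/(k-2)$ respectively, matching the minority row of the table.

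The second step is to carry out the analogous substitution in the majority-majority formula. By the block structure of $\Zhat$ (Kronecker factors $\ones_{\alpha R}$ on majority columns and $\ones_\alpha$ on minority columns, together with column scalings $\delta_c^{-1}$), the majority cosine takes the same linear-fractional form $(-2+T_{\maj})/((k-2)+T_{\maj})$, but now with $T_{\maj}\propto\sqrt{(R+1)/R}\cdot\Delta^3(1+\Delta^2)^{-3/2}$. Substituting $\Delta=R^\gamma$ one finds $T_{\maj}\to 2$ for $\gamma>0$, $T_{\maj}\to 1/\sqrt{2}$ at $\gamma=0$, and $T_{\maj}\to 0$ for $\gamma<0$. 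Evaluating the linear-fractional expression at these three values yields $0$, $(1-2\sqrt{2})/(1+\sqrt{2}(k-2))$, and $-2/(k-2)$, which is exactly the majority row. As a sanity check, at $R=1,\Delta=1$ both formulas reduce to $-1/(k-1)$, recovering the ETF value.

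The main technical obstacle is the majority-majority formula itself: it must be derived cleanly from the SVD of $\Zhat$ under the STEP pattern of $\deltab$. Because $\Zhat$ has only two distinct column types, $\Zhat\Zhat^T$ (after accounting for the $\diag(\deltab)^{-1}$ scaling) decomposes into a rank-one ``mean'' correction plus two blocks that are constant on the majority and minority index sets, so its spectrum contains at most three distinct eigenvalues and the diagonalization is elementary, though the $\diag(\deltab)^{-1}$ factor needs to be tracked carefully. Once the within-majority block of $\Vb\Lambdab\Vb^T$ has been written out in terms of $R$ and $\Delta$, the limit-taking in the three $\gamma$-regimes is routine algebra.
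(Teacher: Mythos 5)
Your proposal is correct and follows essentially the same route as the paper: Theorem~\ref{thm:SVM-VS}(ii) reduces the problem to the closed-form CDT classifier-angle formulas obtained from the SVD of the $(\deltab,R)$-SEL matrix (the minority one is \eqref{eq:cosmin}, and the majority one, after dividing through by $\sqrt{R}$ and using $(1+\Delta^{-2})^{-3/2}=\Delta^3(1+\Delta^2)^{-3/2}$, is exactly your $(-2+T_{\maj})/((k-2)+T_{\maj})$ with constant $2$), and then one substitutes $\Delta=R^{\gamma}$ and takes $R\to\infty$ regime by regime, which is precisely how the paper (Sec.~\ref{sec:cdt_norm_ang}--\ref{sec:cdt_asymp}) proceeds. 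Your limit bookkeeping ($T_{\minor}\to\infty,2,0$ at $\gamma<1/6,=1/6,>1/6$ and $T_{\maj}\to 0,\,1/\sqrt{2},\,2$ at $\gamma<0,=0,>0$) reproduces the table correctly.
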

From Cor.~\ref{cor:ldt_min_col}, LDT with $\gamma \geq 1/2$ avoids the minority collapse. However, for $\gamma > 1/2$, majority classifiers collapse instead. Thus, we find that $\gamma=1/2$ the only choice that keeps both majority and minority classifiers from collapsing. {In fact, for this choice the angles of majorities and minorities are all equal, as expected by Cor.~\ref{cor:ldt_sqrt_r}.} On the other hand, from Cor.~\ref{cor:cdt_min_col}, CDT avoids minority collapse for any choice of $\gamma \geq 1/6$. Also, in this entire range the majority classifiers do not collapse either. Thus, for $R\rightarrow\infty$, CDT offers a wide tuning range for $\gamma$ that avoids classifier collapse. Compare this to the single value of $\gamma=1/2$ for LDT. This suggests that the CDT geometry is more robust to small changes in the hyperparameter $\gamma$ compared to LDT geometry. 
Specifically for $\gamma=1/2$, when classifiers and features are aligned in both CDT and LDT (see Cors.~\ref{cor:cdt_sqrt_r} and \ref{cor:ldt_sqrt_r}), the CDT minority angles are larger from the LDT angles since ${-2}/{(k-2)}<{-1}/{(k-1)}$; see also Fig.~\ref{fig:visual}. 




\section{Numerical Results}\label{sec:num_results}

\begin{figure*}[t]
	\begin{subfigure}{0.8\textwidth}
	    \centering
            \begin{tikzpicture}
    			\node at (0,0) 
    			{\includegraphics[scale=0.183]{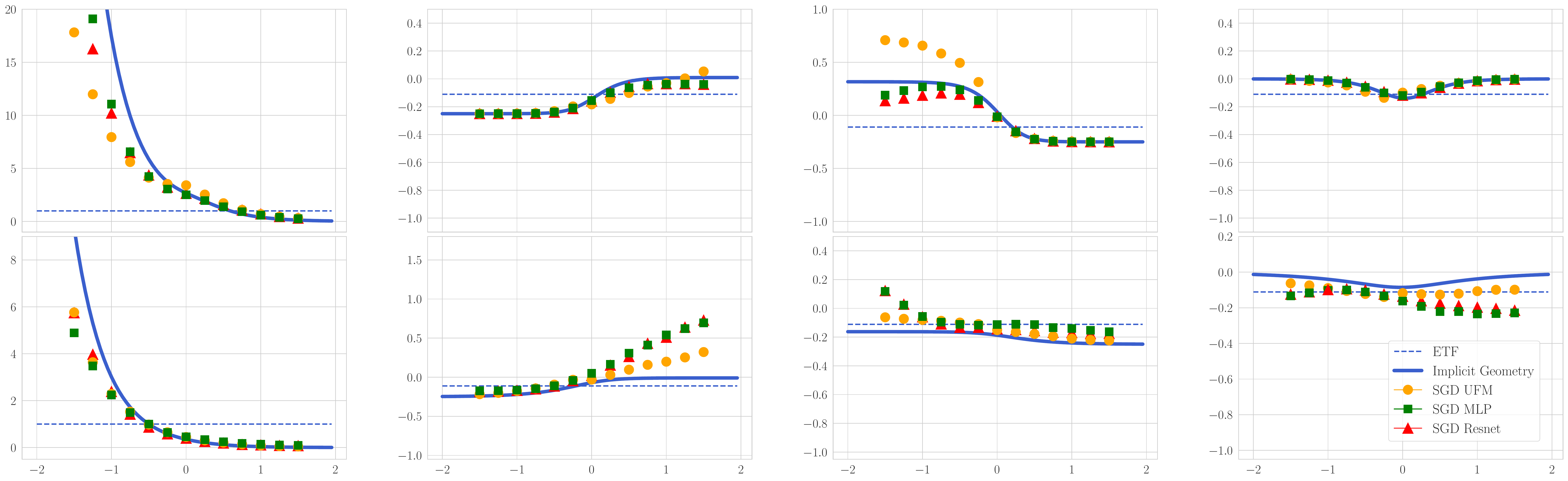}};
    			\node at (-5.7,2.4) [scale=0.7]{\textbf{Norm Ratios}};
    			\node at (-1.95,2.4) [scale=0.7]{\textbf{Majority Angles}};
    			\node at (2.1,2.4) [scale=0.7]{\textbf{Minority Angles}};
    			\node at (5.6,2.4) [scale=0.7]{\textbf{Majority-Minority Angles}};
    			
    			\node at (-7.9,0.0)  [scale=0.75, rotate=90]{\textbf{CDT}};
    			
    			\node at (-7.5,1.25)  [scale=0.6, rotate=90]{$\norm{\wmaj}^2 / \norm{\wmin}^2$};
    			\node at (-7.5,-1.0)  [scale=0.6, rotate=90]{$\norm{\hmaj}^2 / \norm{\hmin}^2$};
    			
    			\node at (-3.85,1.25)  [scale=0.6, rotate=90]{$\cos(\wmaj,\wmaj)$};
    			\node at (-3.85,-1.0)  [scale=0.6, rotate=90]{$\cos(\hmaj,\hmaj)$};
    			
    			\node at (-0.05,1.25)  [scale=0.6, rotate=90]{$\cos(\wmin,\wmin)$};
    			\node at (-0.05,-1.0)  [scale=0.6, rotate=90]{$\cos(\hmin,\hmin)$};
    			
    			\node at (3.75,1.25)  [scale=0.6, rotate=90]{$\cos(\wmaj,\wmin)$};
    			\node at (3.75,-1.0)  [scale=0.6, rotate=90]{$\cos(\hmaj,\hmin)$};
    			
    			
    		\end{tikzpicture}
    	\vspace{-20pt}
    	
	    \centering
		\begin{tikzpicture}
			\node at (0,0) 
			{\includegraphics[scale=0.183]{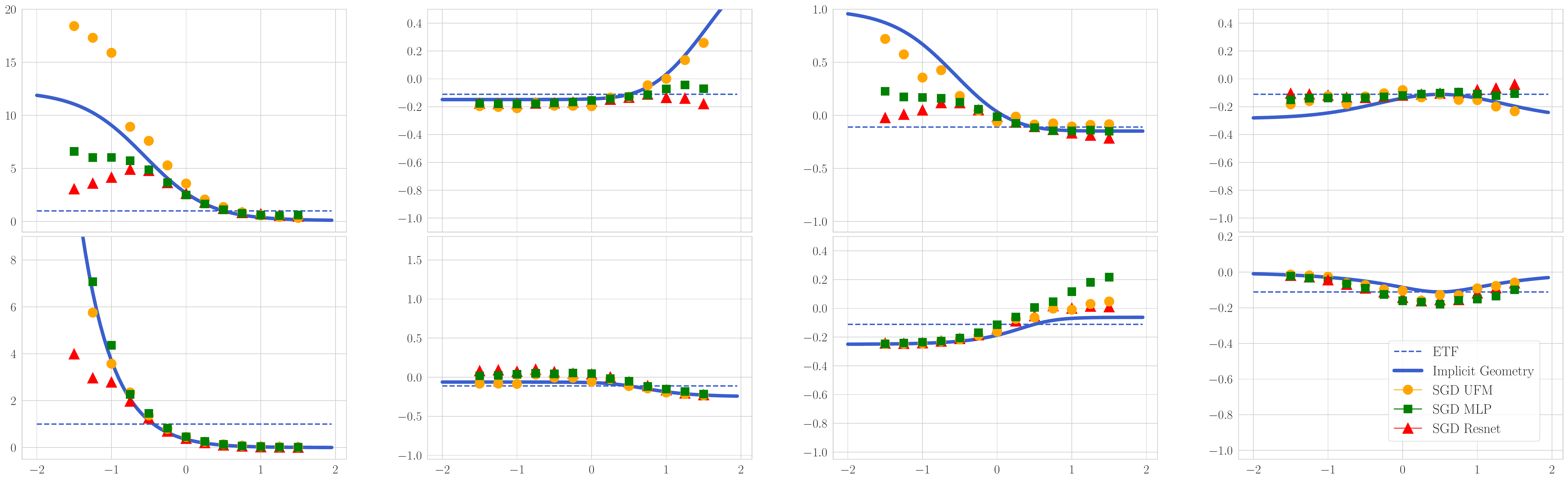}};
				
    			\node at (-7.9,0.0)  [scale=0.75, rotate=90]{\textbf{LDT}};
    			
    			\node at (-7.5,1.25)  [scale=0.6, rotate=90]{$\norm{\wmaj}^2 / \norm{\wmin}^2$};
    			\node at (-7.5,-1.0)  [scale=0.6, rotate=90]{$\norm{\hmaj}^2 / \norm{\hmin}^2$};
    			
    			\node at (-3.85,1.25)  [scale=0.6, rotate=90]{$\cos(\wmaj,\wmaj)$};
    			\node at (-3.85,-1.0)  [scale=0.6, rotate=90]{$\cos(\hmaj,\hmaj)$};
    			
    			\node at (-0.05,1.25)  [scale=0.6, rotate=90]{$\cos(\wmin,\wmin)$};
    			\node at (-0.05,-1.0)  [scale=0.6, rotate=90]{$\cos(\hmin,\hmin)$};
    			
    			\node at (3.75,1.25)  [scale=0.6, rotate=90]{$\cos(\wmaj,\wmin)$};
    			\node at (3.75,-1.0)  [scale=0.6, rotate=90]{$\cos(\hmaj,\hmin)$};
    			
			\node at (-5.6,-2.4) [scale=0.6]{$\gamma$};
			\node at (-1.8,-2.4) [scale=0.6]{$\gamma$};
			\node at (2.0,-2.4) [scale=0.6]{$\gamma$};
			\node at (5.8,-2.4) [scale=0.6]{$\gamma$};
			
		\end{tikzpicture}
	\end{subfigure}

	\vspace{-5pt}
    \captionsetup{width=0.95\linewidth}
	 \caption{Comparison of models trained by SGD (markers) minimizing the CDT \eqref{eq:CDT_loss}/LDT \eqref{eq:LDT_loss} loss and the global minimizers of the CS-SVM in \eqref{eq:svm_cdt}, \eqref{eq:svm_ldt} as given by Thm.~\ref{thm:SVM-VS} (solid line) in a ($10$, $1/2$)-STEP imbalanced setting. The dashed line marks the perfectly symmetric ETF geometry of balanced data \citep{NC}. See Sec.~\ref{sec:app_exp_details} for more details.}
	\label{fig:convergence_to_theory}
\end{figure*}

For both CDT and LDT loss, we examine the convergence of the models trained by SGD to the implicit geometry proposed by Thm.~\ref{thm:SVM-VS}. We train (i) UFM, (ii) MLP on MNIST, and (iii) ResNet18 on CIFAR10. All the models are trained in a $(R=10,\rho=1/2)$-STEP imbalanced setting. We further use STEP logit adjustment, and choose $\Delta=R^\gamma$ with $\gamma\in[-1.5,1.5]$. We train the UFM by minimizing unregularized CDT/LDT, while for MLP and ResNet models, following the setup in \cite{NC}, we use a small weight-decay ($10^{-5}$). We defer other experimental details to Sec.~\ref{sec:app_exp_details}.

{Fig.~\ref{fig:convergence_to_theory} illustrates the empirical geometry discovered by SGD vs the prediction of Thm.~\ref{thm:SVM-VS}. 
For the trained classifiers and embeddings, we compute: (1) squared ratios of majority-minority norms,
 (2) cosine of angles between pairs of majority-majority, minority-minority, majority-minority for classifiers and mean-embeddings.} For each choice of $\gamma$ and loss function, we compute each metric on all the respective pairs, and compare their average to the closed-form expressions that result from Thm.~\ref{thm:SVM-VS} (see Sec.~\ref{sec:SELI_properties}). 

As reported in the figures, the empirical quantities follow the predicted theoretical trends. However, convergence becomes more challenging for the deep-net models, particularly for larger $|\gamma|$. Moreover, we encounter cases with non-zero training error for CDT loss for large $|\gamma|$ values. In addition to $\gamma$, the imbalance ratio $R$ also affects the convergence to theory (see Sec.~\ref{sec:OptIm_Discuss} for details). Further, the theory gives a more accurate prediction of the mean-embeddings' geometry in case of the LDT, and of the classifiers' in case of the CDT loss. This is consistent for both UFM and deep-net models. For LDT, the prediction is well followed by UFM and ResNet empirics around the interesting value of $\gamma=0.5$, with an exception of the majority classifier angles in the ResNet experiments. The mismatch is less severe for the 6-layer MLP. Also, as predicted by the theorem, for  $\gamma = 0.5$ ($\Delta =\sqrt{R}$), the LDT geometry
is the ETF, up to a scaling on the features: In Fig. \ref{fig:convergence_to_theory} the LDT cosine plots intersect with the ETF angles, i.e., $-1/(k-1)$, thus achieving equiangularity and maximal angular separation. The classifier norm ratios also attain the value $1$, which along with the equiangularity describe an ETF structure for classifiers. 
%

While the experiments in Fig.~\ref{fig:convergence_to_theory} correspond to a finite imbalance ratio of $R=10$, there is resemblance to the asymptotic behavior of the classifier angles on LDT-trained UFM. Cor.~\ref{cor:ldt_min_col} suggests $\gamma=0.5$ is the only choice for $R\rightarrow \infty$ that avoids minority or majority classifiers collapsing. A similar trend is seen in Fig.~\ref{fig:convergence_to_theory}, where the cosine of the minority classifiers goes towards $1$ for $\gamma<0.5$, while that of the majority classifiers approaches $1$ for the complementary open interval of $\gamma>0.5$.  {On the other hand, CDT does not attain equiangularity, but  majority and minority angles are well controlled for a wider range of $\gamma$. This suggests that the CDT geometry is more robust to small changes in the hyperparameter $\gamma$ compared to LDT geometry.}

\begin{remark} \label{rem:center}
In all our experiments with CDT and LDT, we center the embeddings before computing norms and angles. This is consistent with centering performed for experiments with balanced data in \cite{NC,zhu2021geometric,seli}. In our case, the exact centering vector is different for each loss function. {Additionally, we have found that centering improves convergence not just in deep-net experiments as in previous works, but also in  UFM experiments. See Sec.~\ref{sec:CDT_center}/\ref{sec:LDT_center} for details on both CDT/LDT losses.}
\end{remark}

\section{On Generalization}\label{sec:test_results}

Up to this point, we have demonstrated that various CE parameterizations lead to distinct implicit geometries for classifiers and embeddings during the training process. In this section, we explore the degree to which these implicit geometries influence performance during testing, or in other words, generalization. Specifically, we provide preliminary results on the generalization of models trained with CDT/LDT losses:  In Sec.~\ref{sec:gmm}, we propose and investigate a simple model that aims to capture the link between generalization and implicit geometry. In Sec.~\ref{sec:test_numerical}, we present preliminary empirical results from experiments on real data, which we compare to our model's predictions.  Finally, in Sec.~\ref{sec:post-hoc},  we demonstrate that our analysis can offer valuable guidance for developing enhanced algorithms by utilizing the implicit geometry for a post-hoc modification of LDT, resulting in improved generalization.

\subsection{Impact of Geometry}\label{sec:gmm}

In order to assess generalization, it is necessary to define the geometry of test embeddings rather than just those from training. This is typically challenging for neural networks in general. Here, we simplify the scenario by assuming that the mean-embeddings during testing are similar to their training counterparts in a way that we formalize below. Using this model, we aim to acquire broader insights into the impact of various implicit geometries on test performance.


We are interested in the balanced test error that weighs all classes equally, unlike the standard error that relies on class priors. This is a standard evaluation metric used in data-imbalanced training regimes in previous works \citep[e.g.,][]{CDT, TengyuMa, Menon, VS,li2021autobalance}. To evaluate the balanced test error we assume that the embeddings are concentrated around their class-means with some small variance. 
Specifically, suppose $(\W,\M)$ are the classifiers and mean-embeddings induced by CDT/LDT loss at the end of training. We assume that the emeddings $\h\in\R^d$ belonging to class $c\in[k]$ follow an isotropic Gaussian distribution with mean $\mub_c$ and variance $\sigma_c^2$, i.e., $\h|(y=c) \sim \mathcal{N}(\mub_c, \sigma_c^2\,\Id_d)$. With this assumption, the balanced error rate can be found as follows,
\begin{align}\label{eq:error_rate}
    \mathcal{R}_{\text{bal}} = \frac{1}{k} \sum_{y\in[k]}\, \mathbb{P}_{\h\sim \mathcal{N}(\mub_y,\sigma_y^2\,\Id_d)} \big(\max_{c\neq y} \,(\w_c - \w_y)^T\h\geq 0 \big). 
\end{align}
In this model, we assume that the degree of within-class variation $\sigma_c^2$ depends only on the class size at the training stage. Specifically, we assume $\sigma_c^2 \propto 1/n_c ^ \alpha$ for some $\alpha>0$.\footnote{{We have empirically verified the approximate log-linear dependence of $\sigma_c^2$ on $\text{log}(R)$ on models trained with CE under different imbalance ratios on CIFAR10.}} Equivalently, under the STEP imbalance assumption, we model the variations as follows,
\begin{align*}
    \sigma_c^2 \propto \begin{cases}
    1, \quad & \text{if $c$ is a majority class}\\
    R^\alpha, \quad & \text{if $c$ is a minority class}
    \end{cases},
\end{align*}
where $\alpha$ models the impact of the imbalance ratio on the embeddings from minority classes. {In other words, the embeddings from majority classes are more concentrated around their means, while the embeddings from minority classes spread more as the imbalance ratio $R$ increases.}
To only capture the role of the geometry, we keep the total SNR of the model fixed by scaling the mean-emebddings to ensure $$\frac{\norm{\mub_{\text{maj}}}_2^2}{\sigma^2_\text{maj}} + \frac{\norm{\mub_{\text{minor}}}_2^2}{\sigma^2_\text{minor}} = \frac{\norm{\mub_{\text{maj}}}^2_2}{1} + \frac{\norm{\mub_{\text{minor}}}^2_2}{ R^\alpha} = \text{constant},$$
{across different geometries}. To compare the optimality of $(\deltab,R)$-SELI geometries, we compute the error rate \eqref{eq:error_rate} by Monte-Carlo simulations. {In Fig.~\ref{fig:GMM}, for $R=10,\,k=10$ and $\alpha=1$, we illustrate the error rate for the geometries induced by CDT (left) and LDT (right) loss for different values of $\Delta=R^\gamma$. We can obtain analogous results for other values of $R$, $k$ and $\alpha$.} Varying $\Delta$ introduces a trade-off between the error on the majority and minority classes. Specifically, as $\Delta$ increases (i.e., we assign relatively larger $\delta_c$ to majority classes), the model classifies the minority classes more accurately (despite their larger within-class variance). On the other hand, the error on majorities raises significantly. {The geometry with the lowest error is achieved by  $\gamma \in [0,1]$. We note that the optimal value of $\gamma$ varies for different choices of $\alpha$.}

\begin{figure}[t]
\centering
\begin{subfigure}{.46\textwidth}
  \centering
  \begin{tikzpicture}
	\node at (0,0.0)
    {\includegraphics[width=\linewidth]{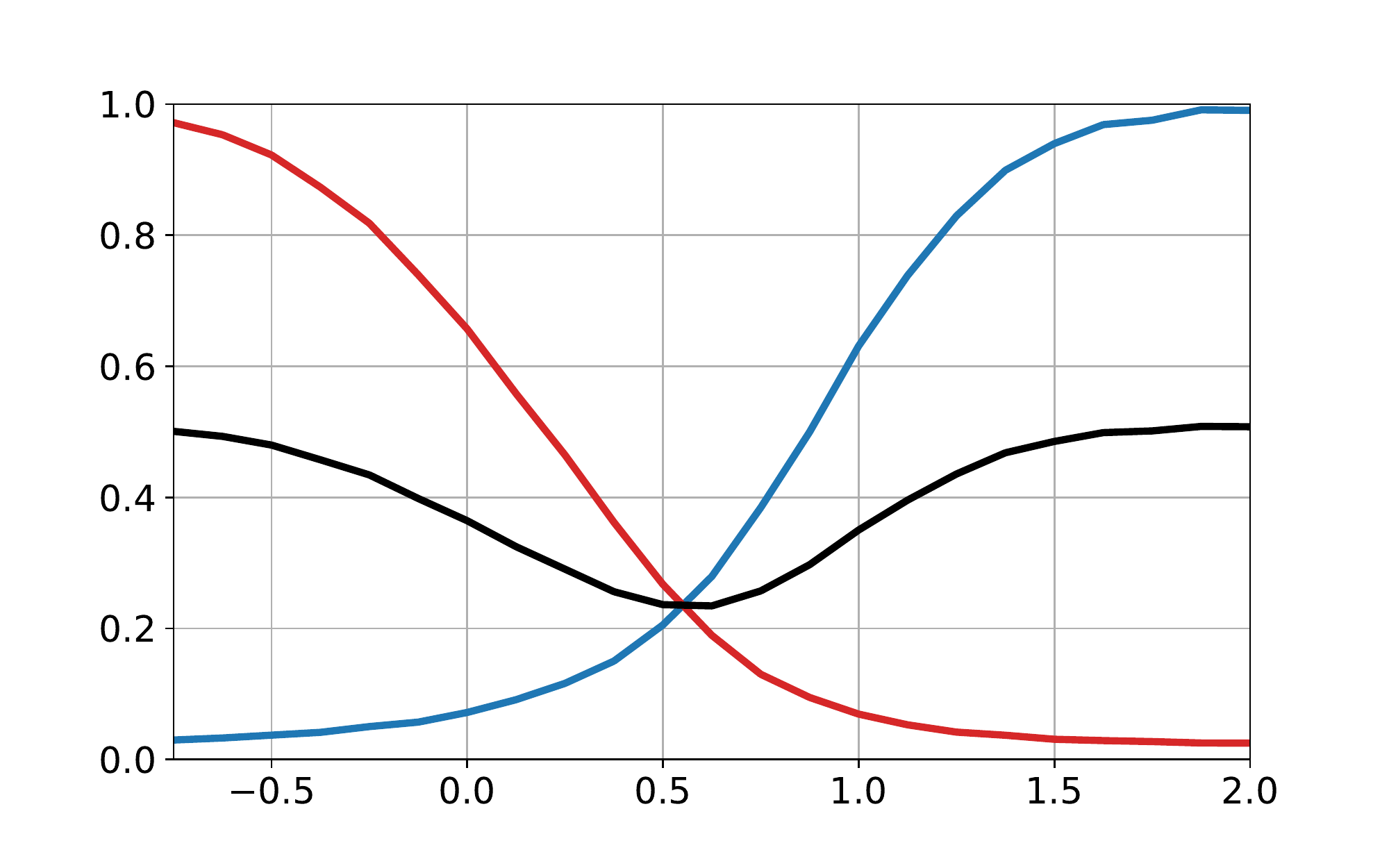}};
	\node at (0.3,-2.3) [scale=0.89]{$\gamma$};
        \node at (0.3,2.0) [scale=0.89]{CDT Loss};
        \node at (-3.5,-0.0)  [scale=0.75, rotate=90]{Error};
  \end{tikzpicture}
\end{subfigure}%
\begin{subfigure}{.46\textwidth}
  \centering
  \begin{tikzpicture}
	\node at (0,0.0)
{\includegraphics[width=\linewidth]{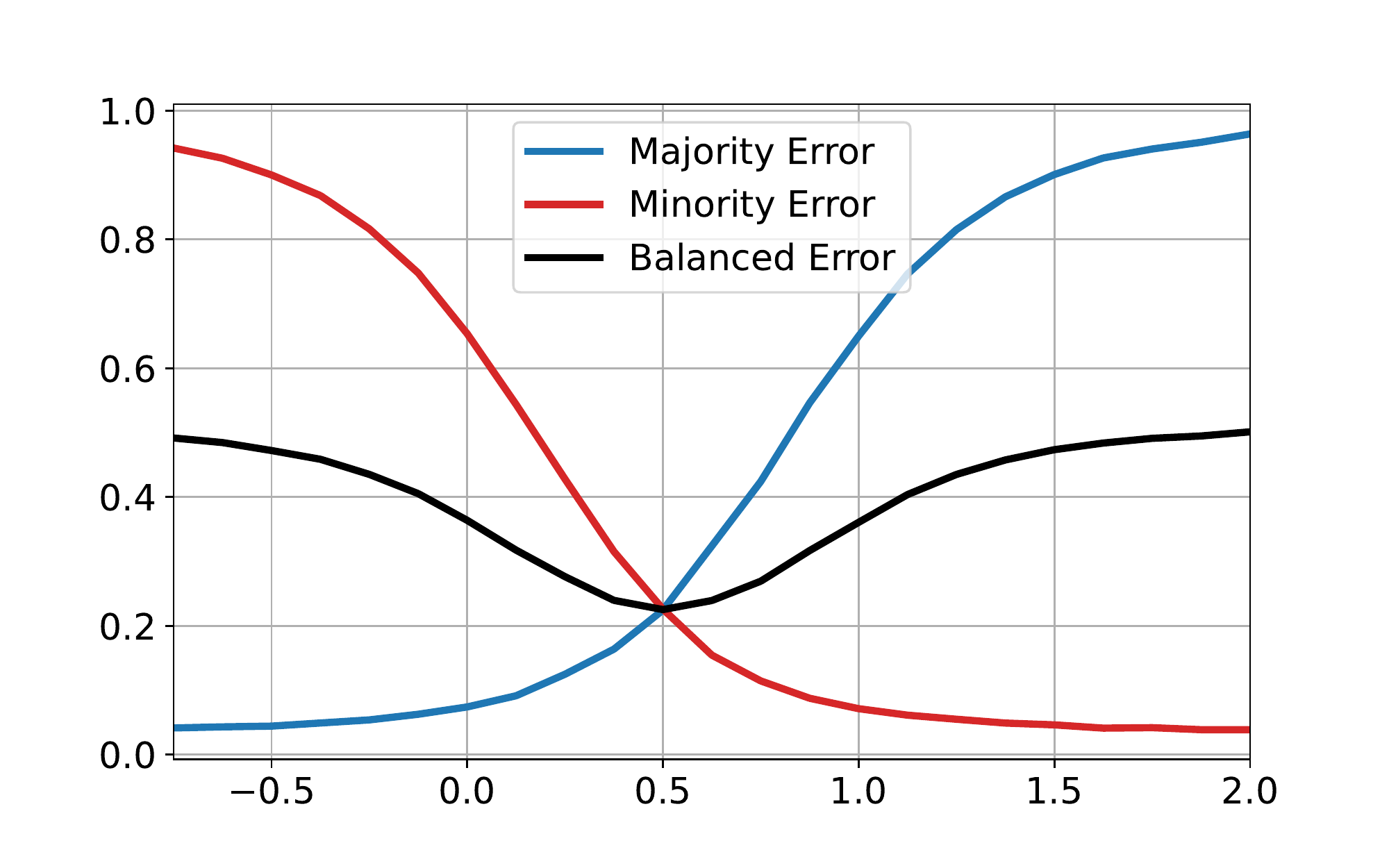}};
	\node at (0.3,-2.3) [scale=0.89]{$\gamma$};
        \node at (0.3,2.0) [scale=0.89]{LDT Loss};
  \end{tikzpicture}
\end{subfigure}%
\captionsetup{width=0.95\linewidth}
\vspace{-0.0cm}
\caption{Classification error of the $(\deltab,10)$-SELI geometry under the Gaussian mixture distribution on the embeddings, described in Sec.~\ref{sec:gmm}. $\gamma$ controls the implicit geometry by  $\Delta=\delta_{\text{maj}}/\delta_\text{minor}=R^\gamma$.}
\label{fig:GMM}
\end{figure}

\subsection{Numerical Results on Generalization}\label{sec:test_numerical}
In this section, we present preliminary empirical observations on the balanced test error $\mathcal{R}_{\text{bal}}$ achieved by minimizing CDT/LDT loss functions. {We compare the test accuracy of CDT, LDT and wCE loss by evaluating the performance of ResNet18 trained on CIFAR10 and of MLP trained on MNIST and Fashion-MNIST for each loss function.} In order to have results comparable to  state-of-the-art, we perform data augmentation as in \cite{TengyuMa,CDT} on all three datasets.
 For all three losses, we control their hyperparameters by a single variable $\gamma$: we choose ${{\omega_\text{minor}}}/{\omega_\text{maj}} := R^\gamma$ for the weights of wCE  and ${\delta_{\text{maj}}}/{\delta_{\text{minor}}} = R^\gamma$ for CDT/LDT loss. {We also normalize the $\delta$ values similar to Sec.~\ref{sec:num_results}} Notice that $\gamma = 0$ represents the CE loss for any choice of loss function. In Fig.~\ref{fig:test_acc_resnet_perclass}, we present the average performance across 10 independent runs for each value of $\gamma$ and for each loss function. 

CDT loss has the best balanced accuracy (averaged across different iterations) on the test set for $\gamma \in \left[0.5,1.0\right]$, with the highest value being $90.35\%$ for $\gamma = 0.75$. On the other hand, LDT loss does not exhibit major improvements compared to CDT. The highest test accuracy for LDT is achieved usually around $\gamma = 0.5$ which according to Cor.~\ref{cor:ldt_sqrt_r}, is the same value that leads to the ETF geometry up to scaling of embedding norms. wCE has the lowest accuracy among the losses with the best test performance being comparable to CE ($\gamma = 0$) across different experiments. 

For CDT, we observe that larger values of $\gamma$ lead to better test performance for minority classes, but worse performance for majorities which is consistent with our analysis in Sec.~\ref{sec:gmm}. However, for LDT, the test accuracy for majority classes does not drop as much with larger $\gamma$ values and the accuracy for minority classes peaks at $\gamma \in [0.5,0.75]$ and drops at either side. This is in contrast to our analysis from Sec.~\ref{sec:gmm} where we expect LDT to have a similar trend to that of CDT's. In addition, Fig.~\ref{fig:test_acc_resnet_perclass} suggests that LDT's test accuracy is less sensitive to $\gamma$ when compared to CDT overall. However, the variation in the range $\gamma \geq 0.5$ is consistent with our analytical observation that LDT's behavior is more sensitive than CDT for $\gamma \geq 0.5$. 
These observations motivate further investigation into the geometry of embeddings during the evaluation phase.  

\begin{figure*}[t]
\centering
\begin{subfigure}{.95\textwidth}
  \centering
  \begin{tikzpicture}
	\node at (0,0.0)
    {\includegraphics[width=0.95\linewidth]{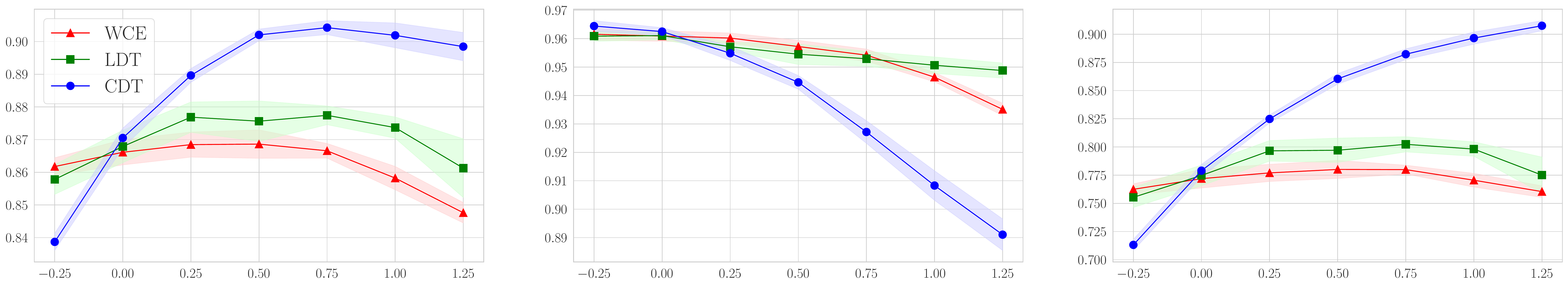}};
	\node at (-4.8,1.5)  [scale=1.0]{\textbf{Balanced Accuracy}};
	\node at (0.2,1.5)  [scale=1.0]{\textbf{Majority Accuracy}};
	\node at (5.0,1.5)  [scale=1.0]{\textbf{Minority Accuracy}};
    \node at (-7.4,-0.0)  [scale=0.8, rotate=90]{\textbf{CIFAR10}};
    \node at (-7.8,-0.0)  [scale=0.8, rotate=90]{\textbf{ResNet }};
  \end{tikzpicture}
  \begin{tikzpicture}
  \node at (0,0.0)
    {\includegraphics[width=0.95\linewidth]{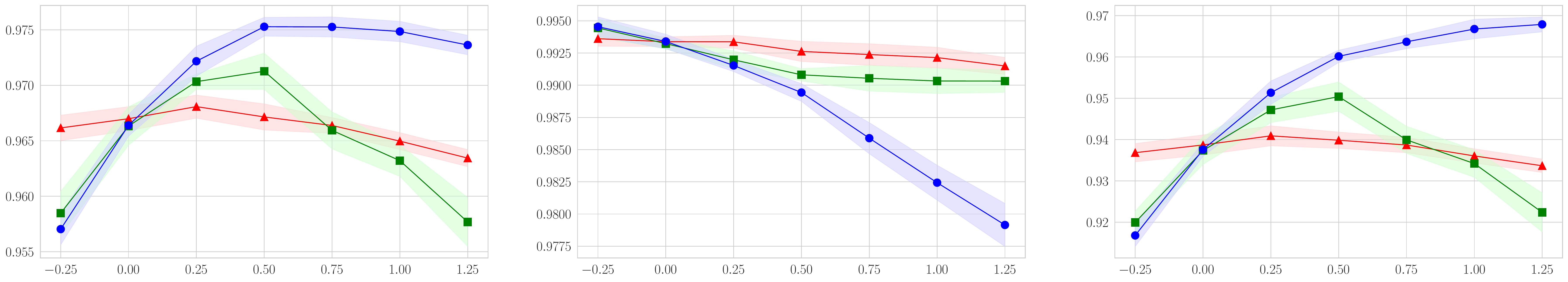}};
    \node at (-7.4,-0.0)  [scale=0.8, rotate=90]{\textbf{MNIST}};
    \node at (-7.8,-0.0)  [scale=0.8, rotate=90]{\textbf{MLP }};
  \end{tikzpicture}
  \begin{tikzpicture}
  \node at (0,0.0)
    {\includegraphics[width=0.95\linewidth]{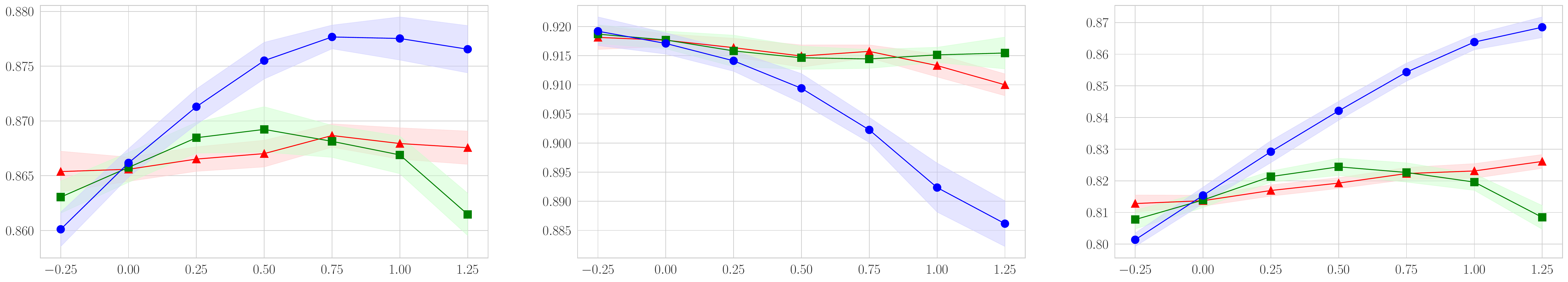}};
    \node at (-7.4,-0.0)  [scale=0.8, rotate=90]{\textbf{Fashion-MNIST}};
    \node at (-7.8,-0.0)  [scale=0.8, rotate=90]{\textbf{MLP }};
    \node at (-0.0, -1.5) [scale=0.9] {$\gamma$}; 
  \end{tikzpicture}
\end{subfigure}%
\vspace{-0.25cm}
\captionsetup{width=0.95\linewidth}
\caption{Test accuracy for ResNet18 and MLP trained on ($R=10$, $\rho$=1/2)-STEP imbalanced CIFAR10, MNIST and Fashion-MNIST using CDT, LDT and wCE losses for different hyperparameter values. We report the average accuracy and its standard deviation over 10 independent runs. Best test accuracies for CDT and LDT are generally achieved at $\gamma = 0.75$ and $\gamma = 0.5$ respectively with wCE showing no major improvement compared to the other two losses. }
\label{fig:test_acc_resnet_perclass}
\end{figure*}

\subsection{Post-hoc Rescaled LDT (R-LDT)}\label{sec:post-hoc}
Through the experiments in Sec. \ref{sec:test_numerical}, we observe that the LDT loss has inferior test performance compared to CDT. In the following section, we show that the knowledge of implicit geometry can be leveraged to design a simple post-hoc ``rescaling'' scheme that boosts the balanced accuracy of LDT trained models.

To motivate the idea, consider the case of $\gamma = 0.5$ to motivate the rescaling scheme. Recall from Cor. \ref{cor:ldt_sqrt_r} that after training with LDT ($\gamma=0.5$), the geometry of embeddings and classifiers form an ETF, up to a scaling factor $\sqrt{R}$ on the majority embeddings. Specifically, the minority mean embeddings are larger in norm than the majority mean embeddings by a factor of $\sqrt{R}$. However, the classifier vectors attain equal norms. To boost minority performance, one can scale down the majority classifier norms by the factor $\sqrt{R}$. Following this idea, our post-hoc algorithm scales the trained majority classifier vectors by $\Delta^\beta = R^{0.5\beta}$, for a tunable $\beta$. We vary $\beta$ in $\left[-1,1\right]$ in steps of $0.25$. The schematic in Fig. \ref{fig:posthoc_ldt_fig} demonstrates the geometric effect of post-hoc rescaling in LDT for $\beta\in \{-1,-0.5,0\}$. Alg. \ref{alg:cap} formally describes the post-hoc rescaling scheme, for the general case where LDT is parameterized by a hyperparameter $\gamma$.

Through experiments, we demonstrate that this technique improves the test performance of LDT-trained models. We perform the experiments for the same setting as in Sec. \ref{sec:num_results} for CIFAR10 with a ResNet18 model.
\begin{figure*}[t]
\centering
\begin{subfigure}{.4\textwidth}
  \centering
    \includegraphics[width=0.95\linewidth]{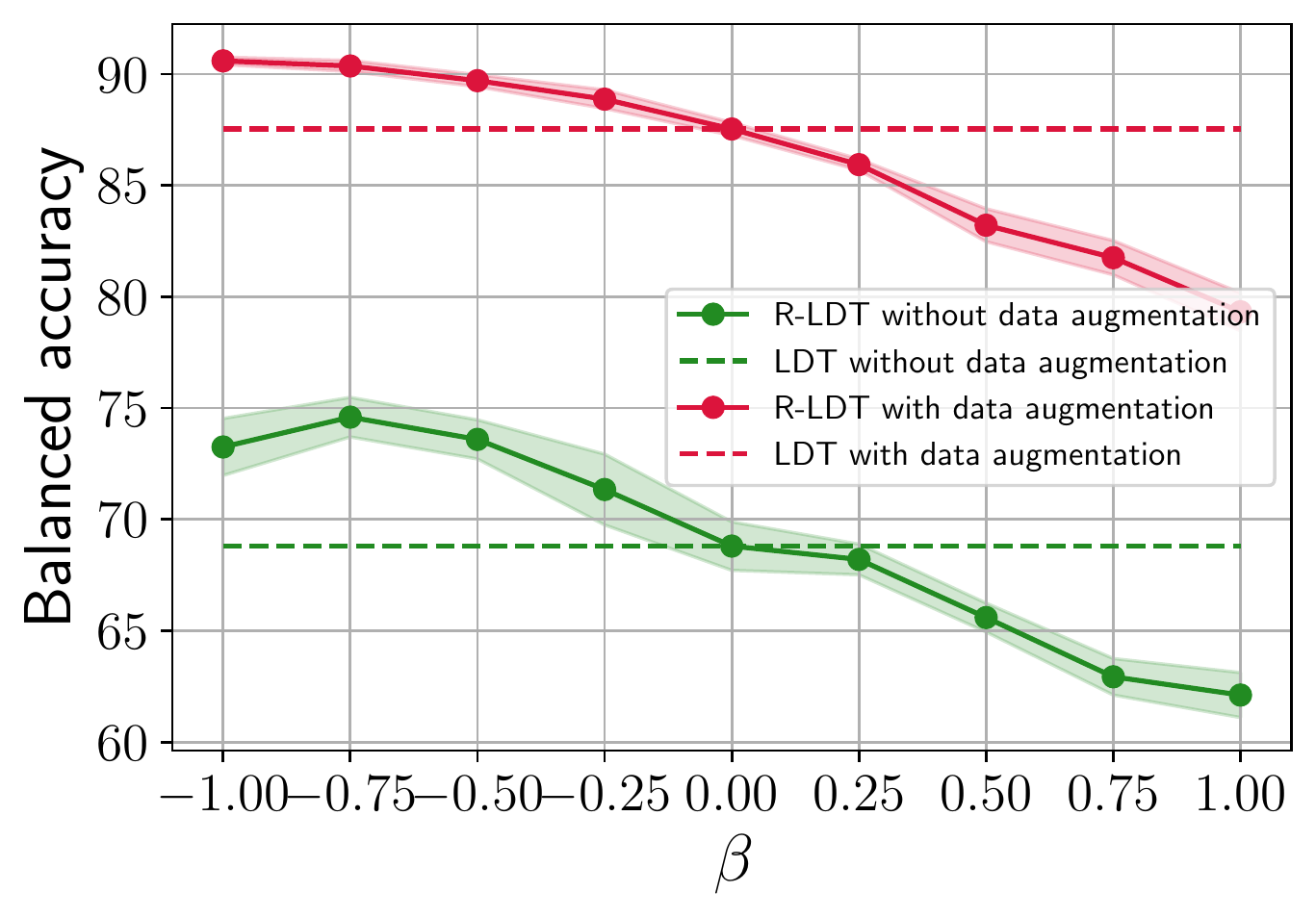}
    \caption{}
    \label{fig:posthoc_ldt_test}
\end{subfigure}%
\begin{subfigure}{.6\textwidth}
    \hspace{0.4cm}\includegraphics[width=0.9\linewidth]{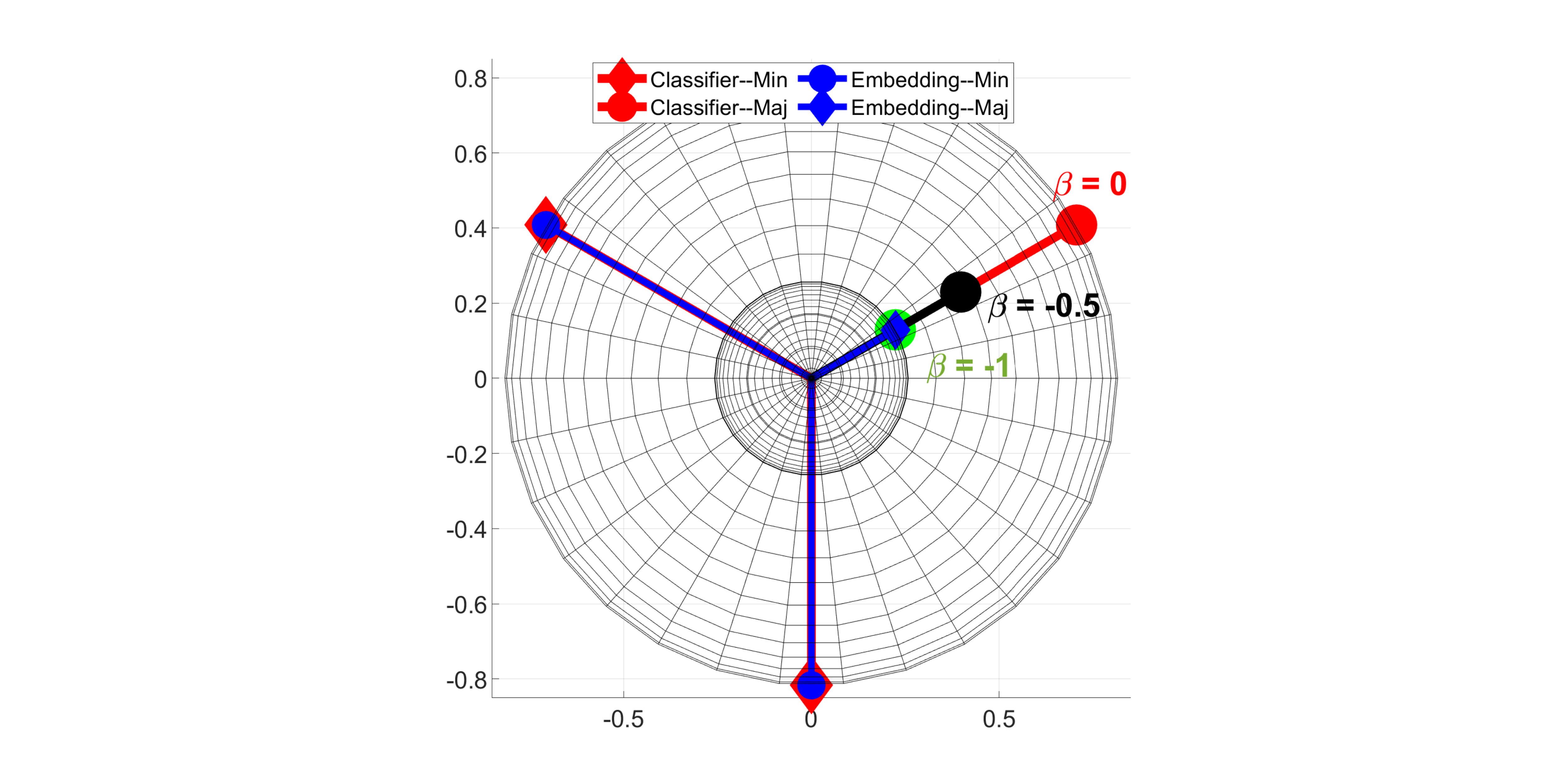}
\caption{}
\label{fig:posthoc_ldt_fig}
\end{subfigure}%
\caption{(a) Balanced test accuracies of post-hoc rescaled LDT for $\gamma=0.5$; (b) The geometric effect of post-hoc rescaling in LDT for $\beta\in \{-1,-0.5,0\}$, with $\gamma = 0.5$. {\color{teal}$\beta=-1$} shrinks the majority classifier vector by $\sqrt{R}$, $\beta=-0.5$ by $R^{1/4}$, while {\color{red}$\beta=0$} retains the LDT geometry.}
\end{figure*}
 Fig. \ref{fig:posthoc_ldt_test} shows the balanced accuracy of this scheme for $\gamma = 0.5$, with and without augmentation on training data. The values indicate the balanced accuracy over classes of CIFAR10 test set, averaged over 5 instances of the experiments. 
 Note that rescaling improves the test performance of LDT trained models. Specifically, our experiments reveal a performance gain of $6\%$ when training without data augmentation, and $3\%$ with data augmentation. The optimal performance is achieved for a value of $\beta=-1$ when using augmentation, and $\beta=-0.75$ when not using augmentation. This fact is consistent with our motivation for scaling down the majority classifiers by $\sqrt{R}$. The above experimental observations give guidance on the geometries that favor generalization, while also suggesting that LDT can be sub-optimal on its own. It is therefore of interest for future studies to design loss functions that attain implicit geometries with better generalization performance without a need for post-hoc rescaling.
 \begin{algorithm}
  \caption{Post-hoc Rescaled LDT (R-LDT)}\label{alg:cap}
  \begin{algorithmic}[1]
    \Procedure{R-LDT}{Imbalance ratio $R$, hyperparameters $\gamma,\beta$}
\State \textbf{Phase}-I
\State Train with LDT loss with $\Delta=R^\gamma$. 
\State Let $\w_1,...,\w_k$ be the resulting classifier vectors
\State \textbf{Phase}-II 
      \For{$c$ \texttt{a majority class}}
        \State $\w_c \gets \w_c \times R^{\beta\gamma}$
      \Comment{Scale down majority classifier}
      \EndFor
      \State \textbf{return} $\w_c, \forall c \in [k]$
    \EndProcedure
  \end{algorithmic}
\end{algorithm}

\section{Concluding Remarks}\label{sec:rel2}

Our paper is motivated by and contributes to two recent thrusts in the literature. The first seeks structural properties of the models learned by deep neural networks trained far beyond the zero-train error regime \citep[e.g.,][]{NC,fang2021exploring,galanti2021role,graf2021dissecting,han2021neural,hui2022limitations,ULPM,lu2020neural,mixon2020neural,tirer2022extended,xie2022neural,zhu2021geometric,zhou2022optimization,seli}. The second one investigates approaches to coping with class imbalances in overparameterized model training \citep[e.g.,][]{byrd2019effect,sagawa2019distributionally,sagawa2020investigation,TengyuMa,kang2020decoupling,KimKim,Menon,CDT,VS,wang2021importance,jitkrittum2022elm}. We already discussed some of the most closely related works within each thrust (as well as a few recent works \citep{fang2021exploring,xie2022neural,yang2022we} at the thrusts' intersection) in the introduction (see paragraph on \emph{Related Work}). 
The goal of this section is to outline main take-aways of our work in the form of both contributions and limitations, together with some pointer for future directions. 


\noindent\textbf{Contributions.}~We extend the scope of the geometry characterizations of the embeddings and classifiers learned by deep-nets  initiated by \cite{NC}. To the best of our knowledge, all prior works study the geometries for either the CE or mean-square loss. Instead, we formulate a more general geometry that describes two alternative CE parameterizations and includes the previous geometries as special cases. Unlike previous works, our new geometry is parameterized in terms of the loss hyperparameters, thus it involves rich structures (in terms of angles and norm-ratios) as these hyperparameters vary. Yet, like in previous works, the geometry is rather simple to describe, either implicitly in terms of a special encoding matrix or explicitly in terms of closed-form formulas for the angles and norms. We arrive at this new geometry by analyzing the simplified unconstrained features-model (specifically, its cost-sensitive version in Eqns. \eqref{eq:svm_cdt},\eqref{eq:svm_ldt}). Thus, we also extend the scope of the UFM model beyond the previously studied CE and square loss. Finally, we undertake an  implicit-geometry view  to  loss modifications for imbalanced learning. Unlike the previously considered implicit-bias view in \cite{byrd2019effect,sagawa2020investigation,VS,wang2021importance}, which is limited to linear (thus, fixed-feature) models and/or binary settings, our approach applies to learned-feature models and multiclass settings.

\noindent\textbf{Limitations.}~In the spirit of previous works \citep{NC,fang2021exploring,seli} that our result builds upon, it also shares some of the same limitations. First, the characterizations of the involved geometries are asymptotic in the number of training epochs. That is, while  as training progresses the classifiers/embeddings geometries are expected to converge to some prescribed limit, this convergence can be (very) slow. The specific convergence behavior that we see for CDT/LDT losses is of similar nature to the convergence for the CE loss in \cite{NC,zhu2021geometric,seli}. For CDT/LDT losses, we also observe that convergence speed can vary significantly for varying values of the hyperparameters. This issue  appears already for the UFM itself and is consistent for deeper architectures and complex data (see Sec.~\ref{sec:OptIm_Discuss}). Second, the level of convergence that can be reached in realistic training settings generally varies between architectures, data models and the loss that is optimized. For example, we find that CDT classifier geometry converges very well to its prescribed limit, but the same is not true for the embeddings geometry for the same loss or for the classifiers geometry for the LDT. Consistently, the experiments in \cite{NC} show different levels of convergence between different metrics (e.g. classifiers vs embeddings, norms vs angles) and different architectures/datasets. 
Third, despite some initial efforts {(including the preliminary results discussed in Sec.~\ref{sec:test_results})}, there is no explicit known link between different geometries and generalization. It is becoming apparent that this is one of the most pressing questions in the emerging literature thrust and we expect more investigations to follow in this direction. 
Finally, similar to \cite{seli}, we rely on the results of \citet{lyu2019gradient,ji2020directional} on convergence of gradient flow in homogeneous networks to the KKT points of the appropriate CS-SVM problems. While the UFM belongs to the category of homogeneous networks, analysis of more complex models could help shed light on other aspects of training deep-nets such as the worse convergence of embeddings. Recent works \citep[e.g.,][]{le2022training,jacot2022implicit}) have considered extending the nature of implicit bias to non-linear networks.

\noindent\textbf{Outlook and future directions.}~While it is important to realize these shortcomings, it is equally important realizing that the quest for implicit geometries is by nature highly non-trivial: we seek geometry characterizations for classifiers and mean-embeddings that are learned by different complex deep architectures over different complex datasets. Specializing to our setting, we further have different losses (LDT vs CDT), different hyperparameters for each loss,  and different imbalance ratios. Paraphrasing \cite{NC}: one might anticipate that the classifier and embeddings being by-product of training in such complex environments display no underlying structure. In view of these, we find the level of agreement of the empirically measured angles/norms to the respective (closed-form) $(\deltab,R)$-SELI geometry values rather striking. For example, see first row of Fig.~\ref{fig:convergence_to_theory}. Similarly,  inspecting Fig. \ref{fig:UFM_CDT_LDT_WCE_Comparison}, why should one expect a priori that there is a single, simple formula parameterized by the loss  hyperparameters that captures the norm-ratio behaviors of the classifiers learned by a 6-layer MLP on MNIST and a ResNet18 on CIFAR10? In view of these, we deem our findings encouraging and supportive of the quest set by the emerging literature on such structural characterizations. At the same time, our findings are suggestive of several research directions that are important investigating further. First, while the UFM has proven powerful to be predictive of behaviors across different levels of imbalances and different losses, a major limitation remains that it does not capture the required centering needed for the embeddings (see Remark \ref{rem:center}). This is a common theme also in previous works and is further highlighted here since in the new geometries the ``correct'' centering, done at a heuristic level in our experiments, is more intricate as it involves scaling with hyperparameter values. Second, while we characterize global minima of the CS-SVMs, it is not yet known whether SGD converges to those minima under all our settings. Third, is it possible to speed up training so that convergence to the asymptotic limits is faster? Finally, more investigations are required both on theory and experiments to distill connections between geometries and generalization. We hope that some of our findings motivate further such investigations, which are otherwise beyond the scope of this paper. 







\printbibliography
\newpage
\clearpage
\appendix
\onecolumn
\addtocontents{toc}{\protect\setcounter{tocdepth}{3}}
\tableofcontents

\newpage
\noindent\textbf{Notation.}~For matrix $\Vb\in\R^{m\times n}$, $\Vb[i,j]$ denotes its $(i,j)$-th entry, $\vb_j$ denotes the $j$-th \emph{column}, $\Vb^T$ its transpose.
$\Vb_{j:k}\in\R^{m\times (k-j+1)}$ chooses columns $j,j+1,\ldots,k$ of $\Vb$, and $\Vb^T_{j:k}\in\R^{n\times (k-j+1)}$ does so on $\Vb^T$.
We denote $\|\Vb\|_F, \|\Vb\|_2$, and, $\|\Vb\|_*$ the Frobenius, spectral, and, nuclear norms of $\Vb$.  $\tr(\Vb)$ denotes the trace of $\Vb$. We use $\Vb \propto \mathbf{X}$ whenever the two matrices are equal up to a scalar constant. {For a vector $\mathbf{v}\in\R^k$, $\text{diag}(\mathbf{v})\in\R^{k\times k}$ is the diagonal matrix with $\mathbf{v}$ on its diagonal.} $\odot$ and $\otimes$ denote Hadammard and Kronecker products, respectively.
We use $\ones_m$ to denote an  $m$-dimensional vector of all ones and $\Id_m$ for the $m$-dimensional identity matrix. For vectors/matrices with all zero entries, we simply write $0$, as dimensions are easily understood from context. $\eb_{j}$ is the $j$-th standard basis vector, a column vector with a single non-zero entry of $1$ in the $j$-th entry. {Finally, we denote the set of positive rational numbers by $\mathbb{Q}_+$.}

\section{Proof of Lemma \ref{lem:motivation}}\label{sec:lemma}
\begin{lemma}[Binary]
    Consider $k=2$, linear model, separable data and minimizing un-regularized LDT/CDT/binary-CE/binary-VS losses. The LDT rule coincides with the classification rule of the binary VS loss assuming same $\delta$-tuning. On the other hand, minimizing  CDT results in  the same classification rule as CE, irrespective of the $\delta$-tuning. 
\end{lemma}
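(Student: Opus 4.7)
The argument relies on a simple observation: in the binary linear case, both the CDT and LDT losses depend on $(\w_1,\w_2)$ only through a single effective vector, so each reduces to one of the losses covered by Proposition~\ref{propo:vs_binary}. Identifying the conserved quantities along the gradient flow from zero initialization then pins down the classification rule.

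\textbf{Step 1: Binary CDT collapses to standard CE.} Label class $y_i=1$ with $\upsilon_i=+1$ and class $y_i=2$ with $\upsilon_i=-1$. Substituting $k=2$ in \eqref{eq:CDT_loss} and grouping terms gives
\begin{align*}
\Lc_\text{CDT}(\w_1,\w_2) = \sum_{i\in[n]} \log\!\big(1 + e^{-\upsilon_i(\delta_1\w_1 - \delta_2\w_2)^T\h_i}\big),
\end{align*}
which depends on $(\w_1,\w_2)$ only through the effective vector $\w := \delta_1\w_1 - \delta_2\w_2$. As a function of $\w$, this is precisely the (unweighted) binary CE loss, so the $\delta$-tuning is immaterial. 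To pin down how the individual $\w_1,\w_2$ evolve under gradient flow, I would compute $\nabla_{\w_1}\Lc_\text{CDT} = -\frac{\delta_1}{\delta_2}\nabla_{\w_2}\Lc_\text{CDT}$, which shows that $\delta_2\w_1 + \delta_1\w_2$ is conserved along the flow. Starting from $\w_1=\w_2=0$, this yields $\w_2 = -(\delta_2/\delta_1)\w_1$ throughout training, hence $\w_1-\w_2 \propto \w_1 \propto \delta_1\w_1 - \delta_2\w_2 = \w$. By the classical implicit-bias result, $\w$ converges in direction to the max-margin SVM solution, independently of $\delta_1,\delta_2$. The classification rule $\arg\max_{c\in\{1,2\}} \w_c^T\h$ reduces to the sign of $(\w_1-\w_2)^T\h$, and we just argued this is the direction of the CE max-margin classifier regardless of $\deltab$.

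\textbf{Step 2: Binary LDT collapses to binary VS.} Substituting $k=2$ in \eqref{eq:LDT_loss} and using the same convention for $\upsilon_i$ gives
\begin{align*}
\Lc_\text{LDT}(\w_1,\w_2) = \sum_{i\in[n]} \log\!\big(1 + e^{-\delta_{\upsilon_i}\upsilon_i(\w_1-\w_2)^T\h_i}\big),
\end{align*}
which depends on $(\w_1,\w_2)$ only through $\w := \w_1 - \w_2$, and is literally the binary VS loss of Proposition~\ref{propo:vs_binary} (with any additive $\iota$'s set to $0$). Repeating the conservation analysis, the gradients now satisfy $\nabla_{\w_1}\Lc_\text{LDT} = -\nabla_{\w_2}\Lc_\text{LDT}$, so $\w_1+\w_2$ is conserved. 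From zero initialization, $\w_2 = -\w_1$ throughout, and hence $\w_1-\w_2 = 2\w_1$. Invoking Proposition~\ref{propo:vs_binary}, $\w_1-\w_2$ converges in direction to the CS-SVM classifier $\arg\min \|\w\|_2$ subject to $\upsilon_i\delta_{\upsilon_i}\w^T\h_i\geq 1$. The LDT classification rule $\arg\max_{c\in\{1,2\}} \w_c^T\h$ is therefore the same as that of the binary VS loss with the same $\delta$-tuning.

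\textbf{Main obstacle.} The only subtle point is not a calculational one but a conceptual one: the CDT/LDT losses individually leave $(\w_1,\w_2)$ underdetermined on separable data, so the ``classification rule'' is well-defined only after specifying the optimization dynamics. The argument hinges on identifying the correct conservation law for each loss (namely $\delta_2\w_1+\delta_1\w_2$ for CDT and $\w_1+\w_2$ for LDT) so that, starting from zero initialization, the direction of $\w_1-\w_2$ is uniquely determined and matches the respective binary loss. This is also consistent with the standard implicit-bias framework used in Proposition~\ref{propo:vs_binary}, so no new machinery is needed beyond what the paper already cites.
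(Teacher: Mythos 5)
Your proof is correct, but it takes a genuinely different route from the paper's. The paper first invokes the implicit-bias results to replace all four losses by their max-margin problems (the linear CDT/LDT analogues of \eqref{eq:linear_cdt_svm}--\eqref{eq:linear_vs_binary}), then proves via a symmetrization/perturbation argument (Lem.~\ref{lem:linear_centering}) that any CDT/LDT max-margin solution satisfies $\delta_1^{-1}\w_1+\delta_2^{-1}\w_2=0$, resp.\ $\w_1+\w_2=0$, which lets it collapse each two-vector SVM to the binary CE-SVM or CS-SVM and compare solutions directly. You instead observe that the binary CDT/LDT losses depend on $(\w_1,\w_2)$ only through an effective vector, and recover exactly the same centering relations as \emph{conserved quantities} of gradient flow from zero initialization (your $\delta_2\w_1+\delta_1\w_2=0$ is the paper's $\delta_1^{-1}\w_1+\delta_2^{-1}\w_2=0$ up to scaling by $\delta_1\delta_2$), before invoking the binary implicit-bias results (Prop.~\ref{propo:vs_binary} and the CE max-margin result) on the effective vector. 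Both proofs hinge on the same algebraic identity; yours derives it dynamically and is arguably more self-contained, while the paper's derives it as a property of the SVM minimizers themselves, which makes the statement initialization-free and fits the max-margin framework used throughout the paper. Two small points you should note: (i) your conclusion as written is conditioned on zero (or symmetric) initialization; this restriction can be removed by observing that the conserved null-space component stays bounded while the margin-relevant component diverges on separable data, so it does not affect the limiting direction of $\w_1-\w_2$ — worth a sentence if you want the same generality as the paper; (ii) the induced dynamics on the effective vector is only a constant time-rescaling of the binary gradient flow (e.g.\ $\der\w/\dert=-(\delta_1^2+\delta_2^2)\nabla_\w\Lc_{\CE}$ for CDT), which is why the binary implicit-bias results apply verbatim — stating this explicitly would close the only gap in the appeal to those results.
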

\begin{proof}
    Let $\W^\CDT$, $\W^\LDT \in\R^{d\times 2}$ denote the CDT and LDT classifiers respectively. The corresponding classification rules is:
    $
    (\w_1-\w_2)^T\x  \mathrel{\mathop\gtrless\limits^{\hat{y}(\W) = 1}_{\hat{y}(\W) = 2}} 0
    $
    for $\W=\begin{bmatrix}\w_1,\w_2\end{bmatrix}$ either $\W^\CDT$ or $\W^\LDT$, respectively. 
    On the other hand, the CE or binary VS loss decision rule is 
    $
         \x^T\w_* \mathrel{\mathop\gtrless\limits^{\hat{\upsilon}(\w_*) = 1}_{\hat{\upsilon}(\w_*) = -1}} 0,
    $
    where $\w_*$ denotes a minimizer of either the CE or the binary VS loss. Here, we use $\upsilon\in\{\pm1\}$ to denote the label encoding for binary CE loss, differentiating from the multiclass encoding $y\in\{1,2\}$ above. From the above two, we conclude that $\w_\star=\alpha(\w_1-\w_2), \alpha>0$ implies $\hat{y}(\W)=1\Longleftrightarrow\hat{\upsilon}(\w_*)=1$ (eqv. $\hat{y}(\W)=2\Longleftrightarrow\hat{\upsilon}(\w_*)=-1$).
    
    Since we minimize all losses without regularization and data are separable, it suffices by implicit bias \citep{soudry2018implicit,VS} to consider the solutions to the corresponding max-margin problems, i.e.,
    \begin{subequations}
    \begin{align}
        \W^\CDT&:=\arg\min_\W\,\|\W\|_F^2\quad\text{subj. to}~~(\delta_{y_i}\w_{y_i}-\delta_c\w_c)^T\h_i\geq 1, c\neq y_i
        \label{eq:linear_cdt_svm}
        \\
        \W^\LDT&:=\arg\min_\W\,\|\W\|_F^2\quad\text{subj. to}~~\delta_{y_i}(\w_{y_i}-\w_c)^T\h_i\geq 1, c\neq y_i \label{eq:linear_ldt_svm}
        \\
        \w_\star^\CE&:=\arg\min_\w\,\|\w\|_2^2\quad\text{subj. to}~~\upsilon_i\w_\star^T\h_i\geq 1,
        \label{eq:linear_ce_binary}
        \\
        \w_\star^\text{VS,binary}&:=\arg\min_\w\,\|\w\|_2^2\quad\text{subj. to}~~\upsilon_i\delta_{\upsilon_i}\w_\star^T\h_i\geq 1,
        \label{eq:linear_vs_binary}
    \end{align}
    \end{subequations}
    
    First, we show $\hat{y}(\W^\LDT)=1\Longleftrightarrow\hat{\upsilon}(\w_*^\text{VS,binary})=1$ provided that the LDT and VS loss parameters are matching, i.e. $\delta^\LDT_{1}=\delta^\text{VS,binary}_{1}$ and $\delta^\LDT_{2}=\delta^\text{VS,binary}_{-1}$.  This follows from the fact that $\w_1^\LDT+\w_2^\LDT=0$ (see Lem.~\ref{lem:linear_centering}). Thus, the minimization in \eqref{eq:linear_ldt_svm} does not change by adding the constraint $\w_2=-\w_1.$ But then, the solution set of \eqref{eq:linear_ldt_svm} is the same as the solution set of the minimization 
    $$
    \min_{\w_1}\,\|\w_1\|^2\quad\text{subj. to}~~\begin{cases}
    2\delta_1\w_1^T\h_i\geq 1 &i: y_i=1
    \\
    -2\delta_2\w_1^T\h_i\geq 1 & i: y_i=2
    \end{cases}.
    $$
    Comparing this to \eqref{eq:linear_vs_binary}, it follows immediately that $\w_1^\LDT=\w_\star^\text{VS,binary}/2.$ Hence, $\w_1^\LDT-\w_2^\LDT=\w_\star^\text{VS,binary},$ which proves the desired. 
    
    
    Second, we show that $\hat{y}(\W^\CDT)=\hat{y}(\w_*^\text{CE}).$ This is a consequence of the fact that $\delta_1^{-1}\w_1^\CDT+\delta_2^{-1}\w_2^\CDT = 0$ (see Lem.~\ref{lem:linear_centering}). Indeed, we then have that the solution set of \eqref{eq:linear_cdt_svm} does not change by adding the constraint $\w_2=-(\delta_2/\delta_1)\w_1$. But then, optimization is equivalent to:
    $$
    \min_{\w_1}\,\|\w_1\|^2\quad\text{subj. to}~~\begin{cases}
    (\delta_1\w_1-\delta_2\w_2)^T\h_i = (\delta_1+\delta_2^2/\delta_1)\w_1^T\h_i\geq 1 &i: y_i=1
    \\
    (\delta_2\w_2-\delta_1\w_1)^T\h_i = -(\delta_1+\delta_2^2/\delta_1)\w_1^T\h_i\geq 1 & i: y_i=2
    \end{cases}.
    $$
    Comparing this to \eqref{eq:linear_ce_binary}, we find that $\w_1^\CDT=\frac{\delta_1}{\delta_1^2+\delta_2^2}\w_\star^\CE.$ Thus also, $\w_2^\CDT=-\frac{\delta_2}{\delta_1^2+\delta_2^2}\w_\star^\CE$. In conclusion, $\w_1^\CDT-\w_2^\CDT=\frac{\delta_1+\delta_2}{\delta_1^2+\delta_2^2}\w_\star^\CE$, from which the desired follows since $\delta_1,\delta_2>0.$ 

\end{proof}

\begin{lemma}\label{lem:linear_centering}
    For the CDT/LDT-SVM classifiers $\W^\CDT,\W^\LDT$ defined in \eqref{eq:linear_ldt_svm} and \eqref{eq:linear_cdt_svm}, it holds that $\w_1^\LDT+\w_2^\LDT=0$ and $\delta_1^{-1}\w_1^\CDT+\delta_2^{-1}\w_1^\CDT=0$.
\end{lemma}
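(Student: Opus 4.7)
The plan is to exploit the translation/shift symmetry that each CS-SVM inherits from its constraints, and then invoke strict convexity of the squared Frobenius norm to conclude that the ``centering'' vector must vanish at the optimum. Both parts follow the same template, with only the admissible shift direction differing.

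\textbf{LDT part.}~ First, I observe that in \eqref{eq:linear_ldt_svm} the constraints only involve the differences $\w_{y_i}-\w_c$, so they are invariant under the shift $(\w_1,\w_2)\mapsto(\w_1-\vb,\w_2-\vb)$ for any $\vb\in\R^d$. Consequently, if $\W^\LDT=[\w_1^\LDT,\w_2^\LDT]$ is optimal, then for every $\vb$ the shifted pair is still feasible, and the function
\[
\vb \;\mapsto\; \|\w_1^\LDT-\vb\|_2^2+\|\w_2^\LDT-\vb\|_2^2
\]
must be minimized at $\vb=0$. Setting its gradient to zero gives $\w_1^\LDT+\w_2^\LDT=0$.

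\textbf{CDT part.}~ Here the constraints involve $\delta_{y_i}\w_{y_i}-\delta_c\w_c$, so they are invariant under any shift $(\w_1,\w_2)\mapsto(\w_1+\vb_1,\w_2+\vb_2)$ with $\delta_1\vb_1=\delta_2\vb_2$. I would parametrize this one-dimensional-per-coordinate family by a free vector $\vb$ via $\vb_1=\vb/\delta_1,\ \vb_2=\vb/\delta_2$. Then the same optimality-at-$\vb=0$ argument applied to
\[
\vb \;\mapsto\; \bigl\|\w_1^\CDT+\vb/\delta_1\bigr\|_2^2+\bigl\|\w_2^\CDT+\vb/\delta_2\bigr\|_2^2
\]
yields, upon setting the gradient at $\vb=0$ to zero, the identity $\delta_1^{-1}\w_1^\CDT+\delta_2^{-1}\w_2^\CDT=0$.

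\textbf{Remarks and potential obstacle.}~ The argument is essentially an elementary ``orthogonality'' consequence of the fact that the objective is a strictly convex quadratic restricted to any affine subspace of feasible directions, so no real obstacle is anticipated once the invariance direction is identified correctly. The only thing that needs a sanity check is that \eqref{eq:linear_ldt_svm}/\eqref{eq:linear_cdt_svm} are feasible (which they are, since the data are assumed linearly separable, and the binary LDT/CDT constraints reduce, respectively, to the binary VS and the binary CE margin constraints that admit a separating hyperplane). Once feasibility is in place, the uniqueness of the unconstrained quadratic minimizer over $\vb$ immediately gives the claimed centering identities.
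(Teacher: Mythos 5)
Your proposal is correct and rests on the same core idea as the paper's proof: the constraints depend only on $\w_1-\w_2$ (LDT) or $\delta_1\w_1-\delta_2\w_2$ (CDT), so the optimum must be centered along the corresponding invariance direction. The paper implements this as a symmetrization-by-contradiction (subtracting the weighted mean $\bar\w$ keeps feasibility and strictly lowers $\|\W\|_F^2$), whereas you phrase it as first-order optimality of the quadratic objective along the feasible shift family; these are the same argument in two equivalent forms, and your version is sound.
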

\begin{proof}We prove the claim for CDT. The proof for LDT is the same and is omitted for brevity. We use a  symmetrization argument as follows. Set $$\bar\w:=(\delta_1^{-1}\w_1^\CDT+\delta_2^{-1}\w_2^\CDT)/(\delta_1^{-2}+\delta_2^{-2}),$$ and assume for the sake of contradiction that $\bar\w\neq 0$. Consider a new classifier defined as  $\wt\w_1=\w_1^\CDT-\delta_1^{-1}\bar\w$ and $ \wt\w_2=\w_2^\CDT-\delta_2^{-1}\bar\w$. Clearly, it holds that $\delta_1\wt\w_1-\delta_2\wt\w_2=\delta_1\w_1^\CDT-\delta_2\w_2^\CDT$. Thus, $[\wt\w_1,\wt\w_2]$ is feasible in \eqref{eq:linear_ldt_svm}. Moreover,  $$\|\wt\w_1\|_2^2+\|\wt\w_2\|_2^2=\|\w_1^\CDT\|_2^2+\|\w_2^\CDT\|_2^2-(\delta_1^{-2}+\delta_2^{-2})\|\bar\w\|^2<\|\w_1^\CDT\|_2^2+\|\w_2^\CDT\|_2^2.$$ But, these together contradict the optimality of $\W^\CDT.$
\end{proof}

\section{Eigen-Structure of the $({\delta}, R)$-\SEL~Matrix}\label{sec:eigen_SEL}

In this section, we compute the eigen-structure of $(\deltab, R)$-\SEL~matrix $\Zhat$ (Defn.~\ref{def:sel}) for a $(\deltab, R)$-STEP imbalanced setting with STEP logit adjustments. For simplicity, we let $\delmin=1$, $\delmaj=\Delta$ and $\alpha=1$ (i.e. $R \in \N$).\footnote{To relax these assumptions, we only need to change the scale of the eigen-factors. Particularly, singular values should be scaled by $\sqrt{\alpha}/\sqrt{\delmin}$, and $\Ub$ by $1/\sqrt{\alpha}$. Thus, the results easily extend for general $\delmin$ and $\alpha$, i.e., rational $R$.}
For $m\in[k]$, define $\Pb_{m}\in\R^{m\times(m-1)}$ as an orthonormal basis of the subspace orthogonal to $\ones_m$, i.e. $\Pb_m\Pb_m^T=\Id_m-\frac{1}{m}\ones_m\ones_m^T$ and $\Pb_m^T\Pb_m=\Id_{m-1}$, and $\Sb_m(\sigma):=\Id_m-\sigma\ones_m\ones_m^T \in\R^{m}.$ {Throughout the rest of the paper, we let $\Ub_\otimes= \begin{bmatrix}
			\Ub^T_{1:\rhobar k} {\otimes \ones_{\alpha R}^T} & \Ub^T_{(\rhobar k + 1):k} {\otimes \ones_{\alpha}^T}
		\end{bmatrix}^T$}. 
		
	%
	%
	%
	%
	\begin{lemma}[$(\deltab, R)$-\SEL~matrix SVD]
		\label{lem:Zhat_SVD_general}
		Let $R \in \N$ and $\Zhat\in\R^{k\times n}$ be the $(\deltab, R)$-\SEL~matrix described in Defn. \ref{def:sel}, where recall that $n= k (R\rhobar + \rho)$. Define the SVD of $\Zhat$ as follows,
		\begin{align*}
			\Zhat=\Vb\Lambdab\begin{bmatrix}
				\Ub^T_{1:\rhobar k} {\otimes \ones_{ R}^T} & \Ub^T_{(\rhobar k + 1):k} 
			\end{bmatrix}=:\Vb\Lambdab\Ub_\otimes^T,
		\end{align*}
	and further let $\Vb=[\Vb_{\emph{maj}},\vb,\Vb_{\emph{min}}]$ and $\Ub=[\Ub_{\emph{maj}},\ub,\Ub_{\emph{min}}]$. Then, the SVD factors are given by the following equations:
		\begin{align}
			\Lambdab &= \diag{\left(
				\begin{bmatrix}
					\frac{\sqrt{R}}{\Delta}\ones_{(\rhobar k-1)}^T & \sqrt{\frac{\rhobar+R\rho}{\rhobar + \rho \Delta^2}} &\ones_{(\rho k-1)}^T
				\end{bmatrix}
				\right)}\,\label{eq:Lambda_general},
			\\
			\Vb_{\text{maj}}&=\begin{bmatrix}
				\Pb_{\rhobar k} \\
				0_{(\rho k)\times{(\rhobar k-1)}} 
			\end{bmatrix}
			\quad
			\vb = \frac{1}{\sqrt{k(\rhobar + \rho \Delta ^ 2) }}
			\begin{bmatrix}
				-\Delta \sqrt{\frac{\rho}{\rhobar}}\ones_{\rhobar k} \\ \sqrt{\frac{\rhobar}{\rho}}\ones_{\rho k}
			\end{bmatrix}
			\quad
			\Vb_{\text{min}}=\begin{bmatrix}
				0_{(\rhobar k)\times{(\rho k-1)}} \\
				\Pb_{\rho k}
			\end{bmatrix} \label{eq:V_general},
			\\
			\Ub_{\text{maj}}&=\begin{bmatrix}
				\frac{1}{\sqrt{R}}\Pb_{\rhobar k}
				\\
				0_{(\rho k)\times{(\rhobar k-1)}}  
			\end{bmatrix}
			\quad
			\ub = \frac{1}{\sqrt{k(\rhobar+R\rho)}}\begin{bmatrix}
				-\sqrt{\frac{\rho}{\rhobar}}\ones_{\rhobar k} \\ \sqrt{\frac{\rhobar}{\rho}}\ones_{\rho k}
			\end{bmatrix}
			\quad
			\Ub_{\text{min}}=
			\begin{bmatrix}
				0_{(\rhobar k)\times{(\rho k-1)}} \\
				\Pb_{\rho k}
			\end{bmatrix} \,. \label{eq:U_general}
		\end{align}
		
	\end{lemma}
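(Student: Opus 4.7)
The plan is to verify the stated SVD by direct construction. The key structural simplification is that $\Xib$ admits the low-rank decomposition $\Xib = D^{-1}(\Id_k - \ones_k \mathbf{d}^T)$, where $D := \diag(\deltab)$ and $\mathbf{d} := \deltab^{-2}/S$ with $S := \sum_c \delta_c^{-2}$. From this one immediately reads off that $\Xib^T \deltab^{-1} = 0$, so the left null space of $\Xib$ is spanned by $\deltab^{-1}$ and $\rank(\Zhat) \le k-1$, matching the number of nonzero singular values in the stated $\Lambdab$. More importantly, for any $\vb$ with $\vb^T \deltab^{-1} = 0$, the identity
\[
\vb^T \Xib \;=\; \vb^T D^{-1} - (\vb^T D^{-1}\ones_k)\,\mathbf{d}^T \;=\; \vb^T D^{-1}
\]
collapses the computation of $\vb^T\Xib$ into a single diagonal rescaling.

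Combined with the STEP permutation symmetry, this suggests organizing the left singular vectors into three decoupled families: (a) $\vb = [\pbf^T, 0^T]^T$ with $\pbf^T\ones_{\rhobar k} = 0$, giving $\rhobar k - 1$ directions; (b) $\vb = [0^T, \qbf^T]^T$ with $\qbf^T\ones_{\rho k} = 0$, giving $\rho k - 1$ directions; and (c) a single ``bridging'' vector $\vb = [\alpha\ones_{\rhobar k}^T,\, \beta\ones_{\rho k}^T]^T$, with $(\alpha,\beta)$ determined up to sign by the two scalar constraints $\vb^T\deltab^{-1} = \alpha\rhobar k/\Delta + \beta\rho k = 0$ and $\|\vb\|_2 = 1$. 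All three families automatically satisfy $\vb^T\deltab^{-1} = 0$, so the identity above applies throughout.

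I would then proceed family by family. For (a), the identity gives $\vb^T\Xib = [\Delta^{-1}\pbf^T,\, 0^T]$; the $R$-fold replication of majority columns in $\Zhat$ produces $\vb^T\Zhat = \Delta^{-1}[\pbf^T\otimes\ones_R^T,\, 0^T]$, of norm $\sqrt{R}/\Delta$, and after normalization the right singular vector picks up the $1/\sqrt{R}$ factor appearing in $\Ub_\maj$. Family (b) is analogous: $\vb^T\Xib = [0^T, \qbf^T]$, minority columns are not replicated, so the singular value is $1$ and the right singular vector is $[0^T, \qbf^T]^T$, matching the formula for $\Ub_\minor$. Family (c) is the main obstacle, though not a deep one: after solving the $2\times 2$ system for $(\alpha,\beta)$, the identity yields $\vb^T\Xib = [\alpha\Delta^{-1}\ones_{\rhobar k}^T,\, \beta\ones_{\rho k}^T]$, and replicating the majority columns gives $\|\vb^T\Zhat\|_2^2 = \alpha^2\Delta^{-2}\rhobar kR + \beta^2\rho k$, which simplifies to $(\rhobar + R\rho)/(\rhobar + \rho\Delta^2)$ upon substituting the explicit $\alpha,\beta$. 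The right singular vector $\ub = \vb^T\Zhat/\sigma$ then matches the stated form in the compressed coordinate system, with the factors $\sqrt{\rho/\rhobar}$ and $\sqrt{\rhobar/\rho}$ arising naturally from dividing $\vb^T\Zhat$ by $\sigma$.

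Finally, the three families are mutually orthogonal by disjoint supports and zero-sum conditions, and their total count $(\rhobar k - 1) + 1 + (\rho k - 1) = k - 1$ saturates the rank bound; hence they exhaust the SVD. Reading off the compressed-coordinate form recovers the stated $\Vb = [\Vb_\maj,\, \vb,\, \Vb_\minor]$, $\Lambdab$, and $\Ub = [\Ub_\maj,\, \ub,\, \Ub_\minor]$. The only bookkeeping requiring real care is the algebra behind family (c), and even there the simplification $\vb^T\Xib = \vb^T D^{-1}$ bypasses all the off-diagonal cancellations that would otherwise clutter the calculation.
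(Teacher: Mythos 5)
Your proposal is correct, but it takes a genuinely different route from the paper. The paper's proof is a pure verification: it takes the stated factors as given, checks orthonormality of $\Vb$ and of the replicated matrix $\Ub_\otimes$ using $\Pb_m^T\Pb_m=\Id_{m-1}$, $\Pb_m^T\ones_m=0$, writes $\Xib$ in $2\times 2$ block form, and confirms by direct block multiplication that $\Vb\Lambdab\Ub^T=\Xib$ (the repeated-column pattern then lifts this to $\Zhat$). You instead \emph{derive} the factors: you exploit the rank-one-perturbation structure $\Xib=\diag(\deltab)^{-1}(\Id_k-\ones_k\mathbf{d}^T)$, observe that the left null space is spanned by the entrywise inverse of $\deltab$, and use the resulting identity $\vb^T\Xib=\vb^T\diag(\deltab)^{-1}$ on the orthogonal complement, which makes the three families (majority zero-sum, minority zero-sum, and the bridging vector) fall out with their singular values $\sqrt{R}/\Delta$, $1$, and $\sqrt{(\rhobar+R\rho)/(\rhobar+\rho\Delta^2)}$ respectively; I checked the family-(c) algebra and the replication bookkeeping for $\Ub_\otimes$, and they match the stated formulas. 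What each buys: the paper's verification is shorter and mechanical but presupposes the answer; your construction explains where the formulas come from, cleanly bypasses the off-diagonal cancellations, and makes the STEP symmetry's role transparent. The one step worth stating more explicitly in a write-up is the final ``exhaustion'' claim: rather than only counting against the rank bound, note that your $k-1$ left vectors together with the normalized null direction $\propto\diag(\deltab)^{-1}\ones_k$ form an orthonormal basis of $\R^k$, so expanding $\Zhat$ in this basis yields exactly $\sum_i\sigma_i\vb_i\ub_i^T$ with the replicated right vectors orthonormal (disjoint supports, zero-sum conditions, and the $1/\sqrt{R}$ scaling within the majority family), which is the SVD.
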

	\begin{proof}
	    
		To prove the lemma, we only need to verify the correctness of the formulas. In particular: (1) $\Ub_\otimes$ and $\Vb$ are unitary matrices, and (2)   $\Vb\Lambdab\Ub_\otimes^T=\Zhat$. By recalling that $\Pb_m^T\Pb_m=\Id_{m-1}$ and $\Pb_m^T\ones_m=0$ for $m \in \{\rho k, \rhobar k\}$, it is easy to confirm $\Vb^T\Vb=\Id_{k-1}$ and $\Ub_\otimes^T\Ub_\otimes=\Id_{k-1}$. Since $\Ub_\otimes$ and $\Zhat$ have the same pattern of repeated columns, proving $\Vb\Lambdab\Ub^T=\Xib$ verifies the decomposition. So, we start by expressing $\Xib$ in block-form as follows:
		\begin{align}\label{eq:Zhat_block}
			\Xib  = 
			\begin{bmatrix}
				\Delta^{-1}\Sb_{\rhobar k}\big(\frac{1}{k(\rhobar + \rho \Delta ^ 2) }\big)
				&
				-\frac{\Delta}{k(\rhobar + \rho \Delta^2)}\ones_{\rhobar k}\ones^T_{\rho k}
				\\
				-\frac{1}{k(\rhobar + \rho \Delta ^ 2) }\ones_{\rho k}\ones^T_{\rhobar k}
				&
				\Sb_{\rho k}\big(\frac{\Delta ^ 2}{k(\rhobar + \rho \Delta ^ {2})}\big)
			\end{bmatrix}.
		\end{align} 
		Now, we can verify the equation by direct calculations:
		\begin{align*}
			\Vb\Lambdab\Ub^T &= \frac{\sqrt{R}}{\Delta}\Vb_{\text{maj}}\Ub_{\text{maj}}^T + \sqrt{\frac{\rhobar+R\rho}{\rhobar + \rho \Delta^2 }}\vb\ub^T + \Vb_{\text{min}}\Ub_{\text{min}}^T
			\\
			&=\begin{bmatrix}
				\Delta^{-1}\Pb_{\rhobar k}\Pb_{\rhobar k}^T & 0\\
				0 & 0
			\end{bmatrix}
			+
			\frac{1}{k(\rhobar + \rho \Delta^2)}
			\begin{bmatrix}
				\Delta\frac{\rho}{\rhobar}\ones_{\rhobar k}\ones_{\rhobar k}^T & -\Delta\ones_{\rhobar k}\ones_{\rho k}^T
				\\
				-\ones_{\rho k}\ones_{\rhobar k}^T & \frac{\rhobar}{\rho}\ones_{\rho k}\ones_{\rho k}^T
			\end{bmatrix}
			+
			\begin{bmatrix}
				0 & 0
				\\
				0 & \Pb_{\rho k}\Pb_{\rho k}^T
			\end{bmatrix}
			\\
			&=
			\begin{bmatrix}
				\Delta^{-1}\big(\Id_{\rhobar k}-\frac{1}{k(\rhobar + \rho\Delta^2)}\ones_{\rhobar k}\ones_{\rhobar k}^T\big) & -\frac{\Delta}{k(\rhobar+\rho \Delta^2)}\ones_{\rhobar k}\ones_{\rho k}^T\\
				-\frac{1}{k(\rhobar + \rho\Delta^2)}\ones_{\rho k}\ones_{\rhobar k}^T & \Id_{\rho k}-\frac{\Delta^2}{k(\rhobar+\rho\Delta^{2})}\ones_{\rho k}\ones_{\rho k}^T
			\end{bmatrix}
			\\
			&= \Xib.
		\end{align*}
	\end{proof}
	With the eigen-structure of $\Zhat$ at hand, we prove a useful property of the singular space in Lem.~\ref{eq:SVD_elementwise}. We will use this property later in Sec.~\ref{sec:proof_SVM} to characterize the solutions of the CS-SVM corresponding to CDT loss in \eqref{eq:svm_cdt}.
%
%
%
	\begin{lemma}\label{eq:SVD_elementwise}
		Recall the setting of Lem.~\ref{lem:Zhat_SVD_general} and the SVD $\Zhat=\Vb\Lambdab{\Ub_\otimes}^T$. The matrix $\Bb^*={\Ub_\otimes}\Vb^T$ satisfies the following element-wise \emph{strict} inequalities: $\Bb^*\odot\Zhat^T>0.$
	\end{lemma}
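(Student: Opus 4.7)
The plan is to exploit the row-repetition structure of $\Ub_\otimes$. Both $\Bb^*=\Ub_\otimes\Vb^T$ and $\Zhat^T=\Ub_\otimes\Lambdab\Vb^T$ inherit this structure: any two rows indexed by samples in the same class are identical, since the same is true for $\Ub_\otimes$. Consequently, the element-wise sign of $\Bb^*\odot\Zhat^T$ is determined by a condensed $k\times k$ matrix. Concretely, setting $\Xib^*:=\Ub\Vb^T$ (using the SVD factors of Lem.~\ref{lem:Zhat_SVD_general}), we have $\Bb^*[i,c]=\Xib^*[y_i,c]$ and $\Zhat^T[i,c]=\Xib[c,y_i]$, so the claim reduces to showing $\Xib^*[y,c]\cdot\Xib[c,y]>0$ for every $y,c\in[k]$.

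Next, I would expand $\Xib^*=\Ub_{\text{maj}}\Vb_{\text{maj}}^T+\ub\vb^T+\Ub_{\text{min}}\Vb_{\text{min}}^T$ by substituting the explicit formulas from Lem.~\ref{lem:Zhat_SVD_general} and using $\Pb_m\Pb_m^T=\Id_m-(1/m)\ones_m\ones_m^T$. The result is a four-block partition of $\Xib^*$ parallel to the block form of $\Xib$ given in Eqn.~\eqref{eq:Zhat_block}: a top-left maj-maj block, a bottom-right min-min block, and two cross blocks. The two cross blocks of $\Xib^*$ each turn out to be a single negative multiple of $\ones\ones^T$ (they receive contributions only from $\ub\vb^T$), so they trivially share the sign of the corresponding negative cross blocks of $\Xib$. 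On each of the two diagonal blocks, the on-diagonal entries of $\Xib^*$ are manifestly positive since they are dominated by the $\Id$ contribution from $\Pb_m\Pb_m^T$, matching the positive diagonal of $\Xib$.

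The only remaining cases are the off-diagonal entries within the maj-maj and min-min blocks of $\Xib^*$, which must come out negative in order to match $\Xib$. Here two contributions compete: a negative term $-1/(\sqrt{R}\,\rhobar k)$ (resp.\ $-1/(\rho k)$) coming from the $-\frac{1}{m}\ones\ones^T$ part of $\Pb_m\Pb_m^T$, and a positive term coming from $\ub\vb^T$. This is the main (and essentially only non-trivial) step: I would show that the negative contribution dominates. After clearing denominators and squaring, the two cases reduce to the algebraic inequalities $(\rhobar+\rho\Delta^2)(\rhobar+R\rho)>\Delta^2\rho^2 R$ for the maj-maj block and $(\rhobar+\rho\Delta^2)(\rhobar+R\rho)>\rhobar^2$ for the min-min block. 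Both follow immediately by expanding the left-hand side, since $\rho,\rhobar\in(0,1)$ and $R,\Delta>0$ make the surviving cross-terms strictly positive. Combining the four blocks then yields $\Bb^*\odot\Zhat^T>0$ element-wise.
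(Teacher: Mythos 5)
Your proposal is correct and follows essentially the same route as the paper: explicitly expanding $\Ub_\otimes\Vb^T$ (equivalently the condensed $\Ub\Vb^T$) into the maj/min block structure, noting the cross blocks are negative rank-one matrices matching the negative cross blocks of $\Xib$, and reducing the within-block off-diagonal signs to the inequalities $(\rhobar+\rho\Delta^2)(\rhobar+R\rho)>\Delta^2\rho^2 R$ and $(\rhobar+\rho\Delta^2)(\rhobar+R\rho)>\rhobar^2$, which are exactly the paper's conditions written in squared form. The only tiny imprecision is the diagonal-entry argument: when $\rhobar k=1$ the $\Pb_m\Pb_m^T$ term contributes $0$ rather than dominating, and strict positivity there comes from the $\ub\vb^T$ contribution — but the conclusion is unaffected.
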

	\begin{proof}
		We compute ${\Bb}^* := \begin{bmatrix}
			\Bb^*_{11} & \Bb^*_{12}
			\\
			\Bb^*_{21} & \Bb^*_{22}
		\end{bmatrix}$
		by plugging in the explicit SVD expressions in Lem.~\ref{lem:Zhat_SVD_general}.
		\begin{align}
			{\Ub_\otimes}\Vb^T = 
			\begin{bmatrix}
				\frac{1}{\sqrt{R}}\Pb_{\rhobar k}\Pb_{\rhobar k}^T\otimes\ones_R & 0
				\\
				0 & 0
			\end{bmatrix} + 
			\frac{1}{k\sqrt{\rhobar+R\rho}\,\sqrt{\rhobar+\rho\Delta^{2}}}
			\begin{bmatrix}
				\Delta\frac{\rho}{\rhobar}\ones_{\rhobar k}\ones_{\rhobar k}^T \otimes\ones_R & -\ones_{\rhobar k}\ones_{\rho k}^T \otimes\ones_R \\
				-\Delta\ones_{\rho k}\ones_{\rhobar k}^T & \frac{\rhobar}{\rho}\ones_{\rho k}\ones_{\rho k}^T
			\end{bmatrix}
			+
			\begin{bmatrix}
				0& 0
				\\
				0 & \Pb_{\rho k}\Pb_{\rho k}^T 
			\end{bmatrix}
			\nn.
		\end{align} 
		Simplifying the expressions, we have
		\begin{align*}
			\Bb^*_{11} &= 
			 \frac{1}{\sqrt{R}}\left(\Id_{\rhobar k}-\frac{1}{\rhobar k}\left(1-\Delta\sqrt{\frac{{R \rho}}{(R+{\rhobar}/{\rho})(\rhobar+\rho\Delta^2)}}\right)\ones_{\rhobar k}\ones_{\rhobar k}\right)
			\otimes\ones_R \,, \nn
			\\
			\Bb^*_{12} &= -\frac{1}{k\sqrt{\rhobar+R\rho}\,\sqrt{\rho+\rhobar\Delta^{2}}}\ones_{\rhobar k}\ones_{\rho k}^T \,\otimes\ones_R \,, \nn\\
			\Bb^*_{21} &= -\frac{\Delta}{k\sqrt{\rhobar+R\rho}\,\sqrt{\rho+\rhobar\Delta^{2}}}\ones_{\rho k}\ones_{\rhobar k}^T \,, \nn
			\\
			\Bb^*_{22} &= 
			\Id_{\rho k}-\frac{1}{\rho k}\left(1-\sqrt{\frac{\rhobar}{{(1 + R \left({\rho}/{\rhobar}\right))(\rhobar+\rho\Delta^{2})}}}\right)\ones_{\rho k}\ones_{\rho k}^T.
		\end{align*} 
		From \eqref{eq:Zhat_block}, we can write $\Zhat$ in block-form: 
		\begin{align*}
			\Zhat^T= 
			\begin{bmatrix}
				\Delta^{-1}\Sb_{\rhobar k}\big(\frac{1}{k(\rhobar + \rho \Delta ^ 2) }\big)\otimes\ones_R
				&
				-\frac{1}{k(\rhobar + \rho \Delta^2)}\ones_{\rhobar k}\ones^T_{\rho k} \otimes\ones_R
				\\
				-\frac{\Delta}{k(\rhobar + \rho \Delta ^ 2) }\ones_{\rho k}\ones^T_{\rhobar k}
				&
				\Sb_{\rho k}\big(\frac{\Delta ^ 2}{k(\rhobar + \rho \Delta ^ {2})}\big)
			\end{bmatrix}.
		\end{align*}

		The signs of the off-diagonal blocks of both $\Zhat$ and ${\Bb}^*$ are negative. To inspect the sign agreement of the on-diagonal blocks, it is enough to see the following inequalities are always strictly satisfied,
		\begin{align*}
			1>1-\Delta\sqrt{\frac{{R \rho}}{(R+{\rhobar}/{\rho})(\rhobar+\rho\Delta^{2})}}>0\qquad\text{and}\qquad 1>1-\sqrt{\frac{\rhobar}{(1+R\left(\rho/\rhobar\right))(\rhobar + \rho\Delta^2)}} >0.
		\end{align*}
		\end{proof}

\section{Proof of Theorem \ref{thm:SVM-VS}}\label{sec:proof_SVM}

{One of the paper's main contributions is introducing the $(\deltab,R)$-SELI geometry (Defn.~\ref{dfn:seli}) as the ``correct'' formalization that is able to capture the implicit geometries of \emph{both} the CDT and LDT losses for all imbalance-ratio values $R$.\footnote{Since CE loss is a special case of CDT/LDT loss for $\deltab=\ones_k$, the new geometry includes the previously introduced SELI \citep{seli} and ETF \citep{NC} geometries as special cases.} This property is captured by Thm.~\ref{thm:SVM-VS}: thanks to the generality of Defn.~\ref{dfn:seli}, both CDT and LDT geometries, albeit different to each other, are formalized in terms of appropriate parameterizations of the same geometry. 
This unifying and concise formalization of the theorem is central to our work. For example, the eigenstructure properties of the $(\delta,R)$-SEL matrix in \Sec~\ref{sec:eigen_SEL} and the closed-form angles/norm-formulas in \Sec~\ref{sec:SELI_properties} apply immediately to both losses. Instead in this section, when proving Thm.~\ref{thm:SVM-VS}, we find it more appropriate to treat the two losses separately: the proofs for CDT and LDT losses are included in \Sec~\ref{sec:app_cdt_proof} and \Sec~\ref{sec:app_ldt_proof}, respectively.} 

{ Our proof in \Sec~\ref{sec:app_cdt_proof} for CDT generalizes the proof of \citet[Thm.~1]{seli}, which only applies for the CE loss (a special case of CDT). At a high-level, the key innovations making this possible are: (i) formalizing the $(\delta,R)$-SEL matrix (see Defn. \ref{def:sel}) as the appropriate generalization of the SEL matrix in \citet{seli}; (ii) expressing the dual of the CS-SVM corresponding to CDT (Eqn.~\eqref{eq:svm_cdt}) in a form that involves the $(\delta,R)$-SEL and showing that it admits an explicit solution. }

{Our proof in \Sec~\ref{sec:app_ldt_proof} for LDT relies on the following reduction idea: we prove that it is possible to re-parameterize the CS-SVM corresponding to LDT (Eqn.~\eqref{eq:svm_ldt}) such that it reduces to a weighted version of the standard unconstrained-features SVM (UF-SVM) for CE loss, {CS-SVM with $\deltab = \ones_k$}, (see Prop.~\ref{prop:seli_beta}), albeit the new UF-SVM is over an artificial dataset with different imbalance ratio that is only introduced for the purpose of the proof. This reduction, together with the general formalization of the $(\delta,R)$-SEL matrix, then allows us to leverage \citet[Thm.~1]{seli}. 
}

\subsection{CDT Loss: Theorem.~\ref{thm:SVM-VS} \textbf{(ii)}}\label{sec:app_cdt_proof}

Consider the CS-SVM of \eqref{eq:svm_cdt}:
\begin{align}\label{eq:svm_cdt_app}
\mathrm{p}_*=\min_{\W,\Hb}~\frac{1}{2}\|\W\|_F^2+ \frac{1}{2}\|\Hb\|_F^2
\quad\quad\text{sub. to}\quad (\delta_{y_i}\w_{y_i}-\delta_{c}\w_c)^T\h_i \geq 1,~i\in[n], c\neq y_i,
\end{align}
and let the optimal parameters of the problem be $(\W^*,\Hb^*)$. We start by setting $\X = \begin{bmatrix} \W^T \\ {\Hb}^T \end{bmatrix} \begin{bmatrix} \W & \Hb \end{bmatrix}\in\R^{(k+n)\times (k+n)}$ and relaxing \eqref{eq:svm_cdt_app} as follows,
\begin{align}\label{eq:cvx_X}
	\mathrm{q}_*  &= \min_{\X\succeq 0}~~~~~~
	\frac{1}{2}\tr\Big(\X\Big)\\
	\nn&~~~\text{sub.~to}~~~~\delta_{y_i}\X[y_i,k+i]-\delta_c\X[c,k+i]\geq 1,~ \forall i\in[n], c\neq y_i.
\end{align}
Clearly, $\mathrm{p}_* \geq \mathrm{q}_*$. Our key insight in the analysis of \eqref{eq:cvx_X} is writing its dual in a way that involves explicitly the $(\deltab,R)$-\SEL~matrix. 
{Specifically, let $\hat{\Z}$ be the $(\deltab,R)$-SEL matrix of Defn.~\ref{def:sel} with $\alpha=n_\text{min}$. Then, we can formulate the dual of \eqref{eq:cvx_X} as follows:}
\begin{align}\label{eq:dual2_X}
	\mathrm{d}_*  &=\max_{\Bb\in\R^{n\times k} }~~~~ \tr(\Zhat\Bb)
	\\&~~~~\text{sub.~to}~~ \begin{bmatrix}
	    \Id_d & -\Bb^T \\
	    -\Bb & \Id_n
	\end{bmatrix}
	\succeq 1 \nn
	\\&~~~~~~~~~~~~~~~~ 
	\Bb\Db^{-1}\ones_k=0\nn
	\\&~~~~~~~~~~~~~~~~ \Bb\odot\Zhat^T \geq 0 \label{eq:other}\,,
\end{align}
where $\Bb$ contains the dual variables and $\Db = \diag{\left(\deltab\right)}\in\R^{k\times k}$. It is easy to see that strong duality holds for the convex problem \eqref{eq:cvx_X} by satisfying Slater's condition. Thus, using the optimal solution of \eqref{eq:dual2_X}, we can characterize the optimizers \eqref{eq:cvx_X}.

To solve \eqref{eq:dual2_X}, we first relax the problem by ignoring constraint \eqref{eq:other}, and substituting the first constraint using Schur-complement argument:
\begin{align}
	\max_{\|\Bb\|_2\leq 1}~~ \tr(\Zhat\Bb)\qquad\text{sub. to}~~ \Bb\Db^{-1}\ones_k=0.\label{eq:unconstrained_B}
\end{align}
The optimal value of \eqref{eq:unconstrained_B} is $\norm{\Zhat}_*$ and ${{\Bb^*}}=\Ub_\otimes\Vb^T$ is the unique solution (see \citet[Lem.~C.1]{seli})\footnote{\citet[Lem.~C.1]{seli} holds for $(\ones_k,R)$-SEL matrix $\Zhat$, but inspecting the  proof  it remains unchanged for the general $(\deltab,R)$-SEL matrix.}. 
By Lem.~\ref{eq:SVD_elementwise}, $\Bb^*$ is strictly feasible in the relaxed condition \eqref{eq:other}. Therefore, the relaxation in \eqref{eq:unconstrained_B} is tight and $\Bb^*$ is in fact the dual optimal of \eqref{eq:cvx_X}. Since, strong duality holds for \eqref{eq:cvx_X}, we also have $\mathrm{q}_*=\norm{\Zhat}_*$ and the optimizer $\X^*$ can be found by the complementary slackness conditions:
\begin{align*}
    \forall i\in[n], c\neq y_i:\quad & \Bb^*[i,c](1-\delta_{y_i}\X[y_i,k+i]+\delta_c\X[c,k+i])=0\\
    &\begin{bmatrix}
        \Id_k & -{\Bb^*}^T\\
        -\Bb^* & \Id_n
    \end{bmatrix} \X = 0.
\end{align*}
Let $\X^* = \begin{bmatrix} \X^*_{11} & \X^*_{12} \\ \X^*_{21} & \X^*_{22}\end{bmatrix}$, and recall that $\Bb^*$ satisfies \eqref{eq:other} strictly. Then, the complementary slackness conditions imply:
\begin{align*}
    \forall i\in[n], c\neq y_i:\quad & 1-\delta_{y_i}\X^*_{12}[y_i,i]+\delta_c\X^*_{12}[c,i]=0\\
    &\X^*_{11} = {\Bb^*}^T{\X^*}^T_{12}, \quad \X^*_{22} = {\Bb^*}\X^*_{12}, \quad \X^*_{12}={\Bb^*}^T\X^*_{22}.
\end{align*}
From the last condition, it is straightforward to see $\X^*_{22}=\Ub_\otimes\tilde{\Lambdab}\Vb^T$, $\X^*_{12}=\Vb\tilde{\Lambdab}\Ub_\otimes^T$ and $\X^*_{11}=\Vb\tilde{\Lambdab}\Vb^T$ for some $\tilde{\Lambdab}\in\R^{(k-1)\times(k-1)}$. Now, using the first condition, we have,
\begin{align}\label{eq:X12}
    &\delta_c^{-2}\delta_{y_i}\X^*_{12}[y_i,i]-\delta_c^{-1}\X^*_{12}[c,i] = \delta_c^{-2}\nn\\
    \stackrel{\sum_{c\neq y_i}}{\implies}&  \delta_{y_i}\X^*_{12}[y_i,i]\sum_{c\neq y_i} \delta_c^{-2} -\sum_{c\neq y_i}\delta_c^{-1}\X^*_{12}[c,i] = \sum_{c\neq y_i}\delta_c^{-2}\nn\\
    \implies &\delta_{y_i}\X^*_{12}[y_i,i]\sum_{c\in[k]} \delta_c^{-2} -\sum_{c\in[k]}\delta_c^{-1}\X^*_{12}[c,i] = \sum_{c\neq y_i}\delta_c^{-2}\nn\\
    \stackrel{(i)}{\implies} &\delta_{y_i}\X^*_{12}[y_i,i]\sum_{c\in[k]} \delta_c^{-2} = \sum_{c\neq y_i}\delta_c^{-2}\nn\\
    \implies&\X^*_{12}[y_i,i] = \delta_{y_i}^{-1} \big(1-\delta_{y_i}^{-2}/\sum_{c'\in[k]} \delta_{c'}^{-2}\big), \quad \X^*_{12}[c,i] = -\delta_{c}^{-1} \big(\delta_{y_i}^{-2}/\sum_{c'\in[k]} \delta_{c'}^{-2}\big)\nn\\
    \implies &\X^*_{12} = \Zhat.
\end{align}
In (i), we use the fact that $\Vb^T\Db^{-1}\ones_k=0$ and thus ${\X^*_{12}}^T\Db^{-1}\ones_k=0$. By \eqref{eq:X12}, and using $\Vb^T\Vb=\Ub_\otimes^T\Ub_\otimes=\Id_{k-1}$ it is easy to show $\tilde{\Lambdab} = \Lambdab$ and thus,
\begin{align}\label{eq:X_*}
    \X^* = \begin{bmatrix}
    \Vb \\ \Ub_\otimes
    \end{bmatrix}\Lambdab\begin{bmatrix}
    \Vb^T & \Ub_\otimes^T
    \end{bmatrix}.
\end{align}
Now, it remains to show all the optimizers of \eqref{eq:svm_cdt_app} can be constructed by $\X^*$ and that the relaxation in \eqref{eq:cvx_X} is tight. First, choose some partial orthonormal matrix $\Rb\in\R^{(k-1)\times d}$ with $\Rb\Rb^T=\Id_{k-1}$, and construct $\W^*=\Rb^T\Lambdab^{1/2}\Vb^T$ and $\Hb^*=\Rb^T\Lambdab^{1/2}\Ub_R^T$. Then, $(\W^*,\Hb^*)$ is by construction feasible in \eqref{eq:svm_cdt_app} and, $$\mathrm{q}_*\leq\mathrm{p}_*\leq\frac{1}{2}\norm{\W^*}_F^2+\frac{1}{2}\norm{\Hb^*}_F^2 = \frac{1}{2}\tr(\X^*)=\mathrm{q}_*.$$
Therefore, $\mathrm{q}_*=\mathrm{p}_*$ and indeed the relaxation is tight. On the other hand, if $(\tilde{\W},\tilde{\Hb})$ is a minimizer of \eqref{eq:svm_cdt_app}, $\tilde{\X} = \begin{bmatrix} \tilde{\W}^T \\ \tilde{\Hb}^T\end{bmatrix}\begin{bmatrix} \tilde{\W} & \tilde{\Hb}\end{bmatrix}$ is feasible and optimal in \eqref{eq:cvx_X} (since $\mathrm{q}_*=\mathrm{p}_*$), which implies $\tilde{\X}$ should satisfy \eqref{eq:X_*}. Hence, any minimizer of the CS-SVM \eqref{eq:svm_cdt_app} satisfies,
\begin{align}\label{eq:opt_sol_cdt}
    \begin{bmatrix}
    {\W^*}^T \\ {\Hb^*}^T
    \end{bmatrix}\begin{bmatrix}
    {\W^*} & {\Hb^*}
    \end{bmatrix} = \begin{bmatrix}
    \Vb \\ \Ub_\otimes
    \end{bmatrix}\Lambdab\begin{bmatrix}
    \Vb^T & \Ub_\otimes^T
    \end{bmatrix}.
\end{align}
The statement of the theorem is easy to see by \eqref{eq:opt_sol_cdt}. Specifically, by noting that $\Ub_\otimes$ has repeated columns, {all the embeddings belonging to the same class are equal (NC occurs) and,}
\begin{align}\label{eq:cdt_opt_final}
    {\W^*}^T{\W^*} = \Vb \Lambdab \Vb^T, \quad {\Mb^*}^T{\Mb^*} = \Ub \Lambdab \Ub^T, \quad {\W^*}^T{\Mb^*} = \Xib.  
\end{align}

\begin{remark}\label{remark:alpha}
For simplicity of exposition, we set $\alpha=\nmin$ when using the $(\deltab,R)$-SEL matrix to formulate the dual problem. However, it is easy to see that by choosing some other $\alpha'$, the SVD factors would only change by a scaling factor. In particular, let $\tau=\sqrt{\alpha'/\nmin}$, then $\Ub$ and $\Lambdab$ will be scaled by a factor of $1/\tau$ and $\tau$ respectively, and $\Vb$ remains unchanged. Hence, \eqref{eq:cdt_opt_final} changes as follows,
\begin{align*}
    {\W^*}^T{\W^*} = \tau \Vb \Lambdab \Vb^T, \quad {\Mb^*}^T{\Mb^*} = \frac{1}{\tau} \Ub \Lambdab \Ub^T, \quad {\W^*}^T{\Mb^*} = \Xib.  
\end{align*}
\end{remark}



\subsection{LDT Loss: Theorem.~\ref{thm:SVM-VS} \textbf{(iii)}}\label{sec:app_ldt_proof}
We start the proof by restating a result from \cite{seli} regarding the optimal solutions of the unconstrained-features SVM.

\begin{propo}[{\citet[Sec.~C.3]{seli}}]\label{prop:seli_beta}
	 Consider the following $k$-class {$\beta$-weighted} unconstrained-features SVM (UF-SVM):
	\begin{align*}
		(\What_\beta,\Hhat_\beta)\in\arg\min_{\W,\Hb}~\frac{1}{2}\|\W\|_F^2+ \frac{\beta^2}{2}\|\Hb\|_F^2
		\quad\quad{\text{\emph{sub. to}}}\quad (\w_{y_i}-\w_c)^T\h_i \geq 1,~i\in[n],~c\neq y_i,~ c\in[k].
	\end{align*}
	in an $(R,\rho)$-STEP imbalanced setting. For any $\beta > 0$, the NC property holds, and 
	the optimal solutions $(\What_\beta,\Mbhat_\beta)$ follow the $(\ones_k, R)$-SELI geometry. Specifically, 
	\begin{align*}
	    \What_\beta^T\Mbhat_\beta = \Xib, \quad \Mbhat_\beta^T\Mbhat_\beta=\frac{1}{{\tau}}\Ub\Lambdab\Ub^T, \quad \What_\beta^T\What_\beta = {\tau} \Vb\Lambdab\Vb^T,
	\end{align*}
	where $\Vb$, $\Lambdab$, $\Ub$ are the SVD factors of the $(\ones_k,R)$-SEL matrix as described in Defn. \ref{def:sel}, {and $\tau$ is a positive scalar depending on $\beta$ and $n_\emph{minor}$.}
\end{propo}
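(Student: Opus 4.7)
The plan is to re-run the semidefinite relaxation and duality argument of \Sec~\ref{sec:app_cdt_proof} in the symmetric setting $\deltab=\ones_k$, with the extra $\beta$-weighting absorbed by a preliminary change of variables. First I would eliminate $\beta$ by substituting $\W=\sqrt{\beta}\,\wt\W$ and $\Hb=\wt\Hb/\sqrt{\beta}$. Under this substitution, each margin constraint $(\w_{y_i}-\w_c)^T\h_i\ge 1$ is preserved verbatim, while the objective becomes $\tfrac{\beta}{2}(\|\wt\W\|_F^2+\|\wt\Hb\|_F^2)$. Hence the minimizers $(\wt\W^\star,\wt\Hb^\star)$ are those of the unweighted UF-SVM, and the original minimizers are obtained by undoing the substitution at the end.

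Next, I would lift the unweighted problem to the Gram matrix $\X=\begin{bmatrix}\wt\W^T\\\wt\Hb^T\end{bmatrix}\begin{bmatrix}\wt\W&\wt\Hb\end{bmatrix}\succeq 0$ and dualize exactly as in \eqref{eq:cvx_X}--\eqref{eq:dual2_X}, specialized to $\Db=\Id_k$. The dual reads $\max_{\Bb}\tr(\Zhat\Bb)$ subject to $\|\Bb\|_2\le 1$, $\Bb\ones_k=0$, and $\Bb\odot\Zhat^T\ge 0$, where $\Zhat$ is now the $(\ones_k,R)$-SEL matrix with SVD $\Zhat=\Vb\Lambdab\Ub_\otimes^T$. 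Dropping the elementwise sign constraint yields the relaxation whose unique optimizer is $\Bb^\star=\Ub_\otimes\Vb^T$ with value $\|\Zhat\|_*$, by the same reasoning used in \Sec~\ref{sec:app_cdt_proof} (Lem.~C.1 of \cite{seli} carries over). Crucially, Lem.~\ref{eq:SVD_elementwise} at $\Delta=1$ shows that $\Bb^\star\odot\Zhat^T>0$ strictly, so the relaxation is tight and strong duality closes the loop.

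Complementary slackness then forces $\X^\star=\begin{bmatrix}\Vb\\ \Ub_\otimes\end{bmatrix}\Lambdab\begin{bmatrix}\Vb^T&\Ub_\otimes^T\end{bmatrix}$, exactly as in \eqref{eq:X_*}. Reading off the diagonal and off-diagonal blocks yields $\wt\W^T\wt\W=\Vb\Lambdab\Vb^T$ and $\wt\Hb^T\wt\Hb=\Ub_\otimes\Lambdab\Ub_\otimes^T$, and the repeated-column pattern of $\Ub_\otimes$ collapses all in-class embeddings to a common vector (NC), so that passing to class means gives $\wt\Mb^T\wt\Mb\propto\Ub\Lambdab\Ub^T$ and $\wt\W^T\wt\Mb=\Vb\Lambdab\Ub^T=\Xib$. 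Undoing the substitution produces
\begin{align*}
\What_\beta^T\What_\beta=\beta\,\wt\W^T\wt\W,\qquad \Mbhat_\beta^T\Mbhat_\beta=\beta^{-1}\wt\Mb^T\wt\Mb,\qquad \What_\beta^T\Mbhat_\beta=\wt\W^T\wt\Mb=\Xib,
\end{align*}
which matches the stated geometry with the scalar $\tau$ absorbing the factor $\beta$ together with the $\nmin$-dependent scaling coming from the choice $\alpha=\nmin$ in Defn.~\ref{def:sel} (see Remark~\ref{remark:alpha}). The main obstacle is purely bookkeeping: tracking how $\tau$ depends jointly on $\beta$ and on the $\alpha$-convention; conceptually, the only genuinely new ingredient beyond \Sec~\ref{sec:app_cdt_proof} is the initial rescaling, and everything else reduces to the $\deltab=\ones_k$ specialization of the CDT argument.
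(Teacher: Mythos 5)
Your proposal is correct, and it uses essentially the machinery the paper itself relies on: the paper does not re-prove Prop.~\ref{prop:seli_beta} (it imports it from Sec.~C.3 of \cite{seli}), but your route --- absorbing $\beta$ via $\W=\sqrt{\beta}\,\wt\W$, $\Hb=\wt\Hb/\sqrt{\beta}$ and then running the $\deltab=\ones_k$ specialization of the SDP relaxation, duality, and complementary-slackness argument of \Sec~\ref{sec:app_cdt_proof} (with the sign condition of Lem.~\ref{eq:SVD_elementwise} at $\Delta=1$) --- is exactly the intended derivation, with the remaining work being only the bookkeeping of $\tau$ as a function of $\beta$ and the $\alpha$-convention (Remark~\ref{remark:alpha}). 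No gaps.
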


Consider the k-class CS-SVM problem of \eqref{eq:svm_ldt}, restated below for convenience:
\begin{align}\label{eq:svm_ldt_app}
(\W^*,\Hb^*)\in\arg\min_{\W,\Hb}~\frac{1}{2}\|\W\|_F^2+ \frac{1}{2}\|\Hb\|_F^2
\quad\quad\text{sub. to}\quad \delta_{y_i}(\w_{y_i}-\w_c)^T\h_i \geq 1,~i\in[n], c\neq y_i.
\end{align}
Also recall that $n_c, c\in[k]$ is the number of examples in class $c$. We will relate the above optimization problem to an equivalent UF-SVM, whose solution can be found by Prop. \ref{prop:seli_beta}. The resulting solution will be used to state the minimizers of \eqref{eq:svm_ldt_app}.

First, it is easy to verify the NC property: for a fixed $\W$, the optimization in \eqref{eq:svm_ldt_app} is separable in $\h_i$, and for all the samples in the same class, the separable problems are identical and strongly-convex. Thus, for all $i: y_i=c$ there is a unique minimzer for the fixed $\W$. So, at the optimal solution, all the embeddings within a class are equal to their means, i.e. $\forall i \in [n]: y_i=c, \h_i = \mub_c$.
Defining $\Mb = [\mub_1,\ldots,\mub_k]$, we can re-formulate \eqref{eq:svm_ldt_app} as follows,
\begin{align}\label{eq:ldt_mu}
	(\W^*,\Mb^*)\in\arg\min_{\W,\Mb}~\frac{1}{2}\|\W\|_F^2+ \frac{1}{2}\sum_{c\in[k]} n_c \norm{\mub_c}_2^2
	\quad\quad\text{sub. to}\quad \delta_{c}(\w_c-\w_{c^\prime})^T\mub_c \geq 1,~c,c^\prime\in[k],
\end{align}
and by the NC property, there is a one-to-one correspondence between the optimal solutions of \eqref{eq:svm_ldt_app} and \eqref{eq:ldt_mu}.

Now, let $\Db=\diag(\deltab)$ and $\widetilde{\Mb} = \Mb \Db$, i.e. $\tilde{\mub}_c = \delta_c\mub_c$. Applying this reparametrization to \eqref{eq:ldt_mu}, we have,
\begin{align}\label{eq:ldt_reparam}
	(\W^*,\widetilde{\Mb}^*)\in\arg\min_{\W,\tilde{\Mb}}~\frac{1}{2}\|\W\|_F^2+ \frac{1}{2}\sum_{c\in[k]} \frac{n_c}{\delta_c^2} \norm{\tilde{\mub}_c}_2^2
	\quad\quad\text{sub. to}\quad (\w_c-\w_{c^\prime})^T\tilde{\mub}_c \geq 1,~c,c^\prime\in[k].
\end{align}

Define $\tilde{R} = R\big({\delmin}/{\delmaj}\big)^2$, which is rational by assumption. Thus, there exists $\alpha \in \N$ such that $\alpha \tilde{R}$ is an integer. Now, set $\beta^2 = {n_\text{min}}/({\alpha \delmin^2})$, and re-write \eqref{eq:ldt_reparam} as follows:
\begin{align}
	(\W^*,\widetilde{\Mb}^*)\in\arg\min_{\W,\tilde{\Mb}}~\frac{1}{2}\|\W\|_F^2+ \frac{\beta^2}{2}\sum_{c\in[k]} \tilde{n}_c \norm{\tilde{\mub}_c}_2^2
	\quad\quad\text{sub. to}\quad (\w_c-\w_{c^\prime})^T\tilde{\mub}_c \geq 1,~c,c^\prime\in[k],
\end{align}
where 
\begin{align*}
	\tilde{n}_c = \begin{cases}
		\alpha \tilde{R}, & \text{if } c \in \{1,\ldots,\rhobar k\},\\ 
		\alpha , & \text{if } c \in \{\rhobar k + 1,\ldots,k\}
	\end{cases}
\end{align*}

By a similar argument that led to the equivalence of \eqref{eq:svm_ldt_app} and \eqref{eq:ldt_mu}, it is easy to see $(\W^*,\widetilde{\Mb}^*)$ is the optimal parameters of a {$\beta$-weighted UF-SVM} trained on an imbalanced dataset with imbalance ratio $\tilde{R}$ and $\tilde{n}_c$ samples per class for $c \in [k]$. Thus, $(\W^*,\widetilde{\Mb}^*)$ follows the $(\ones_k, \tilde{R})$-SELI geometry as in Prop. \ref{prop:seli_beta}. The proof is complete by noting that $(\W^*,\widetilde{\Mb}^*) = (\W^*,{\Mb}^*\Db)$. 
\section{Closed-Form Formulas for the \texorpdfstring{$(\deltab,R)$}--SELI geometry}\label{sec:SELI_properties}
As stated in the Thm. \ref{thm:SVM-VS}, the optimal parameters of the CS-SVM under the CDT/LDT loss have a unique description in terms of the SVD factors of a corresponding label-encoding matrix. In this section, we use this characterization to derive explicit expressions for the parameters' geometry as a function of $R,\rho,k$ and of the hyper-parameters $\deltab$.

Similar to Sec. \ref{sec:eigen_SEL}, throughout this section, we assume the data is STEP imbalanced and STEP logit adjustment is adopted. For simplicity, we consider the case $\rho=1/2$, $\delmin=1$ and $\delmaj=\Delta$. This choice is without loss of generality since the geometry only depends on the ratio $\delmaj/\delmin$. {We use the closed-form SVD in Sec. \ref{sec:eigen_SEL} derived by assuming $\alpha=1$. It is easy to see that a general $\alpha$ only introduces an appropriate scaling to the SVD factors. (See Remark \ref{remark:alpha}). Thus, using the closed-form expressions in Lemma \ref{lem:Zhat_SVD_general} for the corresponding $\Vb$, $\Lambdab$, and $\Ub$, the optimal parameters satisfy:}
\begin{align}
    {\W^*}^T{\W^*} = \tau \Vb \Lambdab \Vb^T, \quad {\Mb^*}^T{\Mb^*} = \frac{1}{\tau} \Ub \Lambdab \Ub^T, \quad {\W^*}^T{\Mb^*} = \Xib,
\end{align}
{for some positive scalar $\tau$ (that depends on $\nmin$ and $\alpha$). Since, $\tau$ onlys affects the scale of the geometry, in the lemmas we assume $\tau=1$ for brevity.}

%
%
%
%
%
%
{In Sec.~\ref{sec:cdt_geo}, we describe the geometric and asymptotic properties of the solutions of \eqref{eq:svm_cdt}, the CS-SVM under CDT loss. In Sec.~\ref{sec:ldt_geo} we characterize the same properties for problem \eqref{eq:svm_ldt} corresponding to the LDT loss.}
{In the following lemmas, we use $\w_\text{maj}$ when referring to any majority classifier $\w_c, c \in \{1,\ldots,k/2\}$, and $\w_\text{minor}$ for any minority classifier $\w_c, c \in \{k/2+1,\ldots,k\}$. Similarly, $\h_\text{maj}$ denotes any $\h_j$ with $j\in\{i\in[n]: y_i=1,\ldots,k/2\}$ and $\h_\text{minor}$ denotes any $\h_j$ with $j \in \{i\in[n]:y_i=k/2+1,\ldots,k\}$.}

\subsection{CDT Loss}\label{sec:cdt_geo}
\subsubsection{Norms and Angles}\label{sec:cdt_norm_ang}
\begin{lemma}[CDT classifiers]\label{lem:cdt_classifiers}
Let $\Vb$, $\Lambdab$, $\Ub$ be the eigen-factors of the $(\deltab,R)$-SEL matrix.
For the optimal classifier $\W$ of the CS-SVM \eqref{eq:svm_cdt}:

\begin{enumerate}[label={(\alph*)},itemindent=0em]
\item[\emph{\textbf{(a)}}] \label{lem:class_norm}
\noindent\emph{\textbf{(Norms)}} All the majority/minority classes have equal norms, 
\begin{align}\label{eq:w_norm}
    \|\wmaj\|_2^2 = \frac{\sqrt{R}}{\Delta}(1-2/k)+\frac{2\Delta^2\sqrt{R+1}}{k\Big(\sqrt{1+\Delta^{2}}\Big)^{3}},
    \quad \|\wmin\|_2^2 = (1-2/k)+\frac{2\sqrt{R+1}}{k\Big(\sqrt{1+\Delta^{2}}\Big)^{3}},
\end{align}
and the majority-minority norm-ratio is,
\begin{align*}
    \frac{\|\wmaj\|_2^2}{\|\wmin\|_2^2} = \frac{\frac{\sqrt{R}}{\Delta}(k-2){\big({1+\Delta^{2}}\big)^{3/2}}+2\Delta^2\sqrt{R+1}}{(k-2)\big({1+\Delta^{2}}\big)^{3/2}+2\sqrt{R+1}}.
\end{align*}

\item[\emph{\textbf{(b)}}]\label{lem:class_angle}
\noindent\emph{\textbf{(Angles)}} For each pair of majority/minority classifiers the angles are equal and,
\begin{align*}
    \Cos{\wmaj}{\wmaj'}&=\frac{-2\sqrt{R}+2\sqrt{R+1}\left(\sqrt{1 + \Delta^{-2}}\right)^{-3}}{(k-2)\sqrt{R}+2\sqrt{R+1}\left(\sqrt{1 + \Delta^{-2}}\right)^{-3}}
\\
\Cos{\wmin}{\wmin'}&=\frac{-2 + 2\sqrt{R+1}\left(\sqrt{1 + \Delta^{2}}\right)^{-3}}{k-2+2\sqrt{R+1}\left(\sqrt{1 + \Delta^{2}}\right)^{-3}}
\\
\Cos{\wmaj}{\wmin}&=-\frac{2\Delta\sqrt{R+1}}{k\left(\sqrt{1+\Delta^{2}}\right)^3\|\wmaj\|_2\|\wmin\|_2}\,.
\end{align*}
\end{enumerate}
\end{lemma}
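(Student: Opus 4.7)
The plan is to invoke Theorem~\ref{thm:SVM-VS}(ii), which reduces the claim to a direct calculation of the entries of $\W^{*T}\W^*$ through its identification with $\Vb\Lambdab\Vb^T$ (with the proportionality constant set to $1$ per the convention stated at the start of this section). Substituting the explicit factors from Lemma~\ref{lem:Zhat_SVD_general} for the case $\rho=1/2$, $\delmin=1$, $\delmaj=\Delta$, I would expand
\begin{align*}
\W^{*T}\W^* \;=\; \tfrac{\sqrt{R}}{\Delta}\,\Vb_{\text{maj}}\Vb_{\text{maj}}^T \;+\; \sqrt{\tfrac{R+1}{1+\Delta^2}}\,\vb\vb^T \;+\; \Vb_{\text{min}}\Vb_{\text{min}}^T
\end{align*}
as a $2\times 2$ block matrix with $(k/2)\times(k/2)$ blocks. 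The first and third summands contribute only to the top-left and bottom-right blocks respectively, each equal (after cancellation of the padding zeros) to $\Pb_{k/2}\Pb_{k/2}^T = \Id_{k/2} - \tfrac{2}{k}\ones_{k/2}\ones_{k/2}^T$. The middle summand $\vb\vb^T$ contributes a rank-one $c\,\ones_{k/2}\ones_{k/2}^T$ term to each of the four blocks, with the four constants differing only by the factors $\Delta^2$, $-\Delta$, $-\Delta$, $1$ inherited from the entries of $\vb$.

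Reading off the resulting block matrix then gives all the required quantities immediately. The diagonal entries of the top-left block yield $\|\wmaj\|_2^2$ and its off-diagonal entries yield $\wmaj^T\wmaj'$; the bottom-right block gives the corresponding minority quantities; the off-diagonal block, which receives contributions only from $\vb\vb^T$, yields $\wmaj^T\wmin$ directly. The permutation symmetry of $\Pb_{k/2}$ and of the $\ones_{k/2}$ vectors automatically guarantees the claim that all majority classifiers share a common norm (and likewise for the minorities), and that the pairwise inner products within each group are identical. Forming the three appropriate ratios then produces the cosines in part~\textbf{(b)} and the norm ratio in part~\textbf{(a)}.

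The only non-routine aspect is cosmetic algebraic rewriting so the answers line up with the forms in the statement. The raw expressions contain factors like $\Delta^2/(1+\Delta^2)^{3/2}$ and $1/(1+\Delta^2)^{3/2}$, and to reach the stated form one uses the identity $\Delta^3/(1+\Delta^2)^{3/2} = (\sqrt{1+\Delta^{-2}})^{-3}$ (for the majority cosine, after clearing $\Delta$ from numerator and denominator) and $1/(1+\Delta^2)^{3/2} = (\sqrt{1+\Delta^2})^{-3}$ (for the minority and majority-minority cosines). I anticipate no genuine obstacle beyond this bookkeeping; the entire proof is a mechanical unwinding of the SVD formulas of Lemma~\ref{lem:Zhat_SVD_general} combined with the structural characterization of Theorem~\ref{thm:SVM-VS}(ii).
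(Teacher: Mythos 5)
Your proposal is correct and follows essentially the same route as the paper: the paper's proof likewise invokes Theorem~\ref{thm:SVM-VS}(ii), substitutes the SVD factors of Lemma~\ref{lem:Zhat_SVD_general} (with $\rho=1/2$, $\delmin=1$, $\delmaj=\Delta$) to write $\Vb\Lambdab\Vb^T$ as a $2\times2$ block matrix whose blocks are $\Pb_{k/2}\Pb_{k/2}^T$ terms plus rank-one $\ones\ones^T$ contributions from $\vb\vb^T$, and reads off norms and inner products from the diagonal and off-diagonal entries. The algebraic rewriting you flag, including $\Delta^3/(1+\Delta^2)^{3/2}=(\sqrt{1+\Delta^{-2}})^{-3}$ for the majority cosine, is exactly the bookkeeping the paper performs.
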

\begin{proof}
Let $m=k/2$. From Thm. \ref{thm:SVM-VS}, $\W^T\W=\Vb\Lambdab\Vb^T.$ Using Lem.~\ref{lem:Zhat_SVD_general} we have,
\begin{align*}
\Vb\Lambdab\Vb^T &= \frac{\sqrt{R}}{\Delta}
\begin{bmatrix}
\Pb_m\Pb_m^T & 0 \\ 0 & 0
\end{bmatrix}
+\frac{2\sqrt{R+1}}{k(\sqrt{1+\Delta^2})^{3}}\begin{bmatrix}
\Delta^2\ones_{m}\ones_{m}^T & -\Delta\ones_{m}\ones_{m}^T
\\
-\Delta\ones_{m}\ones_{m}^T & \ones_{m}\ones_{m}^T
\end{bmatrix}
+
\begin{bmatrix}
0 & 0 \\ 0 & \Pb_m\Pb_m^T
\end{bmatrix} \nn\\
&=
\begin{bmatrix}
\frac{\sqrt{R}}{\Delta}\Id_{k/2}-\frac{2}{k}\left(\frac{\sqrt{R}}{\Delta}-\frac{\Delta^2\sqrt{R+1}}{(\sqrt{1+\Delta^{2}})^{3}}\right)\ones_{k/2}\ones_{k/2}^T
&
-\frac{2\Delta\sqrt{R+1}}{k(\sqrt{1+\Delta^2})^{3}}\ones_{k/2}\ones_{k/2}^T
\\
-\frac{2\Delta\sqrt{R+1}}{k(\sqrt{1+\Delta^2})^{3}}\ones_{k/2}\ones_{k/2}^T
&
\Id_{k/2}-\frac{2}{k}\left(1-\frac{\sqrt{R+1}}{(\sqrt{1+\Delta^{2}})^{3}}\right)\ones_{k/2}\ones_{k/2}^T
\end{bmatrix}.
\end{align*}
Inspecting the diagonal entries 
proves the norm equations. 
To prove part \textbf{(b)}, we use the off-diagonals entries that specify the inner-product of each pair of classifiers. Particularly,
\begin{align*}
\wmaj^T\wmaj' &= \frac{-2\sqrt{R} + 2\sqrt{R+1}\left(\sqrt{\Delta^{-2} + 1}\right)^{-3}}{k\Delta},
\\
\w_\text{{minor}}^T\w_\text{{minor}}' &= \frac{-2 + 2\sqrt{R+1}\left(\sqrt{1 + \Delta^{2}}\right)^{-3}}{k},
\\
\w_\text{{minor}}^T\wmaj &= \frac{-2\Delta\sqrt{R+1}}{k\,\left(\sqrt{\Delta^{2}+1}\right)^{3}}\,.
\end{align*}
These equations together with \eqref{eq:w_norm} complete the proof.
\end{proof}

\begin{lemma}
[CDT embeddings]\label{lem:cdt_embeddings}
Let $\Vb$, $\Lambdab$, $\Ub$ be the eigen-factors of the $(\deltab,R)$-SEL matrix.
For the optimal embeddings $\Hb$ of the CS-SVM \eqref{eq:svm_cdt}:

\begin{enumerate}[label={(\alph*)},itemindent=0em]
\item[\emph{\textbf{(a)}}] 
\noindent\emph{\textbf{(Norms)}} All the embeddings in the majority/minority classes have equal norms,
\begin{align*}
    \|\hmaj\|_2^2 = \frac{(1-2/k)}{\Delta\sqrt{R}}+\frac{2}{k \sqrt{R+1} \sqrt{1 + \Delta^{2}}}, \quad
\|\hmin\|_2^2 = (1-2/k)+\frac{2}{k\sqrt{R+1} \sqrt{1 + \Delta^{2}}},
\end{align*}
and the majority-minority norm-ratio is as follows,
\begin{align*}
    \frac{\|\hmaj\|_2^2}{\|\hmin\|_2^2} = \frac{\frac{1}{\Delta\sqrt{R}}(k-2)\sqrt{R+1} \sqrt{1 + \Delta^{2}} + 2}{(k-2)\sqrt{R+1} \sqrt{1 + \Delta^{2}} + 2}.
\end{align*}
\item[\emph{\textbf{(b)}}]
\noindent\emph{\textbf{(Angles)}} For each pair of majority/minority embeddings the angles are equal, and,
\begin{align*}
\Cos{\hmaj}{\hmaj'}&=\frac{-2\sqrt{\Delta^{-2}+1}\sqrt{R+1} + 2\sqrt{R}}{(k-2)\sqrt{\Delta^{-2}+1}\sqrt{R+1} + 2\sqrt{R}}
\\
\Cos{\hmin}{\hmin'}&=\frac{-2\sqrt{\Delta^{2}+1}\sqrt{R+1} + 2}{(k-2)\sqrt{\Delta^{2}+1}\sqrt{R+1} + 2}
\\
\Cos{\hmaj}{\hmin}&=\frac{-2}{k\sqrt{\Delta^{2}+1}\,\sqrt{R+1}\,\|\hmaj\|_2\|\hmin\|_2}\,.
\end{align*}
\end{enumerate}

\end{lemma}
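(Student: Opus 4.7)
\medskip

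The plan is to mirror the proof of Lem.~\ref{lem:cdt_classifiers}, but apply it to the mean-embeddings factor $\Mb^{*T}\Mb^*$ instead of the classifier factor $\W^{*T}\W^*$. First, by NC (Thm.~\ref{thm:SVM-VS}(i)), every $\h_i$ equals the class mean $\mub^*_{y_i}$, so all embedding norms and embedding angles within/across classes coincide with those of the columns of $\Mb^*$. Thus it suffices to analyze the Gram matrix $\Mb^{*T}\Mb^*$, which by Thm.~\ref{thm:SVM-VS}(ii) (taking $\tau=1$ as explained in the preamble of this section) is given in closed form by $\Ub\Lambdab\Ub^T$, with the SVD factors of the $(\deltab,R)$-SEL matrix spelled out in Lem.~\ref{lem:Zhat_SVD_general}.

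Next, I specialize to $\rho=1/2$, $\delmin=1$, $\delmaj=\Delta$, so that $\rhobar k = \rho k = k/2 =: m$, the middle singular value becomes $\sqrt{(1+R)/(1+\Delta^2)}$, and the vector $\ub$ simplifies to $\sqrt{2/(k(1+R))}\,[-\ones_m^T,\ones_m^T]^T$. I then expand
\begin{align*}
\Ub\Lambdab\Ub^T \;=\; \tfrac{\sqrt R}{\Delta}\,\Ub_{\text{maj}}\Ub_{\text{maj}}^T \;+\; \sqrt{\tfrac{1+R}{1+\Delta^2}}\,\ub\ub^T \;+\; \Ub_{\text{min}}\Ub_{\text{min}}^T
\end{align*}
in 2x2 block form, using $\Pb_m\Pb_m^T=\Id_m-\tfrac{2}{k}\ones_m\ones_m^T$ to replace the outer-product blocks of $\Ub_{\text{maj}}\Ub_{\text{maj}}^T$ and $\Ub_{\text{min}}\Ub_{\text{min}}^T$. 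This yields an expression of the form $\text{diag}(a_1\Id_m,a_2\Id_m)+b\,\ones_k\ones_k^T$-type blocks with explicit coefficients in $R,\Delta,k$.

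Part \textbf{(a)} then follows by reading off the diagonal entries of the top-left and bottom-right blocks: the top-left diagonal gives $\|\hmaj\|_2^2 = \tfrac{1}{\Delta\sqrt R}(1-\tfrac{2}{k})+\tfrac{2}{k\sqrt{(R+1)(1+\Delta^2)}}$ and the bottom-right diagonal gives $\|\hmin\|_2^2 = (1-\tfrac{2}{k})+\tfrac{2}{k\sqrt{(R+1)(1+\Delta^2)}}$; the norm ratio is immediate. Part \textbf{(b)} follows by reading off off-diagonal entries: entries inside the top-left block give $\hmaj^T\hmaj'$, entries inside the bottom-right block give $\hmin^T\hmin'$, and entries of the off-diagonal block give $\hmaj^T\hmin=-\tfrac{2}{k\sqrt{(R+1)(1+\Delta^2)}}$; dividing by the appropriate norms (and using $\sqrt{\Delta^{-2}+1}=\sqrt{1+\Delta^2}/\Delta$ to match the stated form) gives the three cosine formulas.

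There is no substantial obstacle here: the argument is an entirely mechanical specialization of the block computation already carried out in the proof of Lem.~\ref{lem:cdt_classifiers}, only now for the $\Ub$-side of the SVD rather than the $\Vb$-side. The mildly finicky step is the algebraic rearrangement needed to bring the cosine expressions into the symmetric form stated in the lemma (in particular, multiplying numerator and denominator by $k\Delta\sqrt R$ and using the identity $\sqrt{\Delta^{-2}+1}=\sqrt{1+\Delta^2}/\Delta$), but this is purely cosmetic.
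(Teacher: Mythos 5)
Your proposal is correct and follows essentially the same route as the paper's own proof: reduce to the mean-embeddings via NC, use $\Mb^{*T}\Mb^*=\Ub\Lambdab\Ub^T$ from Thm.~\ref{thm:SVM-VS}(ii), plug in the explicit SVD factors of Lem.~\ref{lem:Zhat_SVD_general} specialized to $\rho=1/2$, expand in block form, and read off diagonal and off-diagonal entries (the paper carries out the identical computation, only writing the blocks with the $\otimes\ones_R\ones_R^T$ repetition structure, which is equivalent under NC). The algebraic simplifications you describe, including the $1/\sqrt{R}$ scaling inside $\Ub_{\text{maj}}$ and the identity $\sqrt{\Delta^{-2}+1}=\sqrt{1+\Delta^2}/\Delta$, reproduce exactly the stated norms and cosines.
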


\begin{proof}
    By the NC property, to find the norms and angles of the embeddings, it suffices to analyze the mean-embeddings $\Mb$, for which, following Thm.~\ref{thm:SVM-VS}, we have $\Mb^T\Mb=\Ub\Lambdab\Ub^T.$ By Lemma \ref{lem:Zhat_SVD_general},
\begin{align*}
&\Ub\Lambdab\Ub^T = \frac{\sqrt{R}}{\Delta}
\begin{bmatrix}
\frac{1}{R}\Pb_m\Pb_m^T\otimes\ones_R\ones_R^T & 0 \\ 0 & 0
\end{bmatrix}
+\frac{2}{k\sqrt{R+1}\sqrt{\Delta^2+1}}\begin{bmatrix}
\ones_{Rm}\ones_{Rm}^T & -\ones_{Rm}\ones_{m}^T
\\
-\ones_{m}\ones_{Rm}^T & \ones_{m}\ones_{m}^T
\end{bmatrix}
+
\begin{bmatrix}
0 & 0 \\ 0 & \Pb_m\Pb_m^T
\end{bmatrix} \nn
\\
&\quad=
\begin{bmatrix}
\left(\frac{1}{\Delta\sqrt{R}}\Id_{k/2}-\frac{2}{k}\left(\frac{1}{\Delta\sqrt{R}}-\frac{1}{\sqrt{R+1}\sqrt{\Delta^2+1}}\right)\ones_{k/2}\ones_{k/2}^T\right)\otimes \ones_R\ones_R^T
&
-\frac{2}{k\sqrt{R+1}\sqrt{\Delta^2+1}}\ones_{k/2}\ones_{k/2}^T
\\
-\frac{2}{k\sqrt{R+1}\sqrt{\Delta^2+1}}\ones_{k/2}\ones_{k/2}^T
&
\Id_{k/2}-\frac{2}{k}\left(1-\frac{1}{\sqrt{R+1}\sqrt{\Delta^2+1}}\right)\ones_{k/2}\ones_{k/2}^T
\end{bmatrix}\,.
\end{align*}

The diagonal entries determine the norm of the embeddings as in part \textbf{(a)} and the off-diagonals entries specify the inner-product of each pair of the embeddings. Particularly,
\begin{align*}
\hmaj^T\hmaj' &= -\frac{2}{k}\left(\frac{1}{\Delta\sqrt{R}}-\frac{1}{\sqrt{R+1}\sqrt{\Delta^2+1}}\right)
\\
\hmin^T\hmin' &= -\frac{2}{k}\left(1-\frac{1}{\sqrt{R+1}\sqrt{\Delta^2+1}}\right)
\\
\hmin^T\hmaj &= -\frac{2}{k\sqrt{R+1}\sqrt{\Delta^2+1}}\,.
\end{align*}
Combining these with the norm calculations of part \textbf{(a)} completes the proof. 
\end{proof}

In the next lemma, we calculate the angles between an embedding and its corresponding classifier. Particularly, we give closed-form expression for $\Cos{\w_c}{\h_i}$ for $c\in[k], i: y_i=c$, which can be thought of the degree of alignment between classifiers and embeddings.

\begin{lemma}[CDT: Alignment of classifiers and embeddings]\label{lem:align} The angles between majority/minority embeddings and the their corresponding classifiers are all equal:
\begin{align*}
\Cos{\wmaj}{\hmaj} &= \frac{k\Delta^2 + (k-2)}{k\,\Delta\left(\Delta^{2} + 1\right)\|\wmaj\|_2\|\hmaj\|_2}, \nonumber\\ 
\Cos{\wmin}{\hmin} &= \frac{k\Delta^{-2} + (k-2)}{k\,\left(\Delta^{-2}+1\right)\|\wmin\|_2\|\hmin\|_2}.
\end{align*}

\end{lemma}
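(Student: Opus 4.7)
The plan is to reduce the alignment cosine to a diagonal entry of the label-encoding matrix $\Xib$, and then divide by the already-computed norms from Lemmas~\ref{lem:cdt_classifiers} and~\ref{lem:cdt_embeddings}. First, by the NC statement of Thm.~\ref{thm:SVM-VS}(i), every embedding collapses to its class mean, so $\h_i=\mub_{y_i}$ and in particular $\|\h_i\|_2=\|\mub_{y_i}\|_2$ and $\w_{y_i}^T\h_i=\w_{y_i}^T\mub_{y_i}$. Hence it suffices to compute $\w_c^T\mub_c$ for each class $c$ and then normalize.

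Next, I would invoke Thm.~\ref{thm:SVM-VS}(ii), which states $\W^{*T}\M^{*}=\Xib$ for the CDT CS-SVM minimizers. Thus $\w_c^T\mub_c=\Xib[c,c]$, the diagonal entries of the matrix from Defn.~\ref{def:sel}. Using the STEP structure with $\rho=1/2$, $\delta_c\in\{\Delta,1\}$, the sum in the denominator becomes
\begin{align*}
\sum_{c'\in[k]}\delta_{c'}^{-2}=\tfrac{k}{2}\bigl(\Delta^{-2}+1\bigr).
\end{align*}
Substituting into the formula $\Xib[c,c]=\delta_c^{-1}\bigl(1-\delta_c^{-2}/\sum_{c'}\delta_{c'}^{-2}\bigr)$ and simplifying yields
\begin{align*}
\w_\text{maj}^T\h_\text{maj}=\Xib[\text{maj},\text{maj}]=\frac{k\Delta^{2}+(k-2)}{k\,\Delta\bigl(\Delta^{2}+1\bigr)},\qquad
\w_\text{minor}^T\h_\text{minor}=\Xib[\text{minor},\text{minor}]=\frac{k\Delta^{-2}+(k-2)}{k\bigl(\Delta^{-2}+1\bigr)}.
\end{align*}

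Finally, dividing each inner product by $\|\w_\text{maj}\|_2\|\h_\text{maj}\|_2$ (resp.\ $\|\w_\text{minor}\|_2\|\h_\text{minor}\|_2$), with the norms supplied by Lems.~\ref{lem:cdt_classifiers}(a) and \ref{lem:cdt_embeddings}(a), delivers exactly the two claimed expressions for $\cos(\w_\text{maj},\h_\text{maj})$ and $\cos(\w_\text{minor},\h_\text{minor})$. I do not foresee any real obstacle: the argument is entirely mechanical once one observes that Thm.~\ref{thm:SVM-VS} already pins down the relevant inner product through $\Xib$. The only minor point of care is keeping track of the scaling factor $\tau$ discussed in Remark~\ref{remark:alpha}, but since $\tau$ multiplies $\|\w_c\|_2^2$ and divides $\|\mub_c\|_2^2$ by the same factor, it cancels in the cosine and does not affect the final formulas.
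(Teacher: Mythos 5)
Your proposal is correct and matches the paper's own proof: the paper likewise reads off $\w_c^T\h_i$ from the diagonal entries of $\Xib$ (via $\W^{*T}\Hb^*=\Zhat$, i.e.\ $\W^{*T}\M^*=\Xib$ with $\sum_{c'}\delta_{c'}^{-2}=\tfrac{k}{2}(\Delta^{-2}+1)$ under the $\rho=1/2$ STEP structure) and then divides by the norms. Your remark that the scaling $\tau$ is immaterial is also consistent with Remark~\ref{remark:alpha}, since $\W^{*T}\M^*=\Xib$ holds exactly regardless of $\tau$.
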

\begin{proof}
	Recalling $\W^T\Hb=\Zhat$, for all $c\in[k]$ and $i:y_i= c$ it holds that $\w_c^T\h_i=\Delta^{-1}\left(1-\frac{2}{k(\Delta^2+1)}\right)$ if $c$ is a majority class, and  $\w_c^T\h_i=\left(1-\frac{2}{k(\Delta^{-2}+1)}\right)$ otherwise.
\end{proof}
\subsubsection{Asymptotics}\label{sec:cdt_asymp}
We present the limiting values of the norm-ratios and angles in the asymptotic regime $\Delta={R^{\gamma}}, \gamma \in \R$ and $R\rightarrow\infty$. This parameterization is interesting because it can guide us on how to maintain finite angles between classifiers and embeddings as the imbalance ratio grows large. Specifically, the angles are as shown in Table~\ref{table:cdt_asymptotics}.
\begin{table}[h]
\centering
    \begin{tabular}{|c|c|c|c|}
    \hline
    $\cos({\w_c},{\w_c'})$& $\gamma < 1/6$ & $\gamma = 1/6$ & $\gamma > 1/6$ \\
    \hline
    $c, c' \in$ \text{minority} & $1$ & $0$ & $-\frac{2}{k-2}$\\
    \hline
    \hline
    $\cos({\w_c},{\w_c'})$& $\gamma < 0$ & $\gamma = 0$ & $\gamma > 0$ \\
    \hline
    $c, c' \in$ \text{majority} & $-\frac{2}{k-2}$ & $\frac{1-2\sqrt{2}}{1+\sqrt{2}(k-2)}$ & $0$\\
    \hline
    \hline
    $\cos({\h_c},{\h_c'})$& $\gamma < 0$ & $\gamma = 0$ & $\gamma > 0$ \\
    \hline
    $c, c' \in$ \text{minority} & $0$ & $-\frac{2}{k-2}$ & $-\frac{2}{k-2}$\\
    \hline
    $c, c' \in$ \text{majority} & $0$ & $\frac{2-2\sqrt{2}}{2+(k-2)\sqrt{2}}$ & $-\frac{2}{k-2}$ \\
    \hline
\end{tabular}
\caption{Asymptotic values of angles for CDT with $\Delta={R^{\gamma}}, \gamma \in \R$ and $R\rightarrow\infty$.}
\label{table:cdt_asymptotics}
\end{table}
\subsubsection{Centering}\label{sec:CDT_center}
Assuming that the classifiers follow the geometry in Thm. \ref{thm:SVM-VS}, $\w^*_c,~c\in[k]$ are centered around zero after some re-weighting, i.e. $\sum_{c\in[k]}\w^*_c/\delta_{c} =0$. This is immediate from $
\Vb^T\Db^{-1}\ones_k=0$ and ${\W^*}^T\W^*=\Vb\Lambdab\Vb^T$. \\
%
The embeddings $\h_i,i\in[n]$ are also not centered around zero in general. Instead, it holds that 
\begin{align}\label{eq:cdt_centering}
\sum_{i\in[n]}\frac{1}{n_{y_i}} \h^*_i = 0.
\end{align}
Note that this reduces to $\sum_{i\in[n]}\h^*_i$ for balanced data, and remains unchanged for any choice of the hyperparameters $\deltab$. Eqn. \eqref{eq:cdt_centering} is also equivalent to $\sum_{c\in[k]} \mub^*_c = 0$, with $\mub^*_c, ~c\in[k]$ the mean embeddings of each class.

\subsection{LDT Loss}\label{sec:ldt_geo}
\subsubsection{Norms and Angles}
From Thm. \ref{thm:SVM-VS}, solutions $(\W^*,\M^*\Db)$ of the CS-SVM  under LDT loss in \eqref{eq:svm_ldt}, follow the SELI geometry \citep{seli}, with imbalance ratio $\Tilde{R} = R(\delta_\text{min}/\delta_\text{maj})^2$. Thus, the corresponding norms and angles can be found by analyzing the $(\ones_k,\Tilde{R})$-SELI structure (up to a norm scaling by $\Db$ for the mean embeddings $\Mb^*$). We refer the reader to \citet[Sec. B.1]{seli} for closed form expressions of the $(\ones_k,\Tilde{R})$-SELI. We repeat some key formulas below for showing explicit dependence on $\Delta$.

\begin{corollary}[LDT: Norm ratios and classifier angles]\label{cor:ldt_norms_angles} 
For the optimal solution $(\W,\Hb)$ of the CS-SVM \eqref{eq:svm_ldt}:
\begin{align*}
&\frac{\|\wmaj\|_2^2}{\|\wmin\|_2^2} = \frac{(k-2)\sqrt{R}+{\sqrt{(R+\Delta^2)/2}}}{(k-2)\Delta+{\sqrt{(R+\Delta^2)/2}}},
\\
&\frac{\|\hmaj\|_2^2}{\|\hmin\|_2^2} = \frac{\frac{1}{\sqrt{R}}(k-2)+\frac{1}{\sqrt{(R+\Delta^2)/2}}}{(k-2)\Delta+\frac{\Delta^2}{\sqrt{(R+\Delta^2)/2}}},
\\
&\cos({\wmaj},{\wmaj'})=\frac{-2\sqrt{R}+\sqrt{{(R+\Delta^2)/2}}}{(k-2)\sqrt{R}+\sqrt{{(R+\Delta^2)/2}}},
\\
&\cos({\w_\text{\emph{min}}},{\w_\text{\emph{min}}'})=\frac{-2\Delta + \sqrt{(R+\Delta^2)/2}}{(k-2)\Delta+\sqrt{(R+\Delta^2)/2}}.
\end{align*}
\end{corollary}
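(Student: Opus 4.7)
The plan is to leverage Thm.~\ref{thm:SVM-VS}(iii) to reduce the LDT geometry to the $(\ones_k,\tilde R)$-SELI geometry with $\tilde R = R/\Delta^2$, and then invoke the closed-form SVD of the $(\ones_k,\tilde R)$-SEL matrix (Lem.~\ref{lem:Zhat_SVD_general}) to read off the desired norms and angles. Crucially, the lemmas of Sec.~\ref{sec:cdt_norm_ang} have already carried out the algebra for an arbitrary $(\deltab,R)$-SEL matrix with $\delmin=1$ and $\delmaj=\Delta$, so specializing those formulas with $\Delta\leftarrow 1$ and $R\leftarrow \tilde R$ will do the bulk of the work.

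Concretely, first I would apply Lem.~\ref{lem:cdt_classifiers} to the $(\ones_k,\tilde R)$-SELI geometry (which, by Thm.~\ref{thm:SVM-VS}(iii), is exactly what $\W^*$ follows): the classifier norms become
\[
\|\wmaj\|_2^2 = \sqrt{\tilde R}(1-2/k) + \tfrac{2}{k}\sqrt{(\tilde R+1)/2}, \qquad \|\wmin\|_2^2 = (1-2/k) + \tfrac{2}{k\sqrt{2(\tilde R+1)}}\cdot(\tilde R + 1)^{1/2},
\]
and the classifier angles take analogous forms. Next, substitute $\tilde R = R/\Delta^2$, so that $\sqrt{\tilde R}=\sqrt{R}/\Delta$ and $\sqrt{(\tilde R+1)/2}=\sqrt{(R+\Delta^2)/2}/\Delta$, and multiply numerators and denominators of the resulting ratios by $k\Delta$ (or by $\Delta$ for the cosine formulas). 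This directly yields the stated formulas for $\|\wmaj\|_2^2/\|\wmin\|_2^2$, $\cos(\wmaj,\wmaj')$, and $\cos(\wmin,\wmin')$.

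For the embedding norm ratio, the key subtlety, and the only place one must be careful, is that Thm.~\ref{thm:SVM-VS}(iii) asserts the SELI geometry for $\M^*\Db$ rather than $\M^*$ itself. Thus, if $\tilde\mub^*_c := \delta_c\mub^*_c$, Lem.~\ref{lem:cdt_embeddings} (specialized to $\Delta\leftarrow 1$, $R\leftarrow\tilde R$) gives
\[
\|\tilde\mub^*_{\maj}\|_2^2 = \tfrac{1-2/k}{\sqrt{\tilde R}} + \tfrac{2}{k\sqrt{2(\tilde R+1)}}, \qquad \|\tilde\mub^*_{\minor}\|_2^2 = (1-2/k) + \tfrac{2}{k\sqrt{2(\tilde R+1)}},
\]
and then, since $\h^*_i = \mub^*_{y_i}$ by NC, we recover $\|\hmaj\|_2^2 = \|\tilde\mub^*_{\maj}\|_2^2/\Delta^2$ and $\|\hmin\|_2^2 = \|\tilde\mub^*_{\minor}\|_2^2$. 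Substituting $\tilde R = R/\Delta^2$ and simplifying (multiplying numerator and denominator by $k\Delta$) yields the stated ratio.

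The argument is essentially a careful bookkeeping exercise; the only step that requires thought is tracking the asymmetric rescaling by $\Db^{-1}$ between majority and minority embeddings, which breaks the visible symmetry of the SELI embedding formulas and produces the $\Delta^2$ factor appearing in the denominator of the $\|\hmaj\|_2^2/\|\hmin\|_2^2$ expression. Once this rescaling is handled, all four formulas follow by direct substitution into the already-derived closed-form expressions, with no further SVD or optimization computations needed.
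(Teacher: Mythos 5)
Your strategy is essentially the paper's own: Thm.~\ref{thm:SVM-VS}(iii) reduces the LDT solution to the $(\ones_k,\tilde R)$-SELI geometry with $\tilde R=R/\Delta^2$, and the paper then reads the norms and angles off the closed-form $(\ones_k,\tilde R)$-SELI expressions (citing \citet[Sec.~B.1]{seli}), which are exactly Lemmas~\ref{lem:cdt_classifiers} and \ref{lem:cdt_embeddings} specialized to $\Delta=1$, as you propose; you also correctly isolate the one real subtlety, that the SELI statement holds for $\M^*\diag(\deltab)$, so the majority embedding norms must be divided by $\Delta^2$ before forming the ratio.

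One caveat: your intermediate classifier-norm expressions contain an arithmetic slip. With $\Delta=1$, the term $\frac{2\Delta^2\sqrt{\tilde R+1}}{k(\sqrt{1+\Delta^2})^3}$ equals $\frac{1}{k}\sqrt{(\tilde R+1)/2}$ (since $(\sqrt2)^3=2\sqrt2$), not $\frac{2}{k}\sqrt{(\tilde R+1)/2}$, and your displayed $\|\wmin\|_2^2$ actually simplifies to $(1-2/k)+\sqrt2/k$, losing the $\tilde R$-dependence. Taken literally, substituting $\tilde R=R/\Delta^2$ into your expressions would give $\frac{(k-2)\sqrt R+2\sqrt{(R+\Delta^2)/2}}{(k-2)\Delta+2\sqrt{(R+\Delta^2)/2}}$ rather than the stated ratio. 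With the corrected coefficient $\frac{1}{k}$ (which you do use correctly in the embedding computation, where your intermediate values are right), the bookkeeping goes through and all four formulas follow as claimed.
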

\subsubsection{Asymptotics}
Similar to the calculations for the CDT case, we present the limiting values of the norm-ratios and angles in the asymptotic regime $\Delta={R^{\gamma}}, \gamma \in \R$ and $R\rightarrow\infty$. Then, the angles are as given in Table~\ref{table:ldt_asymptotics}:
\begin{table}[h]
\centering
    \begin{tabular}{|c|c|c|c|}
    \hline
    $\cos({\w_c},{\w_c'})$& $\gamma < 1/2$ & $\gamma = 1/2$ & $\gamma > 1/2$ \\
    \hline
    $c, c' \in$ \text{minority} & $1$ & $-\frac{1}{k-1}$ & $\frac{1-2\sqrt{2}}{1+\sqrt{2}(k-2)}$\\
    \hline
    $c, c' \in$ \text{majority} & $\frac{1-2\sqrt{2}}{1+\sqrt{2}(k-2)}$ & $-\frac{1}{k-1}$ & $1$ \\
    \hline
    \hline
    $\cos({\h_c},{\h_c'})$& $\gamma < 1/2$ & $\gamma = 1/2$ & $\gamma > 1/2$ \\
    \hline
    $c, c' \in$ \text{minority} & $-\frac{2}{k-2}$ & $-\frac{1}{k-1}$ & $\frac{2-2\sqrt{2}}{2+\sqrt{2}(k-2)}$\\
    \hline
    $c, c' \in$ \text{majority} & $-\frac{\sqrt{2}}{-\sqrt{2}+k(1+\sqrt{2})}$ & $-\frac{1}{k-1}$ & $-\frac{2}{k-2}$ \\
    \hline
\end{tabular}
\caption{Asymptotic values of angles for LDT with $\Delta={R^{\gamma}}, \gamma \in \R$ and $R\rightarrow\infty$.}
\label{table:ldt_asymptotics}
\end{table}
\subsubsection{Centering}\label{sec:LDT_center}
The optimal classifiers and features $(\W^*, \Mb^*\Db)$ follow the $(\ones_k,\Tilde{R})$-SELI structure. Thus (see \citet[Sec.~B.1.4]{seli}), the classifiers $\w^*_c, c\in[k]$ are centered around zero. However the embeddings are centered around zero after a reweighting that depends both on $\delta_c$ and $n_c,~c\in[k]$. Specifically, $\sum_{c\in[k]} \delta_c{\mub}^*_c = 0$, or equivalently,
\begin{align}\label{eq:ldt_centering}
   \sum_{i\in[n]}{\frac{\delta_{y_i}}{n_{y_i}} {\h}^*_i = 0}. 
\end{align}

\section{Numerical Results}\label{sec:Additional_Experiments}
In this section, we provide additional details and discussions on our experiments. 
\subsection{{Additional Experimental Details}} \label{sec:app_exp_details}
In Sec.~\ref{sec:num_results}, we investigated the convergence of SGD steps for CDT/LDT loss in \eqref{eq:CDT_loss}/\eqref{eq:LDT_loss} to the implicit geometries of Thm.~\ref{thm:SVM-VS}. Here, we describe the experimental setup in more details.

\vspace{3pt}
\noindent\textbf{UFM experiments.}
 We train the UFM as a two-layer network (no biases) with $n = 275$ inputs, $d=20$ hidden units and $k=10$ classes, trained on the basis vectors in $\R^n$. The labels for each vector are chosen such that the dataset is $(R=10,\rho={1}/{2})$-STEP imbalanced, with $\nmin=5$ and a batch size of $5$. We further use STEP logit adjustment, and choose $\Delta = R^\gamma$ with $\gamma\in[-1.5,1.5]$. We train all models with the same constant learning rate for $6000$ epochs. We normalize $\deltab$ {so that $\ones_k^T\deltab=k$}, since we empirically observe that for a fixed  ratio $\Delta$ the convergence speed depends on the magnitude of $\deltab$.

\vspace{3pt}
\noindent\textbf{Deep-net experiments.}
We train (i) ResNet18 on CIFAR10, and (ii) an MLP with batch-norm and ReLU activations on MNIST, both under a $R=10$ imbalance ratio. The MLP model consists of 6  fully-connected layers of width 2048, each followed by batch-norm and ReLU activations. We train the models for 350 epochs with an initial learning rate of $0.1$ reduced at epochs 116 and 232 by a factor of $10$, with a batch size of 128. Following the same setting as in \cite{NC, seli}, we set momentum and weight decay to $0.9$ and $10^{-5}$ respectively. We also normalize $\deltab$ to sum to $k$, similar to UFM. We perform the experiments in Fig.~\ref{fig:convergence_to_theory} without any data augmentation, remaining consistent with previous works on neural collapse \citep[e.g.,][]{NC}). 

\vspace{3pt}
\noindent\textbf{Generalization experiments (Sec.~\ref{sec:test_numerical}).} In order to achieve results close to the state-of-the-art, we perform data augmentation as in \cite{TengyuMa,CDT} on all datasets: images are resized to $32\times32$, padded, randomly flipped and cropped. For MLP experimets, we use the same architecture described above. Our best balanced test accuracy is comparable to previous results by \cite{CDT}. We highlight two differences in our experiments. First, we use ResNet18 instead of ResNet32, and second, we train the models without applying the delayed reweighting (DRW) technique \citep{TengyuMa}. Although DRW has been shown to improve performance empirically, we refrained from using it in our experiments to evaluate the performance of CDT and LDT independent of reweighting.


We conducted additional experiments with imbalance ratio $R=2,5,20$. We have not included those results due to their similarity to $R= 10$; however, we will discuss the impacts of higher imbalance ratio $R$ and hyperparameter $\gamma$ in the following section.

\subsection{Speed of Convergence} \label{sec:OptIm_Discuss}
{We empirically observe that the UFM parameters converge more slowly to the global optimizers in Thm.~\ref{thm:SVM-VS} as the imbalance ratio $R$ and hyperparameter $|\gamma|$ increase. A similar observation for large values of $R$ is also reported in \cite{seli}. To illustrate the speed of convergence, we measure the distance of the SGD steps to the predicted implicit geometry during training. In particular, at each step $(\W_t,\M_t)$, we compute $\norm{\frac{\W_t^T\W_t}{\norm{\W_t^T\W_t}} - \frac{{\W^*}^T\W^*}{\norm{{\W^*}^T\W^*}}}_F$  for the classifiers and $\norm{\frac{\M_t^T\M_t}{\norm{\M_t^T\M_t}} - \frac{{\M^*}^T\M^*}{\norm{{\M^*}^T\M^*}}}_F$ for the centered mean-embeddings, where $(\W^*,\M^*)$ are as described by Thm.~\ref{thm:SVM-VS}. Fig.~\ref{fig:OPTIM_Fig} illustrates the convergence behaviour of the parameters for UFM and ResNet18. While as training progresses, the classifiers/embeddings get closer to the predicted geometry, imbalance ratio and hyperparameter values can significantly slow down the convergence. This behaviour appear for both UFM and deep-net experiments.
}

\begin{figure*}[ht]
\centering
\begin{subfigure}{.49\textwidth}
  \centering
  \begin{tikzpicture}
	\node at (0,0.0)
    {\includegraphics[width=0.95\linewidth]{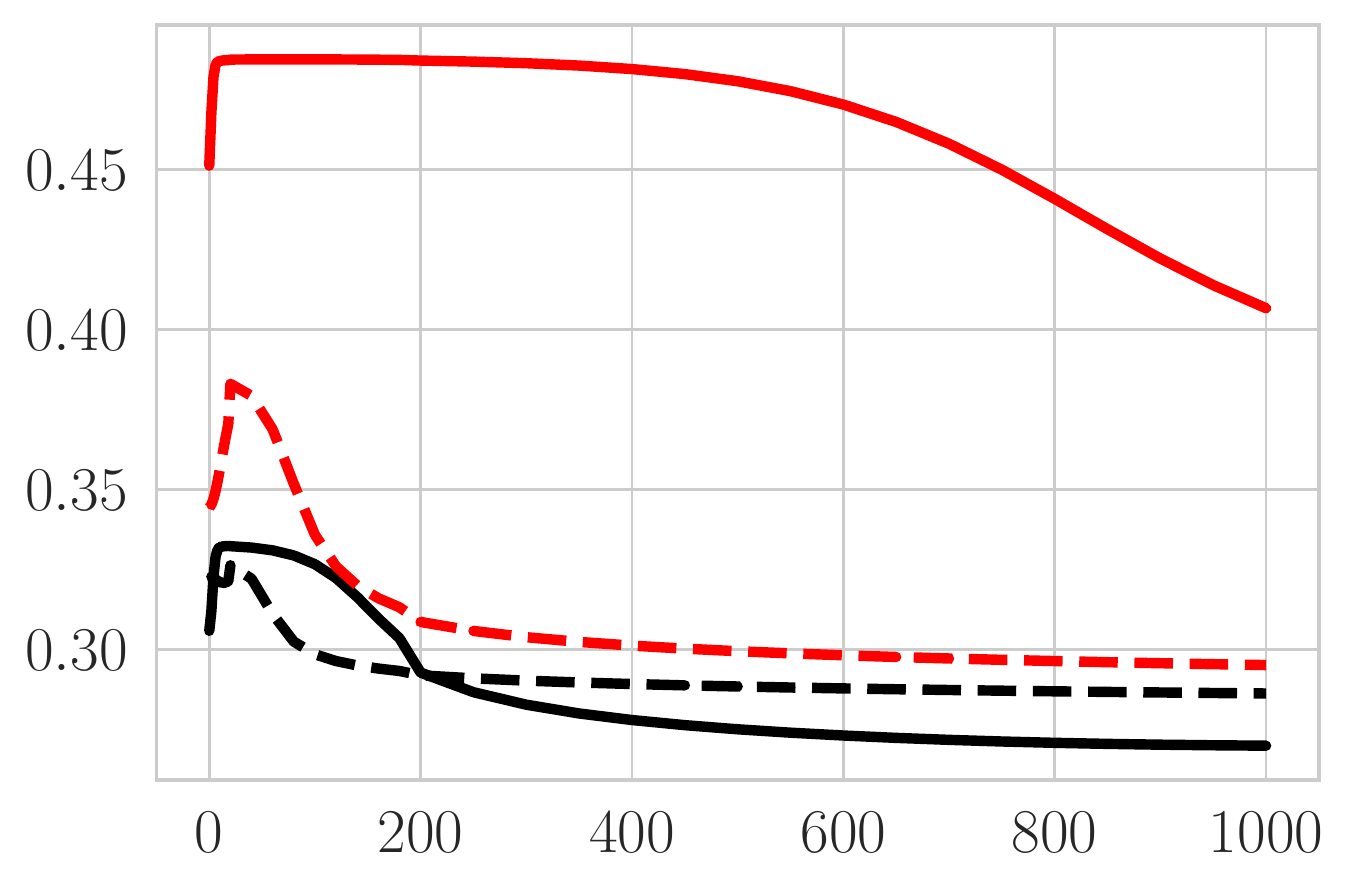}};
	\node at (0.0, 2.6)  [scale=1.0, rotate=0]{\textbf{Classifier Convergence}};
 \node at (-4.1,-0.0)  [scale=0.9, rotate=90]{\textbf{UFM}};
  \end{tikzpicture}
\end{subfigure}
\begin{subfigure}{.49\textwidth}
  \centering
  \begin{tikzpicture}
	\node at (0,0.0)
    {\includegraphics[width=0.95\linewidth]{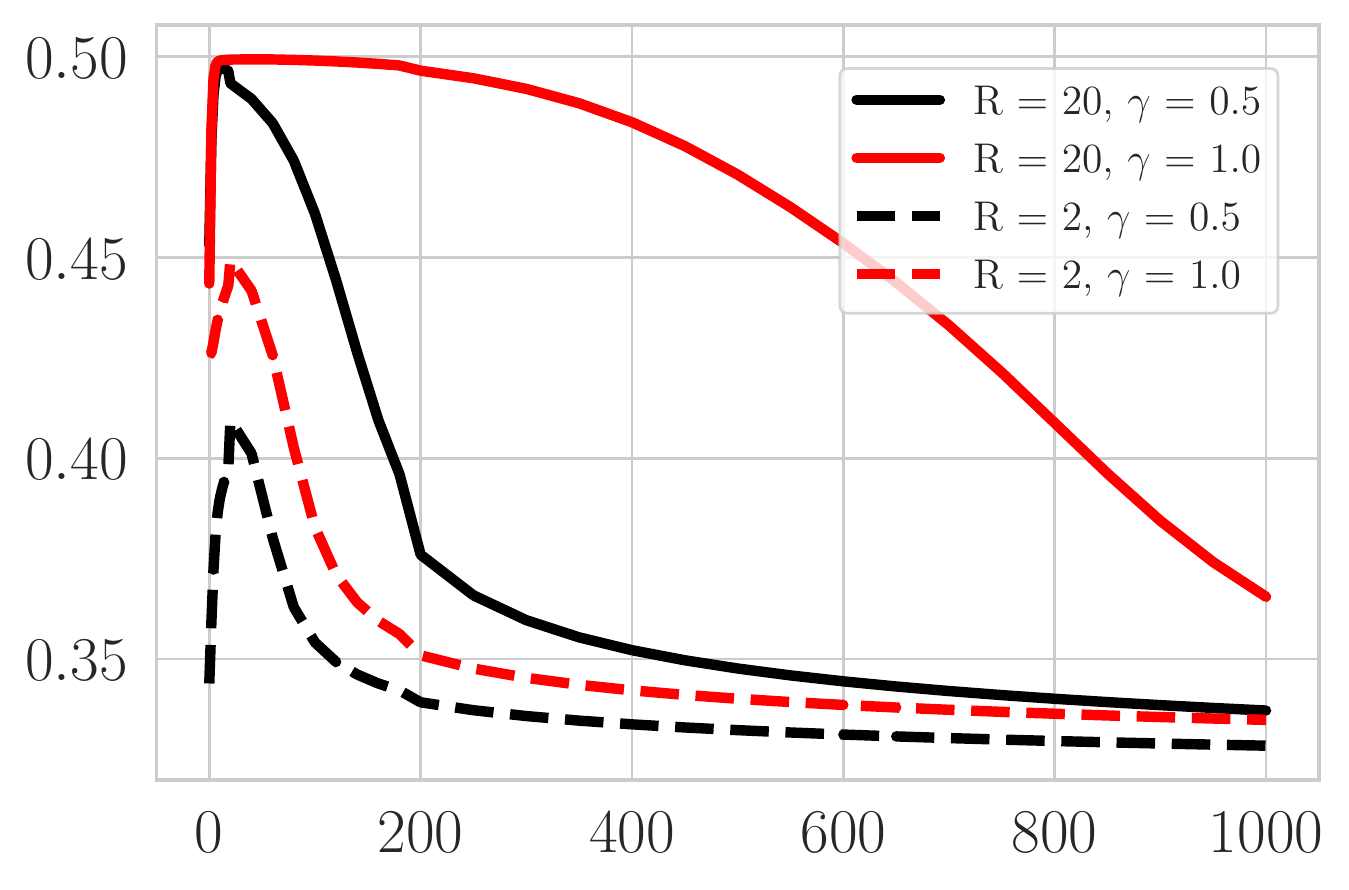}};
    \node at (0.0, 2.6)  [scale=1.0, rotate=0]{\textbf{Feature Convergence}};
  \end{tikzpicture}
\end{subfigure}
\vspace{-0.4cm}

\begin{subfigure}{.49\textwidth}
  \centering
  \begin{tikzpicture}
	\node at (0,0.0)
    {\includegraphics[width=0.95\linewidth]{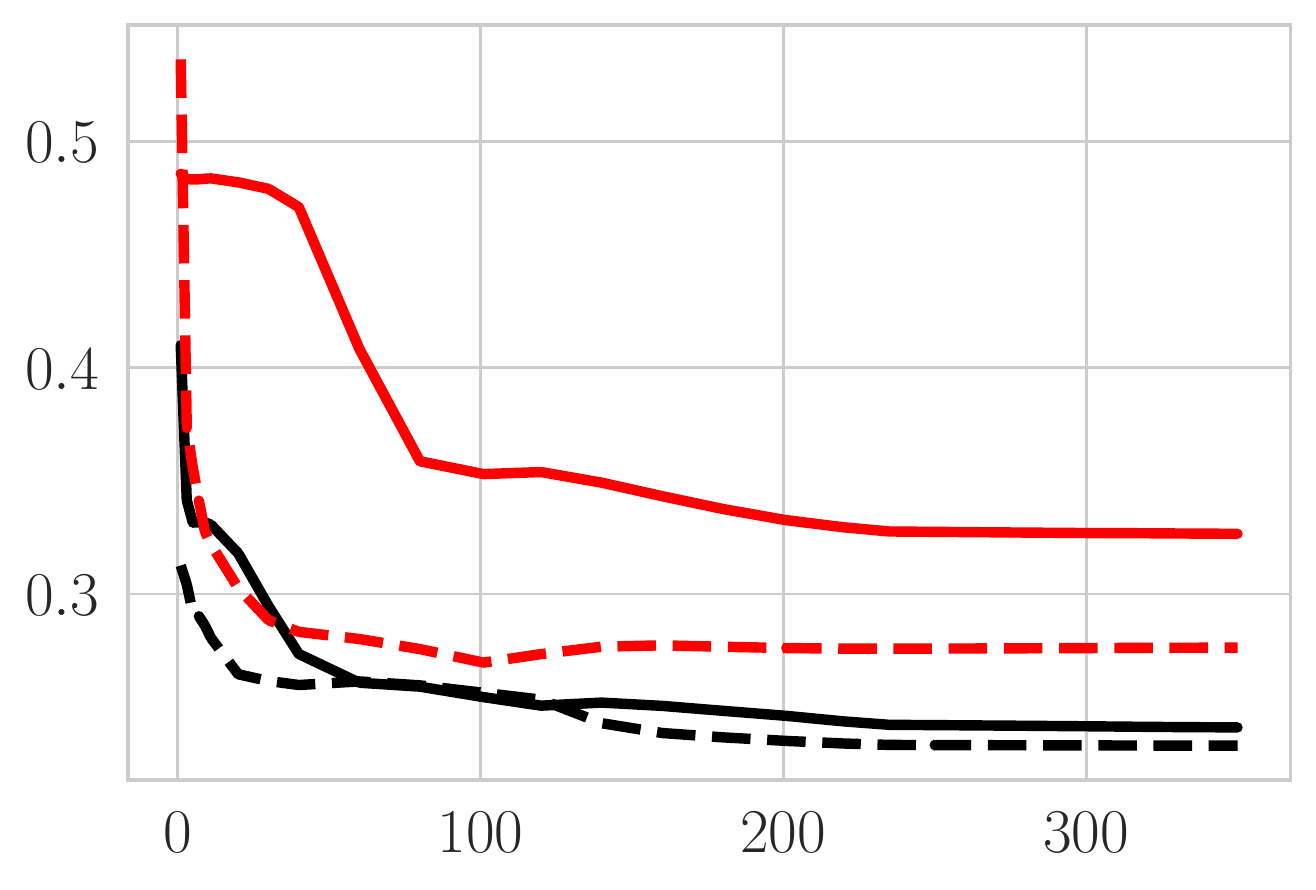}};
	\node at (0.5,-2.5) [scale=0.9]{Epoch};
	\node at (-4.1,-0.0)  [scale=0.9, rotate=90]{\textbf{ResNet}};
  \end{tikzpicture}
\end{subfigure}%
\begin{subfigure}{.49\textwidth}
  \centering
  \begin{tikzpicture}
	\node at (0,0.0)
    {\includegraphics[width=0.95\linewidth]{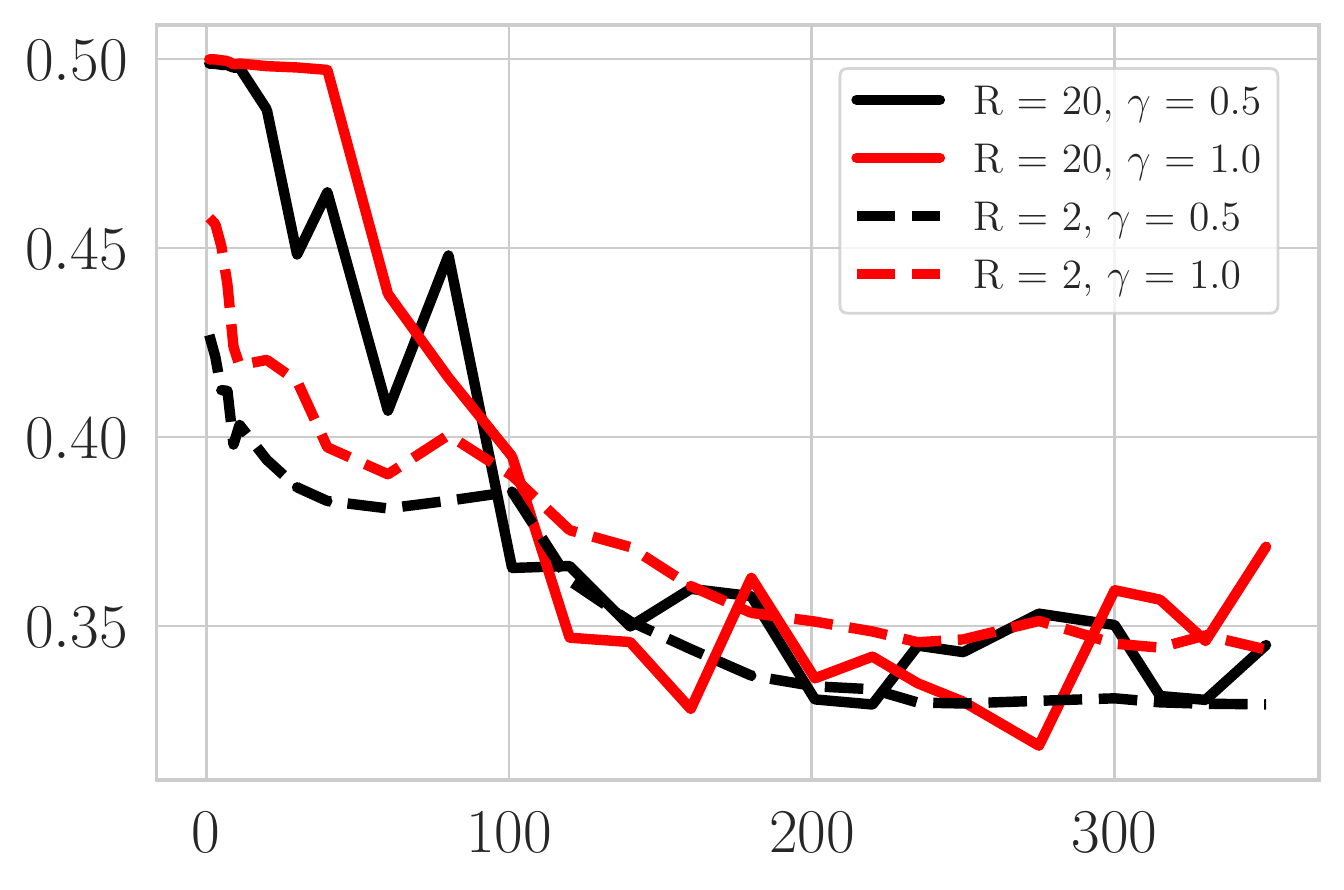}};
	\node at (0.5,-2.5) [scale=0.9]{Epoch};
  \end{tikzpicture}
\end{subfigure}
\captionsetup{width=0.95\linewidth}
\vspace{-0.25cm}
\caption{{Convergence of classifiers and mean-embeddings to the implicit geometry in Thm.~\ref{thm:SVM-VS}: (first row) UFM, (second row) ResNet18 trained on CIFAR10. The models are trained by SGD on CDT loss. Larger $R$ and $\gamma$ lead to slower convergence to the expected structure. 
}}
\label{fig:OPTIM_Fig}
\end{figure*}

{
In addition to the worse convergence, it becomes more challenging to achieve zero training error as $|\gamma|$ increases. We illustrate this in Fig.~\ref{fig:train_acc_resnet_perclass}, where we report the training accuracy of ResNet model trained on imbalanced CIFAR10 per epoch.
We empirically observe that it is in general easier to enter the zero-error regime by LDT loss. On the other hand, we do not achieve $100\%$ training accuracy for large values of $\gamma$ on CDT loss. This is consistent with similar observation on CDT training in \cite{VS}.
}

\begin{figure*}[t]
\centering
\begin{subfigure}{.95\textwidth}
  \centering
  \begin{tikzpicture}
	\node at (0,0.0)
    {\includegraphics[width=0.95\linewidth]{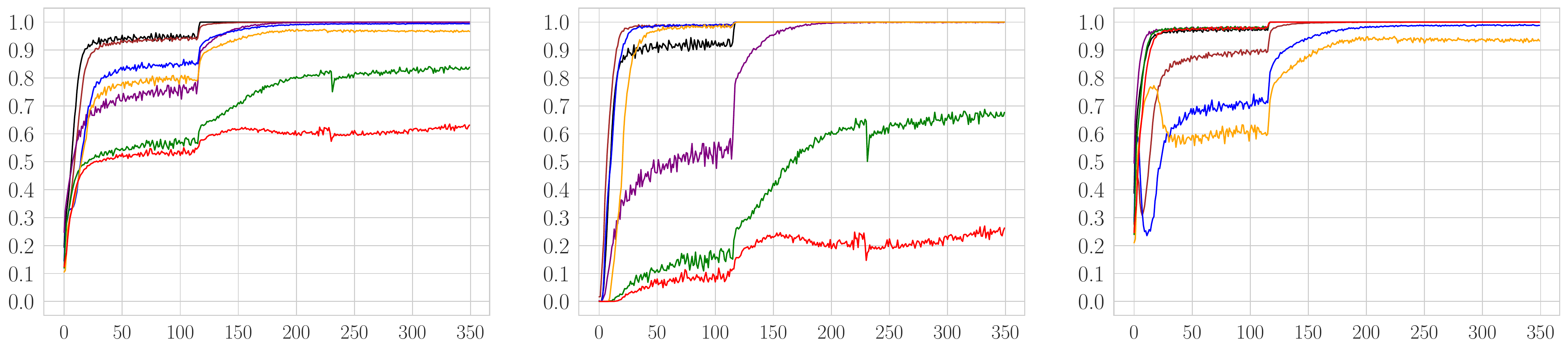}};
	\node at (-4.6,1.8)  [scale=1.0]{\textbf{Balanced Accuracy}};
	\node at (0.3,1.8)  [scale=1.0]{\textbf{Majority Accuracy}};
	\node at (5.0,1.8)  [scale=1.0]{\textbf{Minority Accuracy}};
    \node at (-7.4,-0.0)  [scale=0.9, rotate=90]{\textbf{CDT}};
  \end{tikzpicture}
  \begin{tikzpicture}
  \node at (0,0.0)
    {\includegraphics[width=0.95\linewidth]{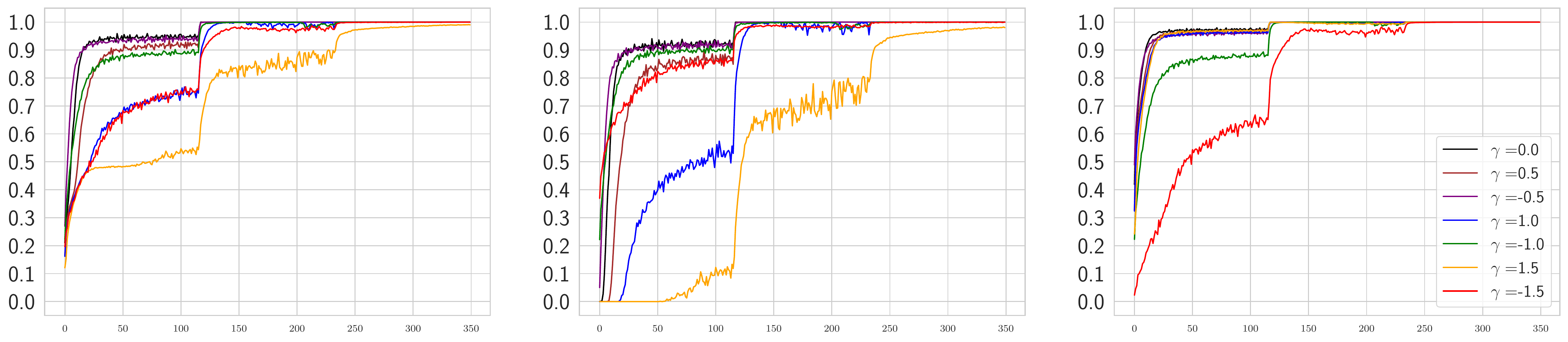}};
    \node at (-7.4,-0.0)  [scale=0.9, rotate=90]{\textbf{LDT}};
    \node at (0.3,-2.0) [scale=1.0]{Epoch};
  \end{tikzpicture}
\end{subfigure}%
\vspace{-0.25cm}
\captionsetup{width=0.95\linewidth}
\caption{Training accuracy across epochs of ResNet18 model trained on ($R=10$, $\rho$=1/2)-STEP imbalanced CIFAR10 dataset with CDT/LDT loss and different values of $\gamma$. It becomes harder to enter zero training error regime for larger $|\gamma|$ with the impact being more noticable on CDT.}
\label{fig:train_acc_resnet_perclass}
\end{figure*}

\end{document}